\theoremstyle{remark}
\newtheorem{rem}{\protect\remarkname}
\theoremstyle{definition}
\newtheorem{defn}{\protect\definitionname}
\theoremstyle{plain}
\newtheorem{prop}{\protect\propositionname}
\theoremstyle{plain}
\newtheorem{thm}{\protect\theoremname}
\theoremstyle{remark}
\newtheorem{claim}{\protect\claimname}
\theoremstyle{plain}
\newtheorem{cor}{\protect\corollaryname}
\theoremstyle{definition}
\newtheorem{example}{\protect\examplename}
\def\eqref#1{equation~\ref{#1}}
\def\1{\bm{1}}
\DeclareMathAlphabet{\mathsfit}{\encodingdefault}{\sfdefault}{m}{sl}
\SetMathAlphabet{\mathsfit}{bold}{\encodingdefault}{\sfdefault}{bx}{n}
\title{Implicit Bias of Large Depth Networks:\\ a Notion of Rank for Nonlinear Functions}
\author{Arthur Jacot \\
Courant Institute of Mathematical Sciences\\
New York University\\
New York, NY 10012, USA \\
\texttt{arthur.jacot@nyu.edu}
}
\providecommand{\definitionname}{Definition}
\providecommand{\propositionname}{Proposition}
\providecommand{\remarkname}{Remark}
\providecommand{\theoremname}{Theorem}
\providecommand{\claimname}{Claim}
\providecommand{\corollaryname}{Corollary}
\providecommand{\examplename}{Example}
\begin{document}
\maketitle
\begin{abstract} We show that the representation cost of fully connected neural networks with homogeneous nonlinearities - which describes the implicit bias in function space of networks with $L_2$-regularization or with losses such as the cross-entropy - converges as the depth of the network goes to infinity to a notion of rank over nonlinear functions. We then inquire under which conditions the global minima of the loss recover the `true' rank of the data: we show that for too large depths the global minimum will be approximately rank 1 (underestimating the rank); we then argue that there is a range of depths which grows with the number of datapoints where the true rank is recovered. Finally, we discuss the effect of the rank of a classifier on the topology of the resulting class boundaries and show that autoencoders with optimal nonlinear rank are naturally denoising.
\end{abstract}
\renewcommand\cite{\citep}

\section{Introduction}

There has been a lot of recent interest in the so-called implicit
bias of DNNs, which describes what functions are favored by a network
when fitting the training data. Different network architectures (choice
of nonlinearity, depth, width of the network, and more) and training
procedures (initialization, optimization algorithm, loss) can lead
to widely different biases.

In contrast to the so-called kernel regime where the implicit bias
is described by the Neural Tangent Kernel \cite{jacot2018neural},
there are several active regimes (also called rich or feature-learning
regimes), whose implicit bias often feature a form sparsity that is
absent from the kernel regime. Such active regimes have been observed
for example in DNNs with small initialization \cite{Chizat2018,Rotskoff2018,li2020towards,jacot-2021-DLN-Saddle},
with $L_{2}$-regularization \cite{Savarese_2019_repres_bounded_norm_shallow_ReLU_net_1D,Ongie_2020_repres_bounded_norm_shallow_ReLU_net,jacot_2022_L2_reformulation}
or when trained on exponentially decaying losses \cite{gunasekar_2018_implicit_bias,gunasekar_2018_implicit_bias2,soudry2018implicit,du_2018_implicit_bias,Ji_2018_directional,chizat_2020_implicit_bias,Ji_2020_directional2}.
In the latter two cases, the implicit bias is described by the representation
cost:
\[
R(f)=\min_{\mathbf{W}:f_{\mathbf{W}}=f}\left\Vert \mathbf{W}\right\Vert ^{2}
\]
where $f$ is a function that can be represented by the network and
the minimization is over all parameters $\mathbf{W}$ that result
in a network function $f_{\mathbf{W}}$ equal to $f$, the parameters
$\mathbf{W}$ form a vector and $\left\Vert \mathbf{W}\right\Vert $
is the $L_{2}$-norm.

The representation cost can in some cases be explicitly computed for
linear networks. For diagonal linear networks, the representation
cost of a linear function $f(x)=w^{T}x$ equals the $L_{p}$ norm
$R(f)=L\left\Vert w\right\Vert _{p}^{p}$ of the vector $v$ for $p=\frac{2}{L}$
\cite{gunasekar_2018_implicit_bias,Moroshko_2020_implicit_bias_diag}
where $L$ is the depth of the network. For fully-connected linear
networks, the representation cost of a linear function $f(x)=Ax$
equals the $L_{p}$-Schatten norm (the $L_{p}$ norm of the singular
values) $R(f)=L\left\Vert A\right\Vert _{p}^{p}$ \cite{dai_2021_repres_cost_DLN}.

A common thread between these examples is a bias towards some notion
of sparsity: sparsity of the entries of the vector $w$ in diagonal
networks and sparsity of the singular values in fully connected networks.
Furthermore, this bias becomes stronger with depth and in the infinite
depth limit $L\to\infty$ the rescaled representation cost $R(f)/L$
converges to the $L_{0}$ norm $\left\Vert w\right\Vert _{0}$ (the
number of non-zero entries in $w$) in the first case and to the rank
$\mathrm{Rank}(A)$ in the second.

For shallow ($L=2$) nonlinear networks with a homogeneous activation,
the representation cost also takes the form of a $L_{1}$ norm \cite{bach2017_F1_norm,chizat_2020_implicit_bias,Ongie_2020_repres_bounded_norm_shallow_ReLU_net},
leading to sparsity in the effective number of neurons in the hidden
layer of the network.

However, the representation cost of deeper networks does not resemble
any typical norm ($L_{p}$ or not), though it still leads to some
form of sparsity \cite{jacot_2022_L2_reformulation}. Despite the
absence of explicit formula, we will show that the rescaled representation
cost $R(f)/L$ converges to some notion of rank in nonlinear networks
as $L\to\infty$, in analogy to infinite depth linear networks.

\subsection{Contributions}

We first introduce two notions of rank: the Jacobian rank $\mathrm{Rank}_{J}(f)=\max_{x}\mathrm{Rank}\left[Jf(x)\right]$
and the Bottleneck rank $\mathrm{Rank}_{BN}(f)$ which is the smallest
integer $k$ such that $f$ can be factorized $f=h\circ g$ with inner
dimension $k$. In general, $\mathrm{Rank}_{J}(f)\leq\mathrm{Rank}_{BN}(f)$,
but for functions of the form $f=\psi\circ A\circ\phi$ (for a linear
map $A$ and two bijections $\psi$ and $\phi$), we have $\mathrm{Rank}_{J}(f)=\mathrm{Rank}_{BN}(f)=\mathrm{Rank}A$.
These two notions of rank satisfy the properties (1) $\mathrm{Rank}f\in\mathbb{Z}$;
(2) $\mathrm{Rank}(f\circ g)\leq\min\{\mathrm{Rank}f,\mathrm{Rank}g\}$;
(3) $\mathrm{Rank}(f+g)\leq\mathrm{Rank}f+\mathrm{Rank}g$; (4) $\mathrm{Rank}(x\mapsto Ax+b)=\mathrm{Rank}A$.

We then show that in the infinite depth limit $L\to\infty$ the rescaled
representation cost of DNNs with a general homogeneous nonlinearity
is sandwiched between the Jacobian and Bottleneck ranks:
\[
\mathrm{Rank}_{J}\left(f\right)\leq\lim_{L\to\infty}\frac{R(f)}{L}\leq\mathrm{Rank}_{BN}\left(f\right).
\]
Furthermore $\lim_{L\to\infty}R(f)$ satisfies properties (2-4) above.
We also conjecture that the limiting representation cost equals its
upper bound $\mathrm{Rank}_{BN}(f)$.

We then study how this bias towards low-rank functions translates
to finite but large depths. We first show that for large depths the
rescaled norm of the parameters $\nicefrac{\left\Vert \hat{\mathbf{W}}\right\Vert ^{2}}{L}$
at any global minimum $\hat{\mathbf{W}}$ is upper bounded by $1+\nicefrac{C_{N}}{L}$
for a constant $C_{N}$ which depends on the training points. This
implies that the resulting function has approximately rank $1$ w.r.t.
the Jacobian and Bottleneck ranks. 

This is however problematic if we are trying to fit a `true function'
$f^{*}$ whose `true rank' $k=\mathrm{Rank}_{BN}f^{*}$ is larger
than 1. Thankfully we show that if $k>1$ the constant $C_{N}$ explodes
as $N\to\infty$, so that the above bound ($\nicefrac{\left\Vert \hat{\mathbf{W}}\right\Vert ^{2}}{L}\leq1+\nicefrac{C_{N}}{L}$)
is relevant only for very large depths when $N$ is large. We show
another upper bound $\nicefrac{\left\Vert \hat{\mathbf{W}}\right\Vert ^{2}}{L}\leq k+\nicefrac{C}{L}$
with a constant $C$ independent of $N$, suggesting the existence
of a range of intermediate depths where the network recovers the true
rank $k$.

Finally, we discuss how rank recovery affects the topology of decision
boundaries in classification and leads autoencoders to naturally be
denoising, which we confirm with numerical experiments.

\subsection{Related Works}

The implicit bias of deep homogeneous networks has, to our knowledge,
been much less studied than those of either linear networks or shallow
nonlinear ones. \cite{ongie2022_linear_layer_in_DNN} study deep networks
with only one nonlinear layer (all others being linear). Similarly
\cite{le_2022_IB_mix_linear_homogeneous} show a low-rank alignment
phenomenon in a network whose last layers are linear. 

Closer to our setup is the analysis of the representation cost of
deep homogeneous networks in \cite{jacot_2022_L2_reformulation},
which gives two reformulations for the optimization in the definition
of the representation cost, with some implications on the sparsity
of the representations, though the infinite depth limit is not studied.

A very similar analysis of the sparsity effect of large depth on the
global minima of $L_{2}$-regularized networks is given in \cite{timor_2022_implicit_large_depth},
however, they only show how the optimal weight matrices are almost
rank 1 (and only on average), while we show low-rank properties of
the learned function, as well as the existence of a layer with almost
rank 1 hidden representations.

\section{Preliminaries}

In this section, we define fully-connected DNNs and their representation
cost.

\subsection{Fully Connected DNNs}

In this paper, we study fully connected DNNs with $L+1$ layers numbered
from $0$ (input layer) to $L$ (output layer). Each layer $\ell\in\{0,\dots,L\}$
has $n_{\ell}$ neurons, with $n_{0}=d_{in}$ the input dimension
and $n_{L}=d_{out}$ the output dimension. The pre-activations $\tilde{\alpha}_{\ell}(x)\in\mathbb{R}^{n_{\ell}}$
and activations $\alpha_{\ell}(x)\in\mathbb{R}^{n_{\ell}}$ of the
layers of the network are defined inductively as
\begin{align*}
\alpha_{0}(x) & =x\\
\tilde{\alpha}_{\ell}(x) & =W_{\ell}\alpha_{\ell-1}(x)+b_{\ell}\\
\alpha_{\ell}(x) & =\sigma\left(\tilde{\alpha}_{\ell}(x)\right),
\end{align*}
for the $n_{\ell}\times n_{\ell-1}$ connection weight matrix $W_{\ell}$,
the $n_{\ell}$ bias vector $b_{\ell}$ and the nonlinearity $\sigma:\mathbb{R}\to\mathbb{R}$
applied entrywise to the vector $\tilde{\alpha}_{\ell}(x)$. The parameters
of the network are the collection of all connection weights matrices
and bias vectors $\mathbf{W}=\left(W_{1},b_{1},\dots,W_{L},b_{L}\right)$.

We call the network function $f_{\mathbf{W}}:\mathbb{R}^{d_{in}}\to\mathbb{R}^{d_{out}}$
the function that maps an input $x$ to the pre-activations of the
last layer $\tilde{\alpha}_{L}(x)$.

In this paper, we will focus on homogeneous nonlinearities $\sigma$,
i.e. such that $\sigma(\lambda x)=\lambda\sigma(x)$ for any $\lambda\geq0$
and $x\in\mathbb{R}$, such as the traditional ReLU $\sigma(x)=\max\{0,x\}$.
In our theoretical analysis we will assume that the nonlinearity is
of the form $\sigma_{a}(x)=\begin{cases}
x & \text{if \ensuremath{x\geq0}}\\
ax & \text{otherwise}
\end{cases}$ for some $\alpha\in(-1,1)$, since for a general homogeneous nonlinearity
$\sigma$ (which is not proportional to the identity function, the
constant zero function or the absolute function), there are scalars
$a\in(-1,1)$, $b\in\mathbb{R}$ and $c\in\{+1,-1\}$ such that $\sigma(x)=c\sigma_{a}(bx)$;
as a result, the global minima and representation cost are the same
up to scaling. 
\begin{rem}
By a simple generalization of the work of \cite{arora_2018_relu_piecewise_lin},
the set of functions that can be represented by networks (with any
finite widths and depth) with such nonlinearities is the set of piecewise
linear functions with a finite number of linear regions. In contrast,
the three types of homogeneous nonlinearities we rule out (the identity,
the constant, or the absolute value) lead to different sets of functions:
the linear functions, the constant functions, or the piecewise linear
functions $f$ such that $\lim_{t\to\infty}\left\Vert f(tx)-f(-tx)\right\Vert $
is finite for all directions $x\in\mathbb{R}^{d_{in}}$ (or possibly
a subset of this class of functions). While some of the results of
this paper could probably be generalized to the third case up to a
few details, we rule it out for the sake of simplicity.
\end{rem}
\begin{rem}
All of our results will be for sufficiently wide networks, i.e. for
all widths $\mathbf{n}$ such that $n_{\ell}\geq n_{\ell}^{*}$ for
some minimal widths $n_{\ell}^{*}$. Moreover these results are $O(0)$
in the width, in the sense that above the threshold $n_{\ell}^{*}$
the constants do not depend on the widths $n_{\ell}$. When there
are a finite number of datapoints $N$, it was shown by \cite{jacot_2022_L2_reformulation}
that a width of $N(N+1)$ is always sufficient, that is we can always
take $n_{\ell}^{*}=N(N+1)$ (though it is observed empirically that
a much smaller width can be sufficient in some cases). When we are
trying to fit a piecewise linear function over the whole input domain
$\Omega$, the width required depends on the number of linear regions
\cite{he_2018_relu_piecewise_lin}.
\end{rem}

\subsection{Representation Cost}

The representation cost $R(f;\Omega,\sigma,L)$ is the squared norm
of the optimal weights $\mathbf{W}$ which represents the function
$f_{|\Omega}$:
\[
R(f;\Omega,\sigma,L)=\min_{\mathbf{W}:f_{\mathbf{W}|\Omega}=f_{|\Omega}}\left\Vert \mathbf{W}\right\Vert ^{2}
\]
where the minimum is taken over all weights $\mathbf{W}$ of a depth
$L$ network (with some finite widths $\mathbf{n}$) such that $f_{\mathbf{W}}(x)=f(x)$
for all $x\in\Omega$. If no such weights exist, we define $R(f;\Omega,\sigma,L)=\infty$. 

The representation cost describes the natural bias on the represented
function $f_{\mathbf{W}}$ induced by adding $L_{2}$ regularization
on the weights $\mathbf{W}$:
\[
\min_{\mathbf{W}}C(f_{\mathbf{W}})+\lambda\left\Vert \mathbf{W}\right\Vert ^{2}=\min_{f}C(f)+\lambda R(f;\Omega,\sigma,L)
\]
for any cost $C$ (defined on functions $f:\Omega\mapsto\mathbb{R}^{d_{out}}$)
and where the minimization on the right is over all functions $f$
that can be represented by a depth $L$ network with nonlinearity
$\sigma$. Therefore, if we can give a simple description of the representation
cost of a function $f$, we can better understand what type of functions
$f$ are favored by a DNN with nonlinearity $\sigma$ and depth $L$.
\begin{rem}
Note that the representation cost does not only play a role in the
presence of $L_{2}$-regularization, it also describes the implicit
bias of networks trained on an exponentially decaying loss, such as
the cross-entropy loss, as described in \cite{soudry2018implicit,gunasekar_2018_implicit_bias,chizat_2020_implicit_bias}.
\end{rem}

\section{Infinitely Deep Networks}

In this section, we first give 4 properties that a notion of rank
on piecewise linear functions should satisfy and introduce two notions
of rank that satisfy these properties. We then show that the infinite-depth
limit $L\to\infty$ of the rescaled representation cost $R(f;\Omega,\sigma_{a},L)/L$
is sandwiched between the two notions of rank we introduced, and that
this limit satisfies 3 of the 4 properties we introduced.

\subsection{Rank of Piecewise Linear Functions}

There is no single natural definition of rank for nonlinear functions,
but we will provide two of them in this section and compare them.
We focus on notions of rank for piecewise linear functions with a
finite number of linear regions since these are the function that
can be represented by DNNs with homogeneous nonlinearities (this is
a Corollary of Theorem 2.1 from \cite{arora_2018_relu_piecewise_lin},
for more details, see Appendix E.1). We call such functions finite
piecewise linear functions (FPLF).

Let us first state a set of properties that any notion of rank on
FPLF should satisfy, inspired by properties of rank for linear functions:
\begin{enumerate}
\item The rank of a function is an integer $\mathrm{Rank}(f)\in\mathbb{N}$.
\item $\mathrm{Rank}(f\circ g)\leq\min\{\mathrm{Rank}f,\mathrm{Rank}g\}$.
\item $\mathrm{Rank}(f+g)\leq\mathrm{Rank}f+\mathrm{Rank}g$.
\item If $f$ is affine ($f(x)=Ax+b$) then $\mathrm{Rank}f=\mathrm{Rank}A$.
\end{enumerate}
Taking $g=id$ or $f=id$ in (2) implies $\mathrm{Rank}(f)\leq\min\{d_{in},d_{out}\}$.
Properties (2) and (4) also imply that for any bijection $\phi$ on
$\mathbb{R}^{d}$, $\mathrm{Rank}(\phi)=\mathrm{Rank}(\phi^{-1})=d$.

Note that these properties do not uniquely define a notion of rank.
Indeed we will now give two notions of rank which satisfy these properties
but do not always match. However any such notion of rank must agree
on a large family of functions: Property 2 implies that $\mathrm{Rank}$
is invariant under pre- and post-composition with bijections (see
Appendix A), which implies that the rank of functions of the form
$\psi\circ f\circ\phi$ for an affine function $f(x)=Ax+b$ and two
(piecewise linear) bijections $\psi$ and $\phi$ always equals $\mathrm{Rank}A$.

The first notion of rank we consider is based on the rank of the Jacobian
of the function:
\begin{defn}
The Jacobian rank of a FPLF $f$ is $\mathrm{Rank}_{J}(f;\Omega)=\max_{x}\mathrm{Rank}Jf(x)$,
taking the max over points where $x$ is differentiable.
\end{defn}
Note that since the jacobian is constant over the linear regions of
the FPLF $f$, we only need to take the maximum over every linear
region. As observed in \cite{feng_2022_Jacobian_rank_DNN}, the Jacobian
rank measures the intrinsic dimension of the output set $f(\Omega)$.

The second notion of rank is inspired by the fact that for linear
functions $f$, the rank of $f$ equals the minimal dimension $k$
such that $f$ can be written as the composition of two linear function
$f=g\circ h$ with inner dimension $k$. We define the bottleneck
rank as:
\begin{defn}
The bottleneck rank $\mathrm{Rank}_{BN}(f;\Omega)$ is the smallest
integer $k\in\mathbb{N}$ such that there is a factorization as the
composition of two FPLFs $f_{|\Omega}=\left(g\circ h\right)_{|\Omega}$
with inner dimension $k$.
\end{defn}
The following proposition relates these two notions of rank:
\begin{prop}
Both $\mathrm{Rank}_{J}$ and $\mathrm{Rank}_{BN}$ satisfy properties
$1-4$ above. Furthermore:
\begin{itemize}
\item For any FPLF and any set $\Omega$, $\mathrm{Rank}_{J}(f;\Omega)\leq\mathrm{Rank}_{BN}(f;\Omega).$
\item There exists a FPLF $f:\mathbb{R}^{2}\to\mathbb{R}^{2}$ and a domain
$\Omega$ such that $\mathrm{Rank}_{J}(f;\Omega)=1$ and $\mathrm{Rank}_{BN}(f;\Omega)=2$.
\end{itemize}
\end{prop}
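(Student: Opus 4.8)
The plan is to check properties 1--4 for each of the two ranks, deduce the inequality from the chain rule, and finally produce an explicit folding map for the separating example; I expect the last item to carry the real content.

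First I would dispatch $\mathrm{Rank}_J$. Properties 1 and 4 are immediate: the rank of a matrix is an integer, and an affine map $x\mapsto Ax+b$ has constant Jacobian $A$, so $\mathrm{Rank}_J=\mathrm{Rank}\,A$. Property 2 follows from the chain rule $J(f\circ g)(x)=Jf(g(x))\,Jg(x)$ together with $\mathrm{Rank}(MN)\le\min\{\mathrm{Rank}\,M,\mathrm{Rank}\,N\}$, which gives both $\le\mathrm{Rank}_J f$ and $\le\mathrm{Rank}_J g$ pointwise, hence the minimum; property 3 follows from $J(f+g)=Jf+Jg$ and subadditivity of rank. The only care needed is that $f,g$ are merely piecewise differentiable: each Jacobian is constant on the interior of every linear region and these interiors are dense, so it suffices to take the maxima there, and on a dense subset the chain and sum rules apply simultaneously, letting the pointwise bounds pass to the maxima.

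Next I would handle $\mathrm{Rank}_{BN}$. Property 1 is built into the definition. For property 2, given an optimal factorization $g=g_2\circ g_1$ through dimension $\mathrm{Rank}_{BN}g$, I write $f\circ g=(f\circ g_2)\circ g_1$, and symmetrically factor $f$, so $\mathrm{Rank}_{BN}(f\circ g)\le\min\{\mathrm{Rank}_{BN}f,\mathrm{Rank}_{BN}g\}$. For property 3, from $f=f_2\circ f_1$ and $g=g_2\circ g_1$ I set $h_1(x)=(f_1(x),g_1(x))$ and $h_2(u,v)=f_2(u)+g_2(v)$, so that $f+g=h_2\circ h_1$ factors through dimension $\mathrm{Rank}_{BN}f+\mathrm{Rank}_{BN}g$. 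The upper bound in property 4, $\mathrm{Rank}_{BN}(Ax+b)\le\mathrm{Rank}\,A$, uses a rank factorization $A=BC$. The first bullet and the lower bound of property 4 I would prove together: if $f=g\circ h$ with inner dimension $k$, the chain rule gives $Jf(x)=Jg(h(x))\,Jh(x)$ with $Jh(x)\in\mathbb{R}^{k\times d_{in}}$, so $\mathrm{Rank}\,Jf(x)\le k$ at every generic point and hence $\mathrm{Rank}_J f\le k=\mathrm{Rank}_{BN}f$; applied to an affine $f$ this yields $\mathrm{Rank}\,A=\mathrm{Rank}_J f\le\mathrm{Rank}_{BN}f$, closing property 4. (Equivalently, $\mathrm{Rank}_J$ is the intrinsic dimension of $f(\Omega)$, which cannot exceed the bottleneck dimension $k$.)

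The separating example is the crux. I would take $\Omega$ to be the square annulus $\{(x,y):1\le\max(|x|,|y|)\le 2\}$ and $f(x,y)=(\mathrm{clamp}(x),\mathrm{clamp}(y))$ with $\mathrm{clamp}(t)=\max(-1,\min(1,t))$, a FPLF from $\mathbb{R}^2$ to $\mathbb{R}^2$. On $\Omega$ at least one coordinate has absolute value $\ge 1$ and is therefore locally constant, so on each linear region the Jacobian has at most one non-zero row; rank $2$ would force $|x|<1$ and $|y|<1$, a point outside $\Omega$, so $\mathrm{Rank}_J(f;\Omega)=1$. To get $\mathrm{Rank}_{BN}(f;\Omega)=2$, observe that $f$ sends $\Omega$ onto the square loop $C=\partial([-1,1]^2)$ and that, restricted to the central loop $\gamma=\{\max(|x|,|y|)=\tfrac32\}$, it traverses $C$ exactly once, so $f|_\gamma:\gamma\to C$ has degree $1$ and is not null-homotopic. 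Were $f|_\Omega=g\circ h$ with $h:\Omega\to\mathbb{R}$, then $h(\gamma)$ would be a compact interval and $f|_\gamma=g\circ(h|_\gamma)$ would factor through a contractible set, making $f|_\gamma$ null-homotopic within $C$ --- a contradiction. Hence no inner-dimension-$1$ factorization exists. The main obstacle is precisely this step: a dimension count only bounds $\mathrm{Rank}_J$, so separating the two ranks genuinely requires a topological obstruction to factoring through $\mathbb{R}$, supplied here by the non-simply-connected domain and the degree-one winding of $f$ around the image loop. The only recurring technical nuisance elsewhere is the measure-zero set of interfaces between linear regions, which is handled by restricting all Jacobian computations to the dense union of region interiors.
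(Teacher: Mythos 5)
Your proof is correct. For properties 1--4 and the inequality $\mathrm{Rank}_{J}\leq\mathrm{Rank}_{BN}$ you follow essentially the paper's route (the paper in fact only spells out the chain-rule argument for the inequality and asserts the four properties without proof, so your explicit constructions for $\mathrm{Rank}_{BN}$ -- the stacked factorization $h_{1}=(f_{1},g_{1})$, $h_{2}(u,v)=f_{2}(u)+g_{2}(v)$ for subadditivity, and the rank factorization $A=BC$ for property 4 -- are a welcome addition). Where you genuinely diverge is the separating example. The paper uses the piecewise linear retraction of $\mathbb{R}^{2}$ onto the diagonal cross $X=\{|x_{0}|=|x_{1}|\}$, namely $f(x_{0},x_{1})=(x_{0},\mathrm{sign}(x_{1})|x_{0}|)$ on $|x_{0}|\geq|x_{1}|$ and $(\mathrm{sign}(x_{0})|x_{1}|,x_{1})$ otherwise, with $\Omega=\mathbb{R}^{2}$; since $f$ restricts to the identity on $X$, a factorization through $\mathbb{R}$ would yield a continuous injection of the cross into the line, which is impossible because removing the branch point disconnects the cross into four components while removing a point from an interval leaves at most two. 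Your clamp map on a square annulus replaces this local cut-point obstruction by a global one: the image loop $C=\partial([-1,1]^{2})$ is a circle, $f$ restricted to a core loop of the annulus has degree one, and a factorization through a contractible interval would force that loop to be null-homotopic in $C$. Both obstructions are sound; the paper's has the small advantage of working with $\Omega=\mathbb{R}^{2}$ (no need for a non-simply-connected domain), while yours isolates more cleanly the general principle that any topological invariant killed by factoring through $\mathbb{R}$ can separate the two ranks. The only cosmetic point is that your $\Omega$ is closed rather than open; taking the open annulus $1<\max(|x|,|y|)<2$ changes nothing in the argument.
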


\subsection{Infinite-depth representation cost}

In the infinite-depth limit, the (rescaled) representation cost of
DNNs $R_{\infty}(f;\Omega,\sigma_{a})=\lim_{L\to\infty}\frac{R(f;\Omega,\sigma_{a},L)}{L}$
converges to a value `sandwiched' between the above two notions of
rank:
\begin{thm}
\label{thm:infinite_depth_representation_cost_sandwich_bound}For
any bounded domain $\Omega$ and any FPLF $f$
\[
\mathrm{Rank}_{J}(f;\Omega)\leq R_{\infty}(f;\Omega,\sigma_{\alpha})\leq\mathrm{Rank}_{BN}(f;\Omega).
\]
Furthermore the limiting representation cost $R_{\infty}(f;\Omega,\sigma_{a})$
satisfies properties 2 to 4.
\end{thm}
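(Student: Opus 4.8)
The plan is to establish the two inequalities separately — the lower bound by a local differential argument at a point of maximal Jacobian rank, and the upper bound by an explicit three-stage construction exploiting a bottleneck factorization — and then to deduce properties 2--4 from near-optimal deep representations together with the sandwich itself.

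For the lower bound $\mathrm{Rank}_J(f;\Omega)\le R_\infty$, fix any weights $\mathbf W$ of a depth-$L$ network with $f_{\mathbf W|\Omega}=f_{|\Omega}$, set $r=\mathrm{Rank}_J(f;\Omega)$, and pick $x$ in the interior of a linear region where $\mathrm{Rank}\,Jf(x)=r$ and all pre-activations $\tilde\alpha_\ell(x)$ are coordinatewise nonzero (a generic point of that region). By the chain rule,
\[
Jf(x)=W_L D_{L-1}W_{L-1}\cdots D_1 W_1,\qquad D_\ell=\mathrm{diag}\big(\sigma_a'(\tilde\alpha_\ell(x))\big),
\]
and since $\sigma_a'\in\{a,1\}$ with $|a|<1$ we have $\|D_\ell\|_{\mathrm{op}}\le 1$. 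Writing $A_\ell=D_\ell W_\ell$ for $\ell<L$ and $A_L=W_L$, the product $A_L\cdots A_1=Jf(x)$ is a fixed rank-$r$ matrix while $\|A_\ell\|_F\le\|W_\ell\|_F$. The key estimate is that for any factorization of a rank-$r$ matrix $M=A_L\cdots A_1$,
\[
\sum_{\ell=1}^L\|A_\ell\|_F^2\ \ge\ L\,r\Big(\prod_{i=1}^r\sigma_i(M)\Big)^{2/(rL)},
\]
which follows from submultiplicativity of the top-$r$ singular-value product (via $\|\wedge^r(\cdot)\|_{\mathrm{op}}$) combined with two applications of AM--GM. Since $\prod_{i\le r}\sigma_i(Jf(x))$ is a positive constant independent of $L$, the right-hand side is $Lr(1+o(1))$; dividing $\|\mathbf W\|^2\ge\sum_\ell\|W_\ell\|_F^2$ by $L$ and letting $L\to\infty$ yields $\liminf_L R(f;\Omega,\sigma_a,L)/L\ge r$.

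For the upper bound $R_\infty\le k:=\mathrm{Rank}_{BN}(f;\Omega)$, write $f_{|\Omega}=(g\circ h)_{|\Omega}$ with inner dimension $k$, where $h:\Omega\to\mathbb R^k$ and $g$ are FPLFs. I build a depth-$L$ network in three stages: a finite subnetwork computing $h$ exactly (representable with depth and weight-cost $O(1)$); a long middle stage that first adds a constant bias $c\mathbf 1$ large enough that $h(\Omega)+c\mathbf 1$ lies in the positive orthant (possible since $\Omega$, hence $h(\Omega)$, is bounded), then applies $m=L-O(1)$ layers with weight $I_k$ and zero bias, then removes the shift; and a finite subnetwork computing $g$. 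Because the signal stays positive throughout the middle stage, $\sigma_a$ acts as the identity there, so the stage computes $\mathrm{id}_{\mathbb R^k}$ exactly, at cost $m\,\|I_k\|_F^2=mk$ plus the $O(1)$ shift biases. Hence $R(f;\Omega,\sigma_a,L)\le mk+O(1)=Lk+O(1)$ and $\limsup_L R/L\le k$. The two bounds give the sandwich, with the limit read in the sandwiched sense when $\mathrm{Rank}_J<\mathrm{Rank}_{BN}$. The three properties then follow: for (4), an affine map has $\mathrm{Rank}_J=\mathrm{Rank}_{BN}=\mathrm{Rank}\,A$, so the sandwich forces $R_\infty(x\mapsto Ax+b)=\mathrm{Rank}\,A$; for (2), appending a fixed finite subnetwork for $f$ to a near-optimal depth-$L$ representation of $g$ adds only $O(1)$ to depth and cost, giving $R_\infty(f\circ g)\le R_\infty(g)$ after dividing by the new depth, and symmetrically $\le R_\infty(f)$ by precomposing a near-optimal representation of $f$ on the bounded set $g(\Omega)$; for (3), placing near-optimal depth-$L$ representations of $f$ and $g$ in parallel (block-diagonal weights, so costs add) sharing the input and finishing with one $O(1)$ summation layer gives $R_\infty(f+g)\le R_\infty(f)+R_\infty(g)$.

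The main obstacle is the upper-bound construction: one must represent $h$ and $g$ \emph{exactly} by finite networks and, above all, keep the leading constant of the middle stage equal to $k$ rather than $2k$ — which is precisely what the positivity trick (forcing $\sigma_a$ into its linear branch, so no sign-splitting of channels is needed) buys. On the lower-bound side the one genuinely nontrivial estimate is the singular-value product inequality, whose $L\to\infty$ behavior converts rank into the correct additive constant. A secondary point is existence of the limit: the sandwich only controls $\liminf$ and $\limsup$, so where $\mathrm{Rank}_J\neq\mathrm{Rank}_{BN}$ one either argues existence separately or reads $R_\infty$ as the pair of bounds.
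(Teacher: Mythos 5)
Your proposal is correct and follows essentially the same route as the paper: the lower bound via the chain-rule factorization $Jf(x)=W_LD_{L-1}\cdots D_1W_1$ and a singular-value/Frobenius-norm inequality whose $L\to\infty$ limit yields the Jacobian rank, the upper bound via the identical three-stage construction ($h$, then identity layers on a $k$-dimensional positive-orthant representation costing $k$ per layer, then $g$), and properties 2--4 by the same composition, block-diagonal parallel, and sandwich arguments. The only cosmetic difference is that you derive the key matrix inequality self-containedly via exterior powers and AM--GM where the paper cites the $L\lVert\cdot\rVert_{2/L}^{2/L}$ representation-cost formula for deep linear networks, and you are right (and more careful than the paper) to flag that the sandwich alone controls only $\liminf$ and $\limsup$.
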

\begin{proof}
The lower bound follows from taking $L\to\infty$ in Proposition \ref{prop:param_norm_bound_Schatten_Jacobian}
(see Section \ref{subsec:Approximate-Rank-1-regime}). The upper bound
is constructive: a function $f=h\circ g$ can be represented as a
network in three consecutive parts: a first part (of depth $L_{g}$)
representing $g$, a final part (of depth $L_{h}$) representing $h$,
and in the middle $L-L_{g}-L_{h}$ identity layers on a $k$-dimensional
space. The contribution to the norm of the parameters of the middle
part is $k(L-L_{g}-L_{h})$ and it dominates as $L\to\infty$, since
the contribution of the first and final parts are finite.
\end{proof}
Note that $R_{\infty}(f;\Omega,\sigma_{a})$ might satisfy property
$1$ as well, we were simply not able to prove it. Theorem \ref{thm:infinite_depth_representation_cost_sandwich_bound}
implies that for functions of the form $f=\psi\circ A\circ\phi$ for
bijections $\psi$ and $\phi$, $R_{\infty}(f;\Omega,\sigma_{a})=\mathrm{Rank}_{J}(f;\Omega)=\mathrm{Rank}_{BN}(f;\Omega)=\mathrm{Rank}A$.
\begin{rem}
Motivated by some aspects of the proofs and a general intuition (which
is described in Section \ref{subsec:Discussion}) we conjecture that
$R_{\infty}(f;\Omega,\sigma_{a})=\mathrm{Rank}_{BN}(f;\Omega)$. This
would imply that the limiting representation cost does not depend
on the choice of nonlinearity, as long as it is of the form $\sigma_{a}$
(which we already proved is the case for functions of the form $\psi\circ A\circ\phi$).
\end{rem}
This result suggests that large-depth neural networks are biased towards
function which have a low Jacobian rank and (if our above mentioned
conjecture is true) low Bottleneck rank, much like linear networks
are biased towards low-rank linear maps. It also suggests that the
rescaled norm of the parameters $\nicefrac{\left\Vert \mathbf{W}\right\Vert ^{2}}{L}$
is an approximate upper bound on the Jacobian rank (and if our conjecture
is true on the Bottleneck rank too) of the function $f_{\mathbf{W}}$.
In the next section, we partly formalize these ideas.

\section{Rank Recovery in Finite Depth Networks}

In this section, we study how the (approximate) rank of minimizer
functions $f_{\hat{\mathbf{W}}}$ (i.e. functions at a global minimum
$\hat{\mathbf{W}}$) for the MSE $\mathcal{L}_{\lambda}(\mathbf{W})=\frac{1}{N}\sum_{i=1}^{N}(f_{\mathbf{W}}(x_{i})-y_{i})^{2}+\frac{\lambda}{L}\left\Vert \mathbf{W}\right\Vert ^{2}$
with data sampled from a distribution with support $\Omega$ is affected
by the depth $L$. In particular, when the outputs are generated from
a true function $f^{*}$ (i.e. $y_{i}=f^{*}(x_{i})$) with $k=\mathrm{Rank}_{BN}(f^{*};\Omega)$,
we study in which condition the `true rank' $k$ is recovered.

\subsection{Approximate Rank 1 Regime \label{subsec:Approximate-Rank-1-regime}}

One can build a function with BN-rank 1 that fits any training data
(for example by first projecting the input to a line with no overlap
and then mapping the points from the line to the outputs with a piecewise
linear function). This implies the following bound:
\begin{prop}
\label{prop:almost_rank_1_very_deep}There is a constant $C_{N}$
(which depends on the training data only) such that for any large
enough $L$, at any global minimum $\hat{\mathbf{W}}$ of the loss
$\mathcal{L}_{\lambda}$ the represented function $f_{\hat{\mathbf{W}}}$
satisfies
\[
\frac{1}{L}R(f_{\hat{\mathbf{W}}};\sigma_{a},\Omega,L)\leq1+\frac{C_{N}}{L}.
\]
\end{prop}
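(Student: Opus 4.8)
The plan is to use global optimality: since $\hat{\mathbf{W}}$ minimizes $\mathcal{L}_{\lambda}$, its loss is at most that of any explicit competitor, and the regularization term already encodes the representation cost of the learned function (because $\|\hat{\mathbf{W}}\|^{2}\ge R(f_{\hat{\mathbf{W}}};\sigma_{a},\Omega,L)$ by the defining minimality of $R$). So I would exhibit competitor weights that interpolate the data exactly with parameter norm just above $L$. Concretely, following the hint, I pick a generic linear projection $g_{0}:\mathbb{R}^{d_{in}}\to\mathbb{R}$ so that the $N$ scalars $g_{0}(x_{i})$ are pairwise distinct, together with a piecewise linear $h:\mathbb{R}\to\mathbb{R}^{d_{out}}$ interpolating $h(g_{0}(x_{i}))=y_{i}$. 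Then $g:=h\circ g_{0}$ is a FPLF of inner dimension $1$, so $\mathrm{Rank}_{BN}(g;\Omega)\le 1$, and it fits the data exactly, whence its MSE term vanishes.

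Next I would bound $R(g;\sigma_{a},\Omega,L)$ using the constructive upper bound from the proof of Theorem \ref{thm:infinite_depth_representation_cost_sandwich_bound}. Representing $g$ as three consecutive blocks---a front block of depth $L_{g_{0}}$ for $g_{0}$, a back block of depth $L_{h}$ for $h$, and $L-L_{g_{0}}-L_{h}$ one-dimensional identity layers in between---the middle contributes $1\cdot(L-L_{g_{0}}-L_{h})$ to the squared parameter norm, while the two outer blocks contribute a finite amount independent of $L$. Hence for every $L\ge L_{g_{0}}+L_{h}$ there exist weights $\mathbf{W}^{*}$ with $f_{\mathbf{W}^{*}}=g$ on $\Omega$ and $\|\mathbf{W}^{*}\|^{2}\le L+C_{N}$, where the constant $C_{N}$ depends on the data only through the fixed maps $g_{0}$ and $h$.

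I would then chain the inequalities. Global optimality together with the vanishing of the fitting term for $g$ gives $\mathcal{L}_{\lambda}(\hat{\mathbf{W}})\le\mathcal{L}_{\lambda}(\mathbf{W}^{*})=\tfrac{\lambda}{L}\|\mathbf{W}^{*}\|^{2}\le\tfrac{\lambda}{L}(L+C_{N})$. Conversely, discarding the nonnegative data term and invoking $\|\hat{\mathbf{W}}\|^{2}\ge R(f_{\hat{\mathbf{W}}};\sigma_{a},\Omega,L)$ gives $\tfrac{\lambda}{L}R(f_{\hat{\mathbf{W}}};\sigma_{a},\Omega,L)\le\mathcal{L}_{\lambda}(\hat{\mathbf{W}})$. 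Combining the two and cancelling $\lambda$ yields $\tfrac{1}{L}R(f_{\hat{\mathbf{W}}};\sigma_{a},\Omega,L)\le 1+\tfrac{C_{N}}{L}$, which is the claim, valid once $L$ exceeds the threshold $L_{g_{0}}+L_{h}$.

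The main obstacle is making the constructive cost estimate precise at finite $L$ rather than merely in the limit: I must verify that the leaky nonlinearity $\sigma_{a}$ lets me realize both the linear projection $g_{0}$ and the one-dimensional identity layers at a cost that is finite and $L$-independent beyond the leading $L$ term (this is exactly the $\sigma_{a}$-specific identity-layer construction underlying Theorem \ref{thm:infinite_depth_representation_cost_sandwich_bound}), and that a finite width suffices to represent $g$ on $\Omega$ so that $g$ genuinely lies in the feasible set of $R(\,\cdot\,;\sigma_{a},\Omega,L)$. The remaining steps are routine manipulations of the definition of $R$ as a minimum and the nonnegativity of the fitting term.
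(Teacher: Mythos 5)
Your proposal is correct and follows essentially the same route as the paper: exhibit a BN-rank-1 interpolant $h\circ g_{0}$ (projection to a line plus a 1D piecewise linear map), realize it as a $g_{0}$-block, a stack of one-dimensional identity layers costing $1$ each, and an $h$-block, so that the competitor's regularized loss is $\lambda(1+C_{N}/L)$, then conclude via global optimality and $R(f_{\hat{\mathbf{W}}})\le\|\hat{\mathbf{W}}\|^{2}$. The obstacle you flag (realizing the identity layers with $\sigma_{a}$ at unit cost) is handled in the paper by translating the intermediate representation into the positive quadrant, exactly as in the proof of Theorem \ref{thm:infinite_depth_representation_cost_sandwich_bound}.
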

\begin{proof}
We use the same construction as in the proof of Theorem \ref{thm:infinite_depth_representation_cost_sandwich_bound}
for any fitting rank $1$ function.
\end{proof}
This bound implies that the function $f_{\hat{\mathbf{W}}}$ represented
by the network at a global minimum is approximately rank $1$ both
w.r.t. to the Jacobian and Bottleneck ranks, showing the bias towards
low-rank functions even for finite (but possibly very large) depths.

\textbf{Jacobian Rank: }For any function $f$, the rescaled norm representation
cost $\frac{1}{L}R(f;\Omega,\sigma_{a},L)$ bounds the $L_{p}$-Schatten
norm of the Jacobian (with $p=\frac{2}{L}$) at any point:
\begin{prop}
\label{prop:param_norm_bound_Schatten_Jacobian}Let $f$ be a FPLF,
then at any differentiable point $x$, we have
\[
\left\Vert Jf(x)\right\Vert _{\nicefrac{2}{L}}^{\nicefrac{2}{L}}:=\sum_{k=1}^{\mathrm{Rank}Jf_{\mathbf{W}}(x)}s_{k}\left(Jf(x)\right)^{\frac{2}{L}}\leq\frac{1}{L}R(f;\Omega,\sigma_{a},L),
\]
where $s_{k}\left(Jf_{\mathbf{W}}(x)\right)$ is the $k$-th singular
value of the Jacobian $Jf_{\mathbf{W}}(x)$.
\end{prop}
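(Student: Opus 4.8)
The plan is to reduce the statement to a purely linear-algebraic inequality about products of matrices, and then dispatch that inequality with two classical tools. Fix any representation $\mathbf{W}$ with $f_{\mathbf{W}|\Omega}=f_{|\Omega}$; it suffices to prove $L\sum_{k}s_k(Jf(x))^{2/L}\le\|\mathbf{W}\|^2$ and afterwards take the minimum over such $\mathbf{W}$. At a point $x$ in the interior of a linear region of the network, the chain rule gives $Jf_{\mathbf{W}}(x)=W_L D_{L-1}W_{L-1}\cdots D_1 W_1$, where each $D_\ell=\mathrm{diag}\bigl(\sigma_a'(\tilde\alpha_\ell(x))\bigr)$ is diagonal with entries in $\{1,a\}$. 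Since $a\in(-1,1)$, these entries have absolute value at most $1$, so $D_\ell$ is a contraction and $\|D_\ell W_\ell\|_F\le\|W_\ell\|_F$ for the Frobenius norm. Setting $C_L=W_L$ and $C_\ell=D_\ell W_\ell$ for $\ell<L$, we get $Jf_{\mathbf{W}}(x)=C_L\cdots C_1=:A$ together with $\sum_\ell\|C_\ell\|_F^2\le\sum_\ell\|W_\ell\|_F^2\le\|\mathbf{W}\|^2$. Everything thus reduces to the claim that for any matrices with product $A=C_L\cdots C_1$,
\[
\sum_{k}s_k(A)^{2/L}\le\frac1L\sum_{\ell=1}^L\|C_\ell\|_F^2.
\]

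For this inequality I would combine majorization with the arithmetic--geometric mean inequality. First, Horn's multiplicative singular-value inequalities for a product of two matrices, iterated over the $L$ factors, give $\prod_{i=1}^j s_i(A)\le\prod_{\ell}\prod_{i=1}^j s_i(C_\ell)$ for every $j$; taking logarithms shows that the decreasing sequence $\bigl(\log s_k(A)\bigr)_k$ is weakly majorized (log-submajorized) by $\bigl(\sum_\ell\log s_k(C_\ell)\bigr)_k$. Applying the Tomi\'c--Weyl transfer lemma (weak majorization is preserved by any increasing convex function) with the increasing convex map $t\mapsto e^{(2/L)t}$ yields
\[
\sum_{k}s_k(A)^{2/L}\le\sum_{k}\prod_{\ell=1}^L s_k(C_\ell)^{2/L}.
\]
Second, for each fixed index $k$ the AM--GM inequality applied to the $L$ numbers $s_k(C_\ell)^2$ gives $\prod_\ell s_k(C_\ell)^{2/L}=\bigl(\prod_\ell s_k(C_\ell)^2\bigr)^{1/L}\le\frac1L\sum_\ell s_k(C_\ell)^2$. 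Summing over $k$ and recognizing $\sum_k s_k(C_\ell)^2=\|C_\ell\|_F^2$ closes the chain, and minimizing over $\mathbf{W}$ proves the proposition.

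The step I expect to require the most care is the majorization argument and its edge cases, rather than any single hard estimate. I would state Horn's inequalities and the Tomi\'c--Weyl transfer precisely and deal with vanishing singular values: for indices $k\le\mathrm{Rank}(A)$ the partial-product bound forces every $s_k(C_\ell)$ to be positive (so the logarithms are finite), while indices beyond the rank contribute $0$ on the left and are dropped, so the submajorization is applied only to the top $\mathrm{Rank}(A)$ entries. A second, more elementary subtlety is that the particular representation $\mathbf{W}$ may exhibit a spurious kink exactly at $x$ even though $f$ is differentiable there, so the product formula for $Jf_{\mathbf{W}}(x)$ need not be valid at $x$ itself. This is handled by noting that $f$ is affine near $x$, hence $Jf$ is locally constant, and evaluating the bound instead at a nearby point in the interior of a network linear region, where the formula holds and the singular values are unchanged.
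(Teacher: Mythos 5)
Your proof is correct and follows the same route as the paper's: the chain-rule factorization $Jf_{\mathbf{W}}(x)=W_{L}D_{L-1}W_{L-1}\cdots D_{1}W_{1}$, absorbing the contractive diagonal factors into the weight matrices via $\left\Vert D_{\ell}W_{\ell}\right\Vert _{F}\leq\left\Vert W_{\ell}\right\Vert _{F}$, and then the Schatten-$\nicefrac{2}{L}$ product inequality $L\left\Vert C_{L}\cdots C_{1}\right\Vert _{\nicefrac{2}{L}}^{\nicefrac{2}{L}}\leq\sum_{\ell}\left\Vert C_{\ell}\right\Vert _{F}^{2}$. The only real difference is that the paper imports this matrix inequality as the known representation-cost formula for deep linear networks, whereas you prove it from scratch via Horn's multiplicative singular-value inequalities, the Tomi\'c--Weyl transfer for weak majorization under increasing convex maps, and AM--GM; this makes the argument self-contained, and is indeed the standard proof of the cited fact. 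Your two points of extra care --- that $s_{k}(C_{\ell})>0$ for $k\leq\mathrm{Rank}(A)$ so the logarithms are finite, and that the chosen representation $\mathbf{W}$ may have a spurious kink at $x$ even where $f$ is locally affine, handled by evaluating at a nearby interior point of a network linear region --- are both valid and are glossed over in the paper's version.
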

Together with Proposition \ref{prop:almost_rank_1_very_deep}, this
implies that the second singular value of the Jacobian of any minimizer
function must be exponentially small $s_{2}\left(Jf_{\hat{\mathbf{W}}}(x)\right)\leq\left(\frac{1+\frac{C_{N}}{L}}{2}\right)^{\frac{L}{2}}$
in $L$.

\textbf{Bottleneck Rank: }We can further prove the existence of a
bottleneck in the network in any minimizer network, i.e. a layer $\ell$
whose hidden representation is approximately rank 1:
\begin{prop}
\label{prop:almost_BN_rank_1}For any global minimum $\hat{\mathbf{W}}$
of the $L_{2}$-regularized loss $\mathcal{L}_{\lambda}$ with $\lambda>0$
and any set of $\tilde{N}$ datapoints $\tilde{X}\in\mathbb{R}^{d_{in}\times\tilde{N}}$
(which do not have to be the training set $X$) with non-constant
outputs, there is a layer $\ell_{0}$ such that the first two singular
values $s_{1},s_{2}$ of the hidden representation $Z_{\ell_{0}}\in\mathbb{R}^{n_{\ell}\times N}$
(whose columns are the activations $\alpha_{\ell_{0}}(x_{i})$ for
all the inputs $x_{i}$ in $\tilde{X}$) satisfies $\frac{s_{2}}{s_{1}}=O(L^{-\frac{1}{4}})$.
\end{prop}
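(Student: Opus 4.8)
The plan is to argue by a ``norm budget'' decomposition together with a pigeonhole over the $L$ layers. First I would exploit global minimality: since $\hat{\mathbf{W}}$ minimizes $\mathcal{L}_\lambda$, its weights must also be norm-minimal among all weights representing the same function $f_{\hat{\mathbf{W}}}$ (otherwise one could shrink $\frac{\lambda}{L}\|\mathbf{W}\|^2$ without changing the fit), so $\|\hat{\mathbf{W}}\|^2 = R(f_{\hat{\mathbf{W}}};\Omega,\sigma_a,L)$. Proposition \ref{prop:almost_rank_1_very_deep} then supplies the crucial budget
\[
\|\hat{\mathbf{W}}\|^2 \;\le\; L + C_N .
\]

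The next step is to split this budget between the first two singular directions of the forward representations. Writing $s_1(W_\ell)\ge s_2(W_\ell)\ge\cdots$ for the singular values of each weight matrix, we have $\|W_\ell\|_F^2 \ge s_1(W_\ell)^2 + s_2(W_\ell)^2$, so $\sum_\ell s_1(W_\ell)^2 + \sum_\ell s_2(W_\ell)^2 \le L + C_N$. Here the hypothesis that the outputs on $\tilde{X}$ are non-constant is what makes the argument run: it forces the ``signal'' to survive every layer. Since the \emph{centered} pre-activations of layer $\ell$ equal $W_\ell \bar{Z}_{\ell-1}$ (the rank-one bias term is annihilated by centering), the non-vanishing centered output propagates backwards through the chain and forces $\prod_\ell s_1(W_\ell)$ to be bounded below by a constant, up to the co-Lipschitz constants of $\sigma_a$. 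By AM--GM this already costs $\sum_\ell s_1(W_\ell)^2 \ge L - O(1)$, leaving only
\[
\sum_{\ell=1}^L s_2(W_\ell)^2 \;=\; O(C_N)
\]
for the second direction, a bound independent of $L$.

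The final step converts this weight-level estimate into the stated claim about the representations $Z_\ell$; this is the representation-level analogue of the Jacobian statement in Proposition \ref{prop:param_norm_bound_Schatten_Jacobian}. By Cauchy--Schwarz the $L$-independent bound upgrades to $\sum_\ell s_2(W_\ell) = O(\sqrt{L})$. I would then prove a per-layer estimate of the form $\big(s_2(Z_\ell)/s_1(Z_\ell)\big)^2 \lesssim s_2(W_\ell) + (\text{lower-order terms})$, expressing that a layer can only keep the second singular direction of its representation alive in proportion to $s_2(W_\ell)$, since tracking $\tilde{Z}_\ell = W_\ell Z_{\ell-1} + b_\ell \mathbf{1}^T$ shows the rank-one bias does not feed the second direction. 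Summing and combining with the Cauchy--Schwarz bound gives $\sum_\ell \big(s_2(Z_\ell)/s_1(Z_\ell)\big)^2 = O(\sqrt{L})$, and pigeonhole over the $L$ layers then produces a layer $\ell_0$ with $\big(s_2(Z_{\ell_0})/s_1(Z_{\ell_0})\big)^2 = O(L^{-1/2})$, i.e. $s_2/s_1 = O(L^{-1/4})$.

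The main obstacle I anticipate is controlling singular values across the entrywise nonlinearity $\sigma_a$ together with the bias shifts. The map $\sigma_a$ is only $1$-Lipschitz (and for $a<0$ it is not even injective), so it can both inflate and deflate singular values in ways that respect neither the multiplicative inequalities $s_k(AB)\le s_i(A)s_j(B)$ nor a clean lower bound on the surviving second direction; the biases compound this by adding a rank-one term at every layer. Making the signal-propagation lower bound and the per-layer conversion rigorous -- in particular quantifying the slack in the AM--GM step precisely enough to land on the $O(\sqrt{L})$ sum that the pigeonhole needs, rather than an uncontrolled constant -- is where the real work lies, and is presumably why the stated exponent is the conservative $L^{-1/4}$ rather than the $L^{-1/2}$ that the weight-level budget alone would suggest.
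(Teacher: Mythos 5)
Your opening moves are sound and partially overlap with the paper's: at a global minimum $\Vert\hat{\mathbf{W}}\Vert^{2}\leq L+C_{N}$, and the AM--GM argument (using that the network is non-constant on $\tilde{X}$, so $\prod_{\ell}s_{1}(W_{\ell})$ is bounded below by co-Lipschitzness) does yield $\sum_{\ell}s_{2}(W_{\ell})^{2}=O(1)$, i.e.\ the weight matrices are rank one on average. But that is exactly the weaker statement the paper attributes to prior work (weight matrices almost rank 1, ``and only on average''), and the step you need to go further --- the per-layer estimate $\bigl(s_{2}(Z_{\ell})/s_{1}(Z_{\ell})\bigr)^{2}\lesssim s_{2}(W_{\ell})+(\text{lower order})$ --- is false. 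Even if $s_{2}(W_{\ell})=0$ exactly, the post-activation $Z_{\ell}=\sigma_{a}(W_{\ell}Z_{\ell-1}+b_{\ell}\mathbf{1}^{T})$ can have $s_{2}(Z_{\ell})\sim s_{1}(Z_{\ell})$: the bias contributes a rank-one term that need not align with the image of $W_{\ell}$, and, worse, the entrywise map $\sigma_{a}$ sends a rank-one matrix $uv^{T}$ to $uv^{T}\odot\left(\tfrac{1+a}{2}\mathbf{1}\mathbf{1}^{T}+\tfrac{1-a}{2}\,\mathrm{sign}(u)\mathrm{sign}(v)^{T}\right)$, a rank-two matrix whose two singular values are generically comparable. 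So a single cheap layer cannot certify a nearly rank-one representation, and there is no candidate for ``lower-order terms'' that would still sum to $O(\sqrt{L})$. You correctly flag the nonlinearity as the main obstacle, but the proposal contains no mechanism to overcome it, and the pigeonhole collapses without the per-layer bound.

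The paper's proof never tries to control $s_{2}(Z_{\ell})$ across a single layer. It works with the Gram matrices $\frac{1}{\tilde{N}}Z_{\ell}^{T}Z_{\ell}$ and runs a cumulative potential/contradiction argument: the balancedness identity $\Vert W_{\ell+1}\Vert_{F}^{2}=\Vert W_{\ell}\Vert_{F}^{2}+\Vert b_{\ell}\Vert^{2}$ (valid at any local minimum) upgrades the global budget to a per-layer budget $\Vert W_{\ell}\Vert_{F}^{2}+\Vert b_{\ell}\Vert^{2}\leq1+\frac{C_{N}+\ell}{L-\ell}$; then, using $\lambda_{1}\leq\mathrm{Tr}-\lambda_{2}$ and $|\sigma_{a}(x)|\leq|x|$, if $\lambda_{2}\geq\delta$ at every layer the top eigenvalue of the Gram matrix must drop by about $\delta/2$ per layer and would become negative after $O(1/\delta)$ layers --- a contradiction, since a separate AM--GM bound on the operator norms of the remaining weight matrices shows that an $\Omega(1)$ top eigenvalue must survive to layer $\ell_{0}$ for the (non-constant) output to be produced. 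Taking $\delta\sim L^{-1/2}$ gives the $L^{-1/4}$ rate. If you want to rescue your route, you must replace the per-layer weight-to-representation conversion by this kind of budget argument over blocks of layers; as written, the proof has a genuine gap at its central step.
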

The fact that the global minima of the loss are approximately rank
1 not only in the Jacobian but also in the Bottleneck sense further
supports our conjecture that the limiting representation cost equals
the Bottleneck rank $R_{\infty}=\mathrm{Rank}_{BN}$. Furthermore,
it shows that the global minimum of the $L_{2}$-regularized is biased
towards low-rank functions for large depths, since it fits the data
with (approximately) the smallest possible rank.

\subsection{Rank Recovery for Intermediate Depths}

However, learning rank 1 functions is not always a good thing. Assume
that we are trying to fit a `true function' $f^{*}:\Omega\to\mathbb{R}^{d_{out}}$
with a certain rank $k=\mathrm{Rank}_{BN}\left(f^{*};\Omega\right)$.
If $k>1$ the global minima of a large depth network will end up underestimating
the true rank $k$.

In contrast, in the linear setting underestimating the true rank is
almost never a problem: for example in matrix completion one always
wants to find a minimal rank solution \cite{candes2009exact,arora_2019_matrix_factorization}.
The difference is due to the fact that rank $1$ nonlinear functions
can fit any finite training set, which is not the case in the linear
case. 

Thankfully, for large datasets it becomes more and more difficult
to underestimate the rank, since for large $N$ fitting the data with
a rank 1 function requires large derivatives, which in turn implies
a large parameter norm:
\begin{thm}
\label{prop:explosion_norm_rank_1_fit}Given a Jacobian-rank $k$
true function $f^{*}:\Omega\to\mathbb{R}^{d_{out}}$ on a bounded
domain $\Omega$, then for all $\epsilon$ there is a constant $c_{\epsilon}$
such that for any BN-rank 1 function $\hat{f}:\Omega\to\mathbb{R}^{d_{out}}$
that fits $\hat{f}(x_{i})=f^{*}(x_{i})$ a dataset $x_{1},\dots,x_{N}$
sampled i.i.d. from a distribution $p$ with support $\Omega$, we
have $\frac{1}{L}R(\hat{f};\Omega,\sigma_{a},L)>c_{\epsilon}N^{\frac{2}{L}\left(1-\frac{1}{k}\right)}$
with prob. at least $1-\epsilon$.
\end{thm}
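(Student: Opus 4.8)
The plan is to combine the Schatten-norm bound of Proposition \ref{prop:param_norm_bound_Schatten_Jacobian} with a geometric (Euclidean shortest-path) lower bound on the operator norm of the Jacobian of any rank-$1$ fit. Since $\hat f$ has $\mathrm{Rank}_{BN}(\hat f;\Omega)=1$, it factors as $\hat f=g\circ h$ with $h:\Omega\to\mathbb R$ and $g:\mathbb R\to\mathbb R^{d_{out}}$ both FPLF; hence $J\hat f(x)=g'(h(x))\,\nabla h(x)^{\top}$ is an outer product, so $J\hat f(x)$ has rank at most $1$ and its only nonzero singular value is $s_1(J\hat f(x))=\|g'(h(x))\|\,\|\nabla h(x)\|$. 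Feeding rank $1$ into Proposition \ref{prop:param_norm_bound_Schatten_Jacobian} gives $\tfrac1L R(\hat f;\Omega,\sigma_a,L)\ge \big(\sup_x s_1(J\hat f(x))\big)^{2/L}=M^{2/L}$, where $M:=\mathrm{Lip}(\hat f)=\sup_x s_1(J\hat f(x))$. It therefore suffices to prove that, with probability at least $1-\epsilon$, one has $M\ge c_\epsilon' N^{1-1/k}$ for some constant $c_\epsilon'$; raising to the power $2/L$ then yields the claim, absorbing $(c_\epsilon')^{2/L}$ into $c_\epsilon$.

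The deterministic core is to show that $M$ controls the length of the output curve that $\hat f$ is forced to trace through the data. Because every $y_i:=f^*(x_i)=\hat f(x_i)=g(h(x_i))$ lies on the image of $g$, ordering the data by the scalars $t_i:=h(x_i)$ shows that the arc of $g$ over $[\min_i t_i,\max_i t_i]$ visits all the $y_i$ in this order, so its length $\int\|g'\|\,\mathrm{d}t$ is at least the shortest Hamiltonian path $P(\{y_i\})$ through the point set $\{y_i\}$. Conversely, connecting the data points that attain the extreme values of $h$ by a path $\gamma\subset\Omega$ and pushing it through $\hat f$, the covering (area-formula) inequality gives $\int\|g'\|\,\mathrm{d}t\le \mathrm{length}(\hat f\circ\gamma)\le M\,\mathrm{length}(\gamma)\le M\cdot D$, where $D=\mathrm{diam}(\Omega)$ for convex $\Omega$ (an intrinsic diameter in general). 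Combining the two inequalities yields $M\ge P(\{y_i\})/D$, reducing the problem to a lower bound on the shortest path through the sample images.

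For the probabilistic step I would exploit that $\mathrm{Rank}_J(f^*;\Omega)=k$: there is a full-dimensional linear region $U\subseteq\Omega$ on which $Jf^*$ has rank exactly $k$, and since $\mathrm{supp}(p)=\Omega$ we have $p(U)>0$. On $U$ the map $f^*$ is affine of rank $k$, so (assuming, as is natural here, that $p$ is absolutely continuous so the images are genuinely spread in $k$ dimensions) the pushforward $f^*_*(p|_U)$ has a component absolutely continuous with respect to $k$-dimensional Lebesgue measure, with density bounded below on a ball inside the $k$-plane $f^*(U)$. Restricting to the $N_U\sim\mathrm{Binomial}(N,p(U))$ samples landing in $U$ and projecting $1$-Lipschitzly onto a generic coordinate $k$-subspace can only shorten paths, so $P(\{y_i\})\ge P(\{\pi(y_i):x_i\in U\})$ is bounded below by the Euclidean shortest path through $N_U$ i.i.d. points with an absolutely continuous density in $\mathbb R^k$. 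By the Beardwood--Halton--Hammersley asymptotics (equivalently, by the nearest-neighbor lower bound $P\ge\tfrac12\sum_i \mathrm{dist}(y_i,\{y_j\}_{j\ne i})$ together with the fact that nearest-neighbor distances for a density in $\mathbb R^k$ are of order $N^{-1/k}$), this shortest path is $\gtrsim N_U^{1-1/k}$ with high probability; a Chernoff bound giving $N_U\ge\tfrac12 p(U)N$ and a concentration inequality for the functional then combine, via a union bound, to give $P(\{y_i\})\ge c_\epsilon' N^{1-1/k}$ with probability at least $1-\epsilon$.

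The main obstacle is the last paragraph: one must make precise that the rank-$k$ hypothesis really produces a genuinely $k$-dimensional absolutely continuous component of the pushforward (so the Euclidean functional attains the full order $N^{1-1/k}$ and not something smaller), and one must upgrade the expectation-level shortest-path estimate to a high-probability statement. The cleanest route is probably to bypass citing the sharp asymptotics and instead use the nearest-neighbor sum together with a bounded-differences (Efron--Stein) concentration inequality, since the shortest-path functional changes by at most $O(D)\cdot N^{-1/k}$ when a single sample is moved; by comparison, the geometric reduction of the first two paragraphs is essentially routine once the outer-product structure of $J\hat f$ and the covering inequality are in hand.
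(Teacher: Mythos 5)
Your proposal follows essentially the same route as the paper: the Schatten-norm bound of Proposition \ref{prop:param_norm_bound_Schatten_Jacobian} applied to the rank-one Jacobian, the lower bound $\sup_x\Vert J\hat f(x)\Vert_{op}\geq \mathrm{TSP}(y_1,\dots,y_N)/\mathrm{diam}(x_1,\dots,x_N)$ obtained by pushing a path between the extremal preimages through $\hat f$, and the Beardwood--Halton--Hammersley scaling $N^{1-1/k}$ for the shortest path through the $y_i$. You are in fact more careful than the paper on the final probabilistic step --- the paper simply cites the shortest-path asymptotics and asserts the scaling, without addressing the absolute continuity of the pushforward on a full-rank linear region or the upgrade to a high-probability statement --- so the ``main obstacle'' you flag is a gap the paper itself leaves implicit.
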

\begin{proof}
We show that there is a point $x\in\Omega$ with large derivative
$\left\Vert Jf(x)\right\Vert _{op}\geq\frac{\mathrm{TSP}(y_{1},\dots,y_{N})}{\mathrm{diam}(x_{1},\dots,x_{N})}$
for the Traveling Salesman Problem $\mathrm{TSP}(y_{1},\dots,y_{N})$,
i.e. the length of the shortest path passing through every point $y_{1},\dots,y_{m}$,
and the diameter $\mathrm{diam}(x_{1},\dots,x_{N})$ of the points
$x_{1},\dots,x_{N}$. This follows from the fact that the image of
$\hat{f}$ is a line going through all $y_{i}$s, and if $i$ and
$j$ are the first and last points visited, the image of segment $[x_{i},x_{j}]$
is a line from $y_{i}$ to $y_{j}$ passing through all $y_{k}$s.
The diameter is bounded by $\mathrm{diam}\Omega$ while the TSP scales
as $N^{1-\frac{1}{k}}$ \cite{beardwood_1959_shortest_path_many_points}
since the $y_{i}$s are sampled from a $k$-dimensional distribution.
The bound on the parameter norm then follows from Proposition \ref{prop:param_norm_bound_Schatten_Jacobian}.
\end{proof}
This implies that the constant $C_{N}$ in Proposition \ref{prop:almost_rank_1_very_deep}
explodes as the number of datapoints $N$ increases, i.e. as $N$
increases, larger and larger depths are required for the bound in
Proposition \ref{prop:almost_rank_1_very_deep} to be meaningful.
In that case, a better upper bound on the norm of the parameters can
be obtained, which implies that the functions $f_{\hat{\mathbf{W}}}$
at global minima are approximately rank $k$ or less (at least in
the Jacobian sense, according to Proposition \ref{prop:param_norm_bound_Schatten_Jacobian}):
\begin{prop}
\label{prop:almost_rank_k_global_min}Let the `true function' $f^{*}:\Omega\to\mathbb{R}^{d_{out}}$
be piecewise linear with $\mathrm{Rank}_{BN}(f^{*})=k$, then there
is a constant $C$ which depends on $f^{*}$ only such that any minimizer
function $f_{\hat{\mathbf{W}}}$ satisfies

\[
\frac{1}{L}R(f_{\hat{\mathbf{W}}};\sigma_{a},\Omega,L)\leq\frac{1}{L}R(f^{*};\sigma_{a},\Omega,L)\leq k+\frac{C}{L}.
\]
\end{prop}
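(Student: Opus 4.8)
The plan is to establish the two inequalities separately. The left one is a consequence of the global optimality of $\hat{\mathbf{W}}$, while the right one is the explicit three-block construction already used in the proof of Theorem \ref{thm:infinite_depth_representation_cost_sandwich_bound}, now applied to $f^{*}$.

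For the left inequality I would use an optimal representation of $f^{*}$ as a competitor in the loss. Let $\mathbf{W}^{*}$ be weights of a depth-$L$ network with $f_{\mathbf{W}^{*}|\Omega}=f^{*}_{|\Omega}$ and $\left\Vert \mathbf{W}^{*}\right\Vert ^{2}=R(f^{*};\sigma_{a},\Omega,L)$ (such weights exist once the right inequality guarantees $R(f^{*})<\infty$). Since the labels are generated by $f^{*}$ and the inputs $x_{i}$ lie in $\Omega$, we have $f_{\mathbf{W}^{*}}(x_{i})=f^{*}(x_{i})=y_{i}$, so the MSE term vanishes and $\mathcal{L}_{\lambda}(\mathbf{W}^{*})=\frac{\lambda}{L}R(f^{*};\sigma_{a},\Omega,L)$. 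By global optimality of $\hat{\mathbf{W}}$ and nonnegativity of the MSE term,
\[
\frac{\lambda}{L}\left\Vert \hat{\mathbf{W}}\right\Vert ^{2}\leq\mathcal{L}_{\lambda}(\hat{\mathbf{W}})\leq\mathcal{L}_{\lambda}(\mathbf{W}^{*})=\frac{\lambda}{L}R(f^{*};\sigma_{a},\Omega,L).
\]
Since $\lambda>0$ the factor cancels, giving $\left\Vert \hat{\mathbf{W}}\right\Vert ^{2}\leq R(f^{*};\sigma_{a},\Omega,L)$. Finally, because $\hat{\mathbf{W}}$ is one particular set of weights representing $f_{\hat{\mathbf{W}}}$ on $\Omega$, the definition of the representation cost yields $R(f_{\hat{\mathbf{W}}};\sigma_{a},\Omega,L)\leq\left\Vert \hat{\mathbf{W}}\right\Vert ^{2}$, and dividing by $L$ proves the left inequality.

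For the right inequality I would factorize $f^{*}=h\circ g$ with inner dimension $k=\mathrm{Rank}_{BN}(f^{*})$ and assemble a depth-$L$ network in three consecutive blocks: a block of depth $L_{g}$ representing $g$, ending with a single bias that shifts the bounded set $g(\Omega)\subset\mathbb{R}^{k}$ into the positive orthant; a middle block of $L-L_{g}-L_{h}$ layers with weight matrices $I_{k}$ and zero bias; and a final block of depth $L_{h}$ representing $h$ precomposed with the inverse shift. On the positive orthant $\sigma_{a}$ acts as the identity, so the middle block transmits its input unchanged and each of its layers contributes exactly $k$ to $\left\Vert \mathbf{W}\right\Vert ^{2}$ (the squared Frobenius norm of $I_{k}$). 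The two outer blocks have finite cost $C_{g}$ and $C_{h}$ depending only on the chosen factorization of $f^{*}$ and not on $L$. Summing, for all $L\geq L_{g}+L_{h}$,
\[
R(f^{*};\sigma_{a},\Omega,L)\leq k\,(L-L_{g}-L_{h})+C_{g}+C_{h}\leq kL+C,
\]
with $C=C_{g}+C_{h}$, and dividing by $L$ gives the right inequality.

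The main obstacle is the construction underlying the right inequality: one must ensure that the middle layers genuinely cost $k$ per layer rather than more, which requires keeping the hidden activations inside the region where $\sigma_{a}$ is linear. This is what forces the single positivity shift at the end of the $g$-block (made possible by boundedness of $\Omega$, hence of $g(\Omega)$) and its removal at the start of the $h$-block, and one must verify that these corrections, together with the finite-depth representations of $g$ and $h$ themselves, contribute only the $L$-independent constant $C$. Once this is in place, the single configuration $\mathbf{W}^{*}$ simultaneously certifies the finiteness of $R(f^{*})$ and supplies the competitor for the left inequality, closing the argument.
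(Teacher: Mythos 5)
Your proof is correct and follows essentially the same route as the paper's: the right inequality via the three-block construction ($g$-network, identity layers on $\mathbb{R}_{+}^{k}$ costing $k$ each, $h$-network), and the left inequality by using a representation of $f^{*}$ as a zero-MSE competitor at the global minimum so that $\frac{\lambda}{L}\Vert\hat{\mathbf{W}}\Vert^{2}\leq\mathcal{L}_{\lambda}(\hat{\mathbf{W}})\leq\mathcal{L}_{\lambda}(\mathbf{W}^{*})$. The only cosmetic difference is that you isolate the intermediate bound through $R(f^{*})$ explicitly, whereas the paper chains the inequalities in one display.
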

Theorem \ref{prop:explosion_norm_rank_1_fit} and Proposition \ref{prop:almost_rank_k_global_min}
imply that if the number of datapoints $N$ is sufficiently large
($N>\left(\frac{k+\frac{C}{L}}{c}\right)^{\frac{kL}{2k-2}}$), there
are parameters $\mathbf{W}^{*}$ that fit the true function $f^{*}$
with a smaller parameter norm than any choice of parameters $\mathbf{W}$
that fit the data with a rank 1 function. In that case, the global
minima will not be rank 1 and might instead recover the true rank
$k$.

Another interpretation is that since the constant $C$ does not depend
on the number of training points $N$ (in contrast to $C_{N}$), there
is a range of depths (which grows as $N\to\infty$) where the upper
bound of Proposition \ref{prop:almost_rank_k_global_min} is below
that of Proposition \ref{prop:almost_rank_1_very_deep}. We expect
rank recovery to happen roughly in this range of depths: too small
depths can lead to an overestimation of the rank\footnote{Note that traditional regression models, such as Kernel Ridge Regression
(KRR) typically overestimate the true rank, as described in Appendix
D.1.}, while too large depths can lead to an underestimation.
\begin{rem}
Note that in our experiments, we were not able to observe gradient
descent converging to a solution that underestimates the true rank,
even for very deep networks. This is probably due to gradient descent
converging to one of the many local minima in the loss surface of
very deep $L_{2}$-regularized DNNs. Some recent theoretical results
offer a possible explanation for why gradient descent naturally avoids
rank 1 solutions: the proof of Proposition \ref{prop:explosion_norm_rank_1_fit}
shows that rank 1 fitting functions have exploding gradient as $N\to\infty$,
and such high gradient functions are known (at the moment only for
shallow networks with 1D inputs) to correspond to narrow minima \cite{mulayoff_2021_minima_stability_bias_1D}. 

Some of our results can be applied to local minima $\hat{\mathbf{W}}$with
a small norm: Proposition \ref{prop:param_norm_bound_Schatten_Jacobian}
implies that the Jacobian rank of $f_{\hat{\mathbf{W}}}$ is approximately
bounded by $\nicefrac{\left\Vert \hat{\mathbf{W}}\right\Vert ^{2}}{L}$.
Proposition \ref{prop:almost_BN_rank_1} also applies to local minima,
but only if $\nicefrac{\left\Vert \hat{\mathbf{W}}\right\Vert ^{2}}{L}\leq1+\nicefrac{C}{L}$
for some constant $C$, though it could be generalized.
\end{rem}
\begin{figure}
\vspace{-30bp}

\begin{center}

\subfloat{\includegraphics[viewport=0bp 0bp 422bp 320bp,clip,scale=0.3]{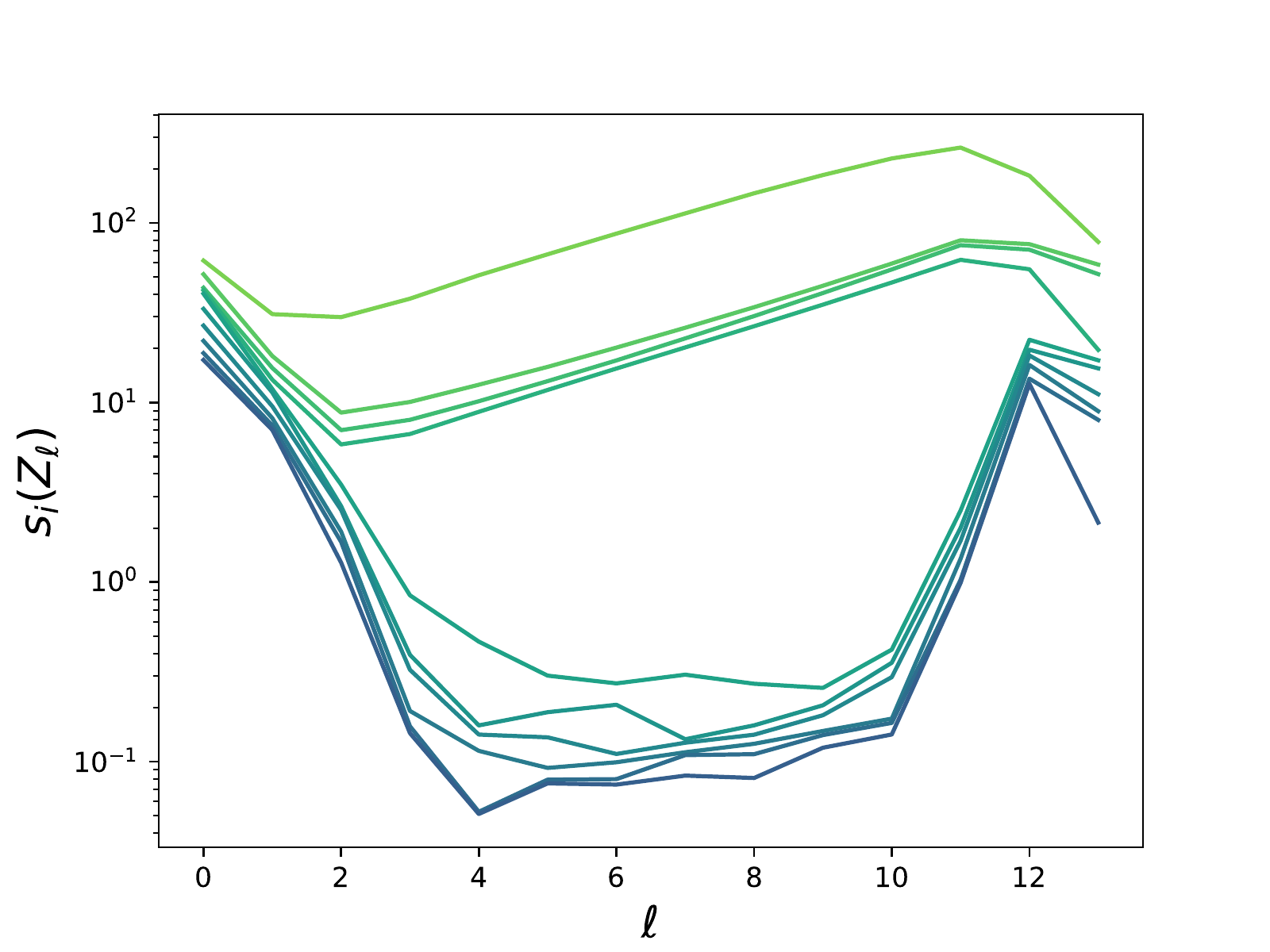}}\subfloat{\includegraphics[viewport=0bp 0bp 422bp 320bp,clip,scale=0.3]{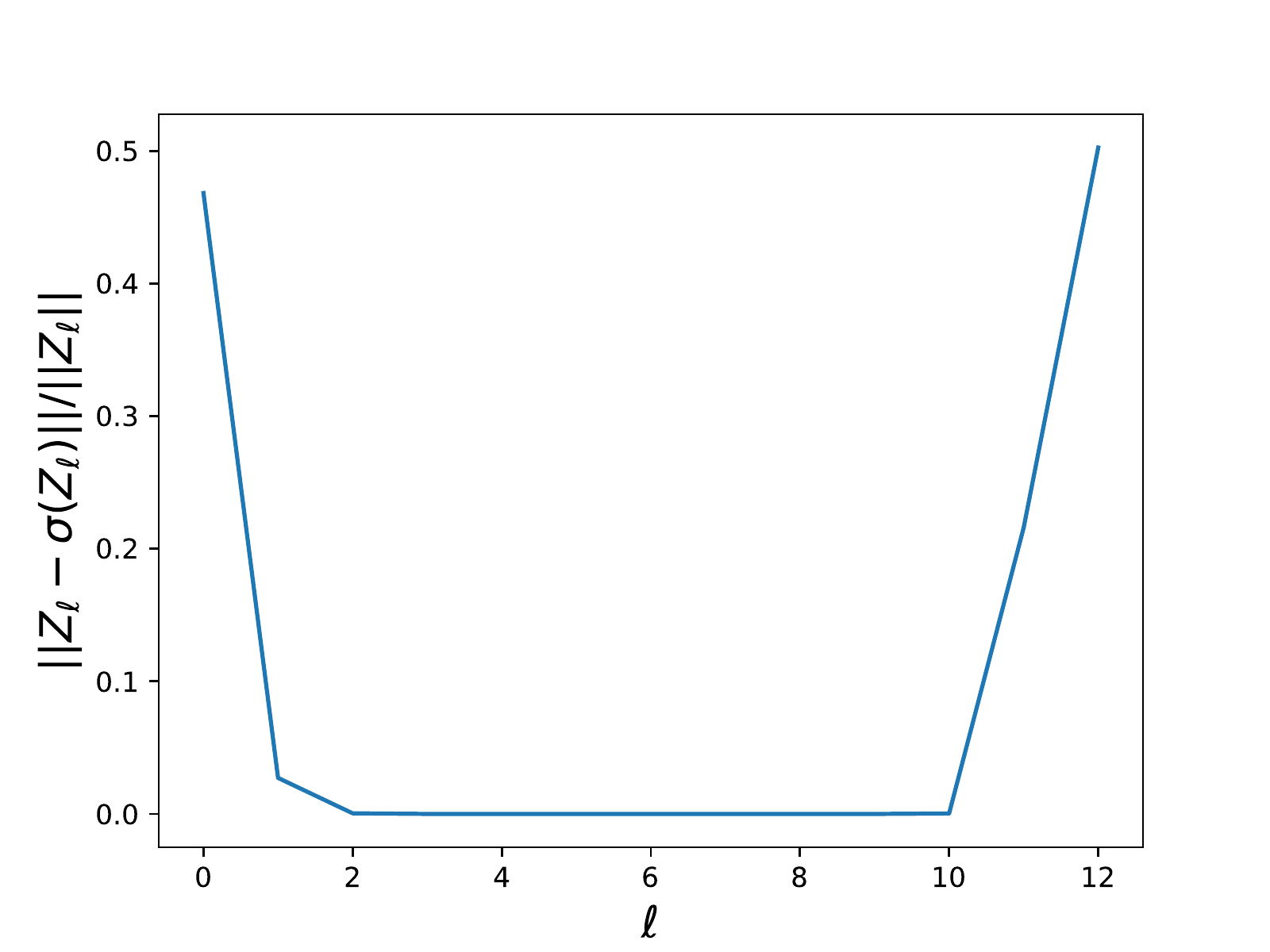}}\subfloat{\includegraphics[viewport=0bp 0bp 422bp 320bp,clip,scale=0.3]{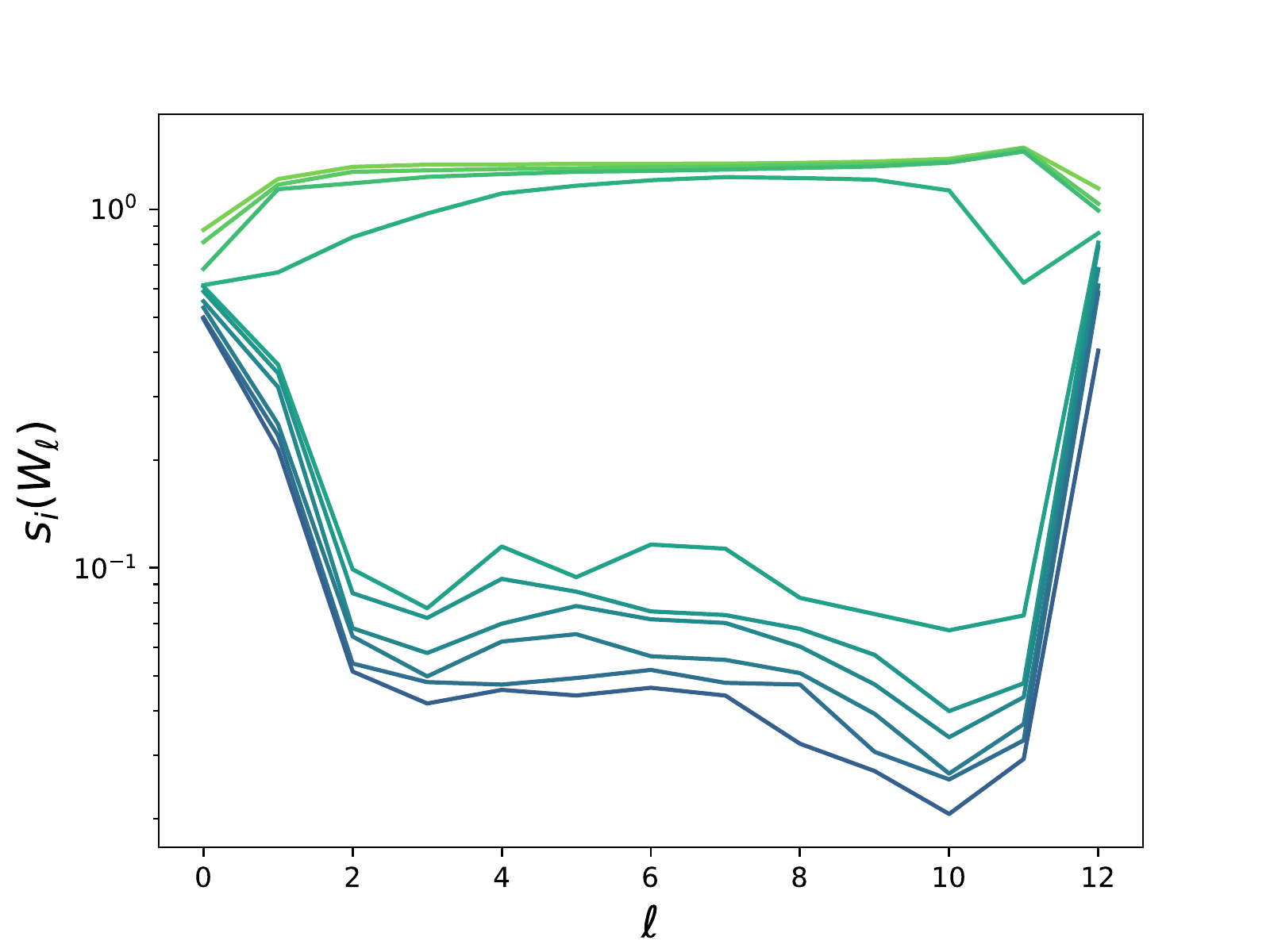}}

\end{center}

\caption{\label{fig:DNN_global_min_low_rank}DNN (depth $L=13$ and width $n_{\ell}=100$)
trained on a MSE task with rank $4$ true function $f^{*}:\mathbb{R}^{50}\to\mathbb{R}^{50}$,
with $N=500$ and $\lambda=0.05/L$. At the end of training, we obtain
$\nicefrac{\left\Vert W\right\Vert ^{2}}{L}\approx6$. \textbf{(left)}
First 10 singular values of the matrix of activations $Z_{\ell}$
for all $\ell$. The representations are appr. rank 4 in the middle
layers. \textbf{(middle)} The impact of the nonlinearity at each layer
$\ell$, measured by the ratio $\nicefrac{\left\Vert \tilde{Z}_{\ell}-Z_{\ell}\right\Vert _{F}}{\left\Vert \tilde{Z}_{\ell}\right\Vert _{F}}$
where $\tilde{Z}_{\ell}$ is the matrix of preactivations with entries
$\tilde{\alpha}_{k}^{(\ell)}(x_{i})$. This impact vanishes in the
middle layers, supporting our intuition that the middle layers represent
approximate identities. \textbf{(right)} First 10 singular values
of the weight matrices $W_\ell$ at every layer.}
\end{figure}

\subsection{Discussion\label{subsec:Discussion}}

We now propose a tentative explanation for the phenomenon observed
in this section. In contrast to the rest of the paper, this discussion
is informal.

Ideally, we want to learn functions $f$ which can be factorized as
a composition $h\circ g$ so that not only the inner dimension is
small but the two functions $g,h$ are not `too complex'. These two
objectives are often contradictory and one needs to find a trade-off
between the two. Instead of optimizing the bottleneck rank, one might
want to optimize with a regularization term of the form
\begin{equation}
\min_{f=h\circ g}k+\gamma\left(C(g)+C(h)\right),\label{eq:mixed_optimization}
\end{equation}
optimizing over all possible factorization $f=h\circ g$ of $f$ with
inner dimension $k$, where $C(g)$ and $C(h)$ are measures of the
complexity of $g$ and $h$ resp. The parameter $\gamma\geq0$ allows
us to tune the balance between the minimization of the inner dimension
and the complexity of $g$ and $h$, recovering the Bottleneck rank
when $\gamma=0$. For small $\gamma$ the minimizer is always rank
1 (since it is always possible to fit a finite dataset with a rank
$1$ function in the absence of restriction on the complexity on $g$
and $h$), but with the right choice of $\gamma$ one can recover
the true rank.

Some aspects of the proofs techniques we used in this paper suggest
that large-depth DNNs are optimizing such a cost (or an approximation
thereof). Consider a deep network that fits with minimal parameter
norm a function $f$; if we add more layers to the network it is natural
to assume that the new optimal representation of $f$ will be almost
the same as that of the shallower network with some added (approximate)
identity layers. The interesting question is where are those identity
layers added? The cost of adding an identity layer at a layer $\ell$
equals the dimension $d_{\ell}$ of the hidden representation of the
inputs at $\ell$. It is therefore optimal to add identity layers
where the hidden representations have minimal dimension.

This suggests that for large depths the optimal representation of
a function $f$ approximately takes the form of $L_{g}$ layers representing
$g$, then $L-L_{g}-L_{h}$ identity layers, and finally $L_{h}$
layers representing $h$, for some factorization $f=h\circ g$ with
inner dimension $k$. We observe in Figure \ref{fig:DNN_global_min_low_rank}
such a three-part representation structure in an MSE task with a low-rank
true function. The rescaled parameter norm would then take the form
\[
\frac{1}{L}\left\Vert \mathbf{W}\right\Vert ^{2}=\frac{L-L_{g}-L_{h}}{L}k+\frac{1}{L}\left(\left\Vert \mathbf{W}_{g}\right\Vert ^{2}+\left\Vert \mathbf{W}_{h}\right\Vert ^{2}\right),
\]
where $\mathbf{W}_{g}$ and $\mathbf{W}_{h}$ are the parameters of
the first and last part of the network. For large depths, we can make
the approximation $\frac{L-L_{g}-L_{h}}{L}\approx1$ to recover the
same structure as Equation \ref{eq:mixed_optimization}, with $\gamma=\nicefrac{1}{L}$,
$C(g)=\left\Vert \mathbf{W}\right\Vert _{g}^{2}$ and $C(h)=\left\Vert \mathbf{W}_{h}\right\Vert ^{2}$.
This intuition offers a possible explanation for rank recovery in
DNNs, though we are not yet able to prove it rigorously.

\section{Practical Implications}

In this section, we describe the impact of rank minimization on two
practical tasks: multiclass classification and autoencoders.

\subsection{Multiclass Classification}

Consider a function $f_{\mathbf{W}^{*}}:\mathbb{R}^{d_{in}}\to\mathbb{R}^{m}$
which solves a classification task with $m$ classes, i.e. for all
training points $x_{i}$ with class $y_{i}\in\{1,\dots,m\}$ the $y_{i}$-th
entry of the vector $f_{\mathbf{W}^{*}}$ is strictly larger than
all other entries. The Bottleneck rank $k=\mathrm{Rank}_{BN}(f_{\mathbf{W}^{*}})$
of $f_{\mathbf{W}^{*}}$ has an impact on the topology of the resulting
partition of the input space $\Omega$ into classes, leading to topological
properties typical of a partition on a $k$-dimensional space rather
than those of a partition on a $d_{in}$-dimensional space.

When $k=1$, the partition will be topologically equivalent to a classification
on a line, which implies the absence of tripoints, i.e. points at
the boundary of 3 (or more) classes. Indeed any boundary point $x\in\Omega$
will be mapped to a boundary point $z=g(x)$ by the first function
$g:\Omega\to\mathbb{R}$ in the factorization of $f_{\mathbf{W}^{*}}$;
since $z$ has at most two neighboring classes, then so does $x$.

This property is illustrated in Figure \ref{fig:Classification}:
for a classification task on four classes on the plane, we observe
that the partitions obtained by shallow networks ($L=2$) leads to
tripoints which are absent in deeper networks ($L=9$). Notice also
that the presence or absence of $L_{2}$-regularization has little
effect on the final shape, which is in line with the observation that
the cross-entropy loss leads to an implicit $L_{2}$-regularization
\cite{soudry2018implicit,gunasekar_2018_implicit_bias,chizat_2020_implicit_bias},
reducing the necessity of an explicit $L_{2}$-regularization.

\begin{figure}
\vspace{-45bp}

\begin{center}

\hspace{-60bp}\subfloat[$L=2,\lambda=0$]{\hspace{30bp}\includegraphics[viewport=65bp 10bp 335bp 310bp,clip,scale=0.35]{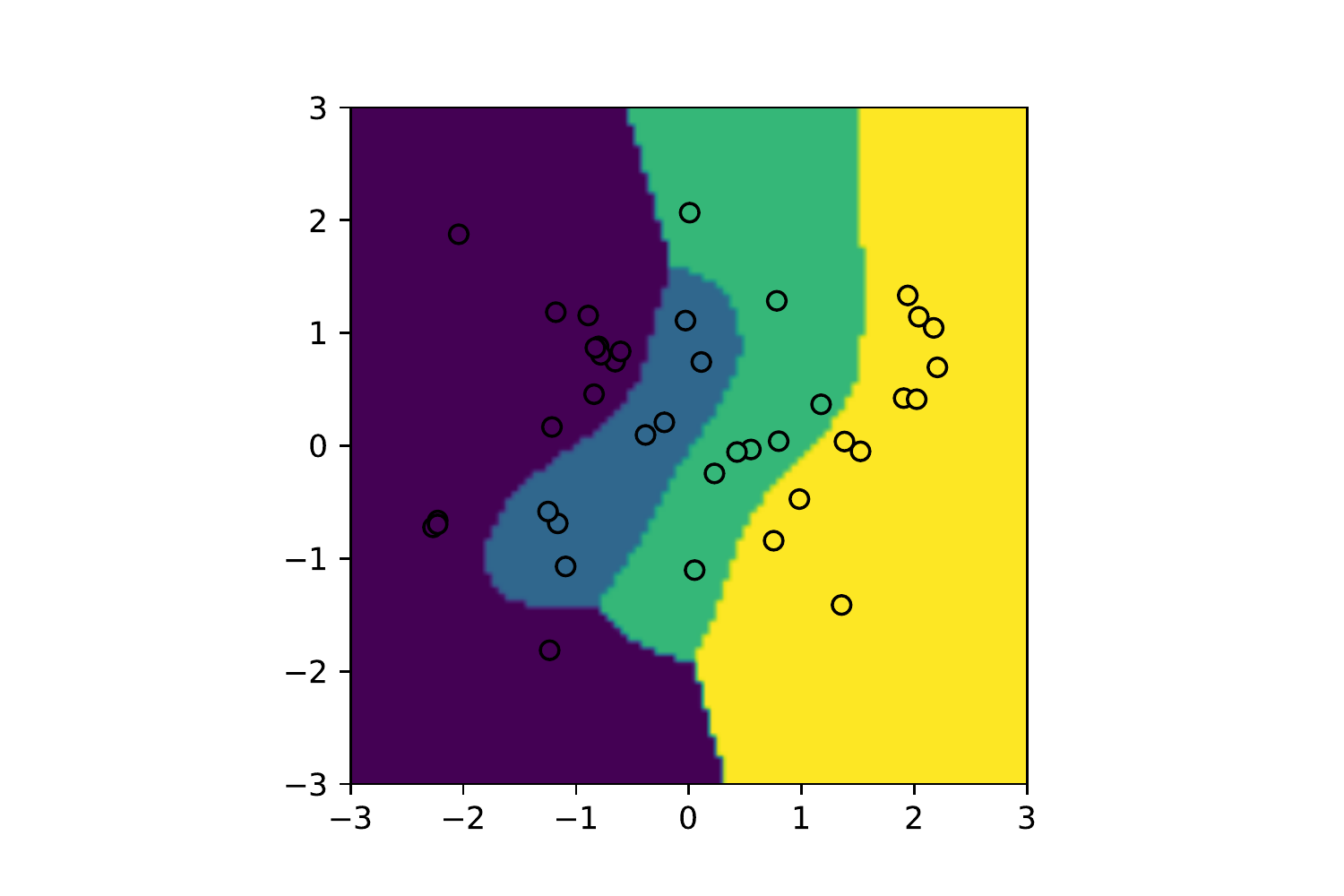}\hspace{30bp}}\hspace{-60bp}\subfloat[$L=2,\lambda=10^{-3}$]{\hspace{30bp}\includegraphics[viewport=65bp 10bp 335bp 310bp,clip,scale=0.35]{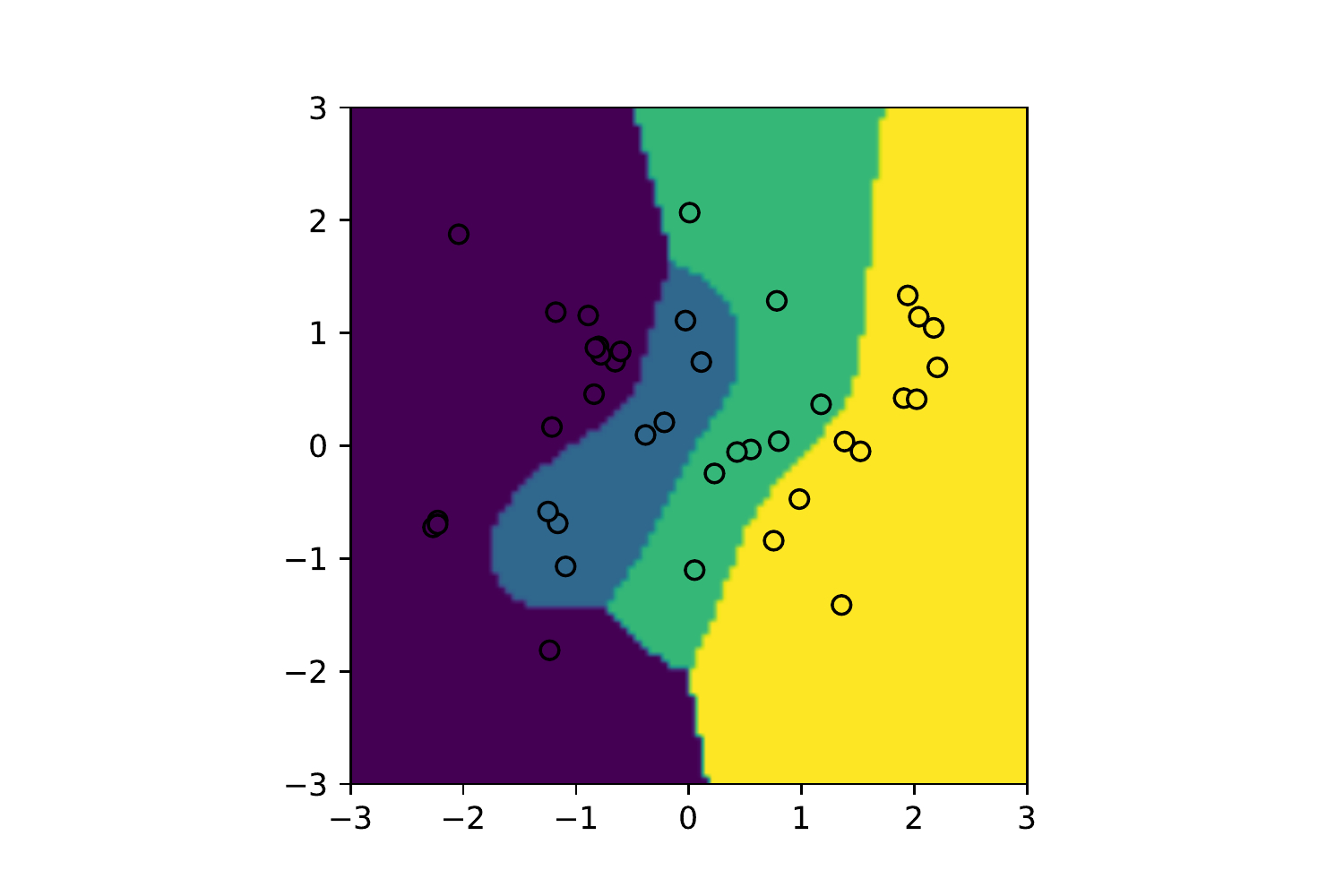}\hspace{30bp}

}\hspace{-60bp}\subfloat[$L=9,\lambda=0$]{\hspace{30bp}\includegraphics[viewport=65bp 10bp 335bp 310bp,clip,scale=0.35]{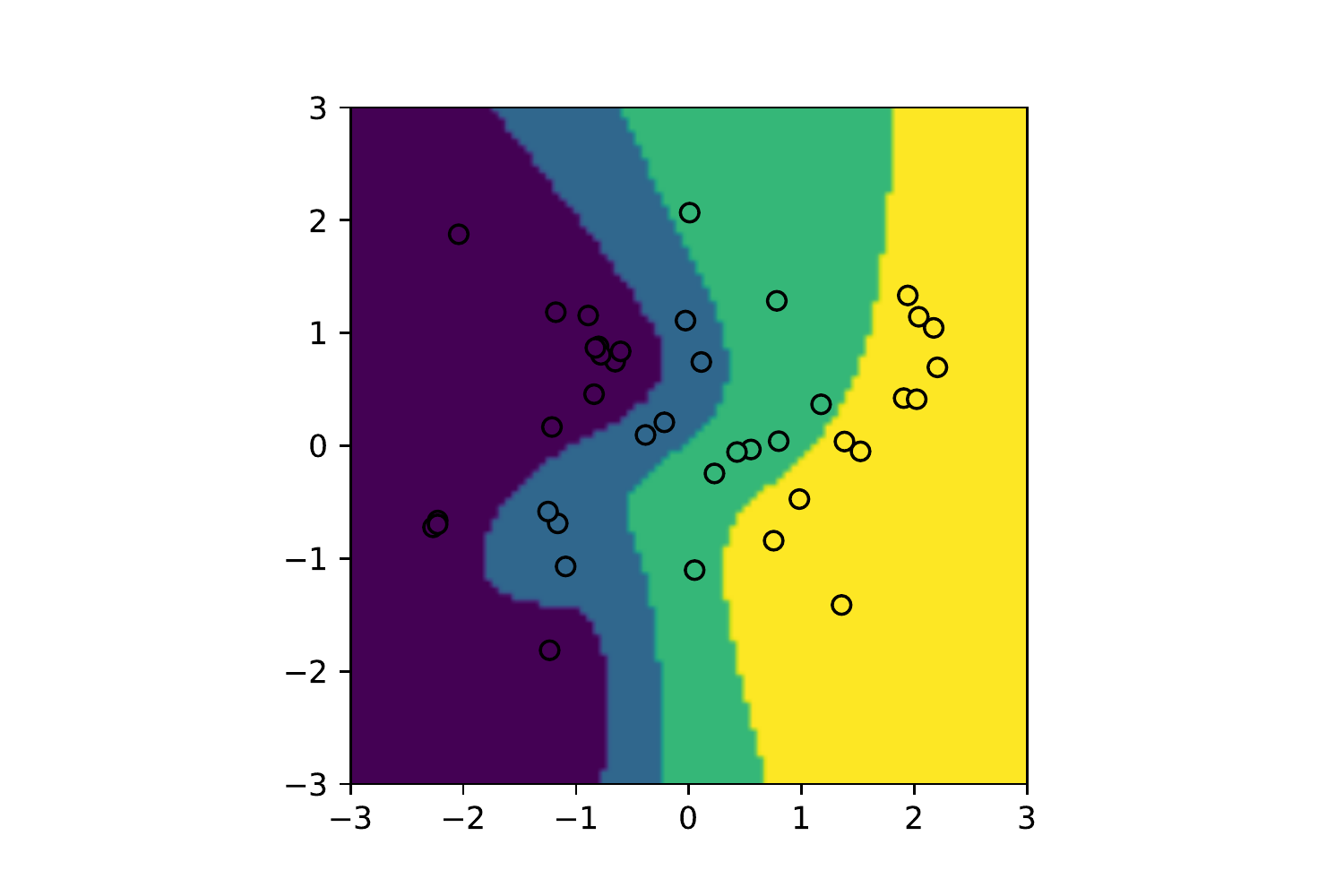}\hspace{30bp}

}\hspace{-60bp}\subfloat[$L=9,\lambda=10^{-3}$]{\hspace{30bp}\includegraphics[viewport=65bp 10bp 335bp 310bp,clip,scale=0.35]{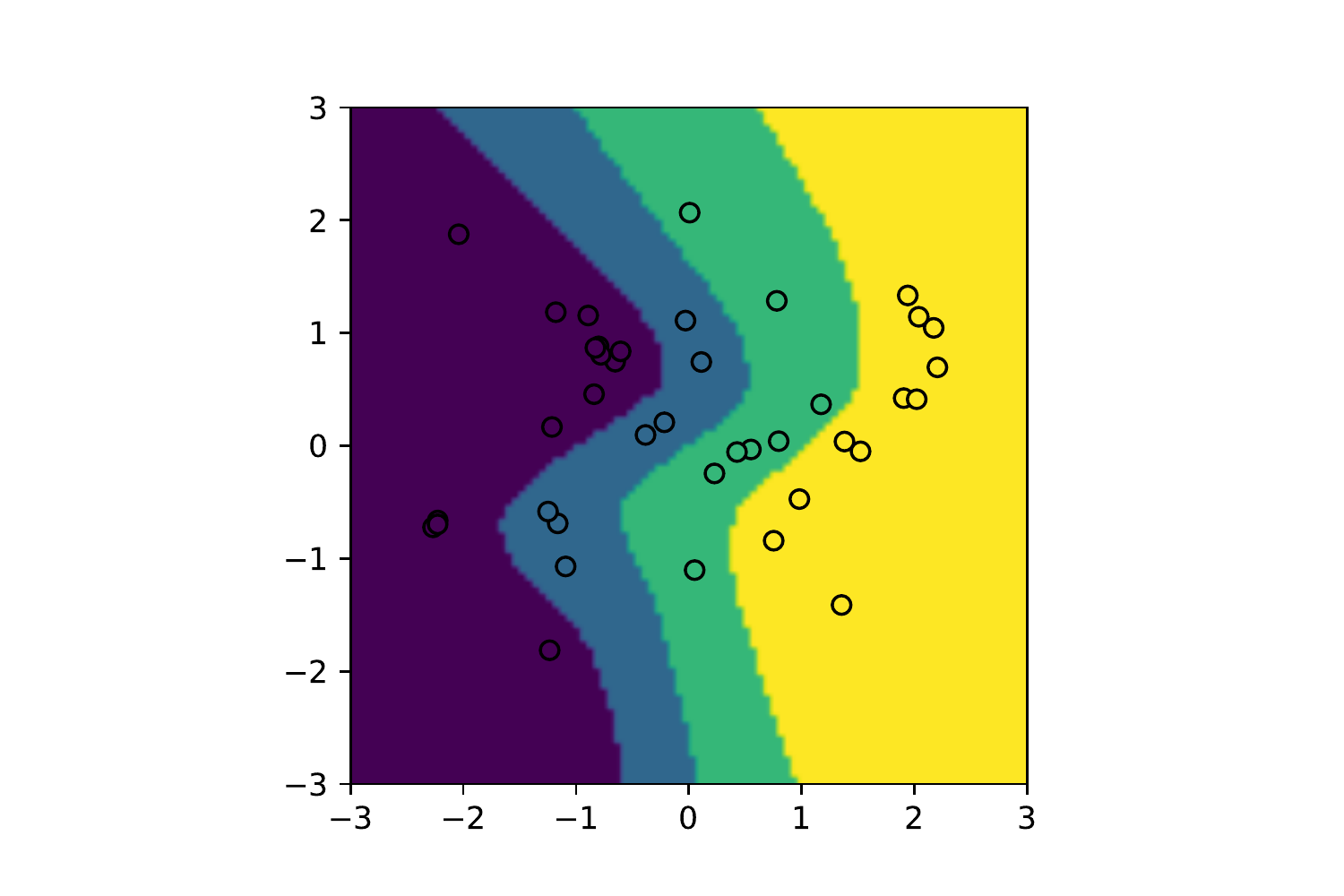}

\hspace{30bp}

}\hspace{-60bp}

\end{center}\caption{\label{fig:Classification}Classification on 4 classes (whose sampling
distribution are 4 identical inverted `S' shapes translated along
the $x$-axis) for two depths and with or without $L_{2}$-regularization.
The class boundaries in shallow networks \textbf{(A,B)} feature tripoints,
which are not observed in deeper networks \textbf{(C,D)}.}
\end{figure}

\begin{figure}
\vspace{-30bp}

\begin{center}

\subfloat[$L=5$]{\includegraphics[viewport=10bp 20bp 422bp 245bp,clip,scale=0.4]{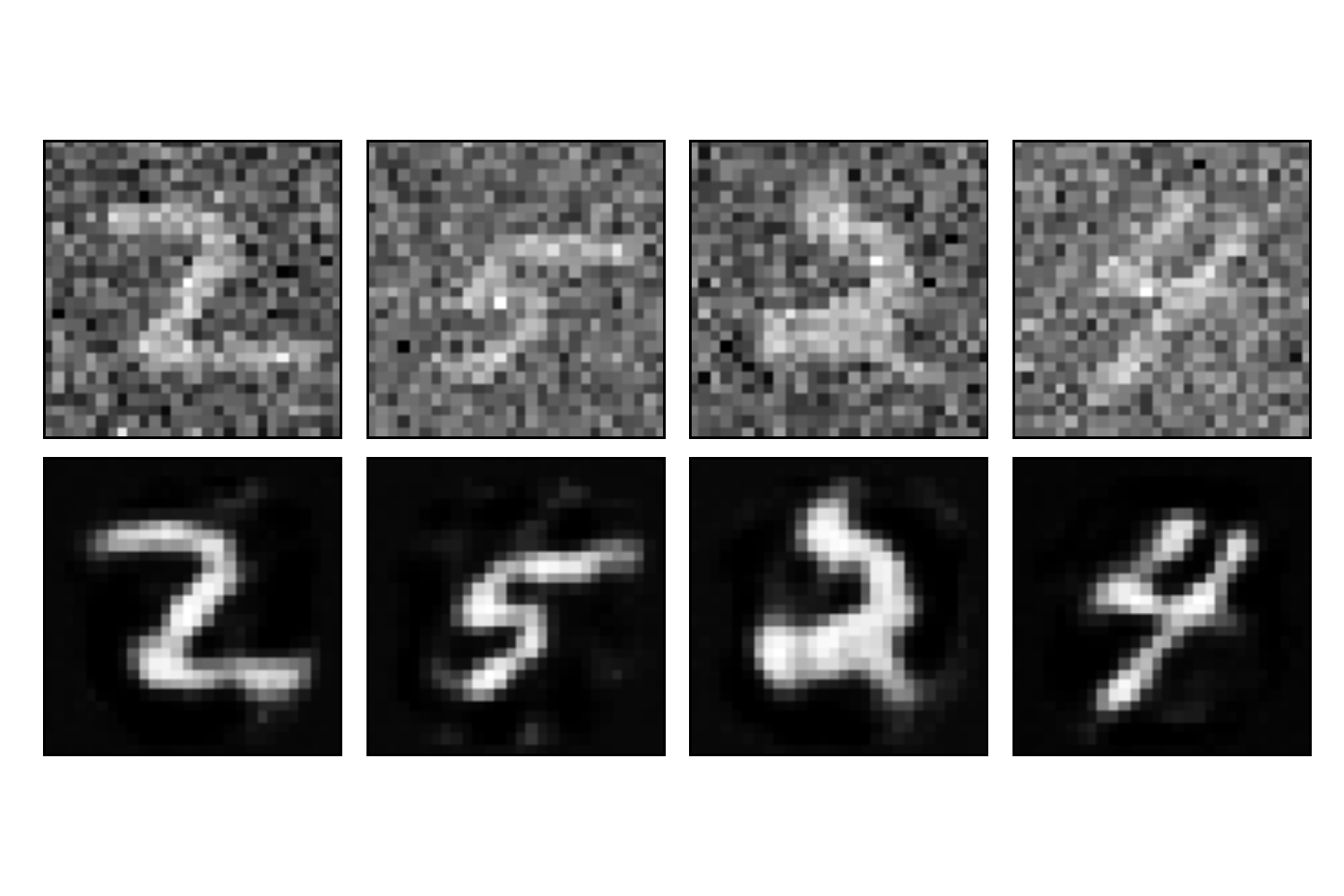}

}\subfloat[$L=6$]{\includegraphics[viewport=60bp 20bp 350bp 290bp,clip,scale=0.4]{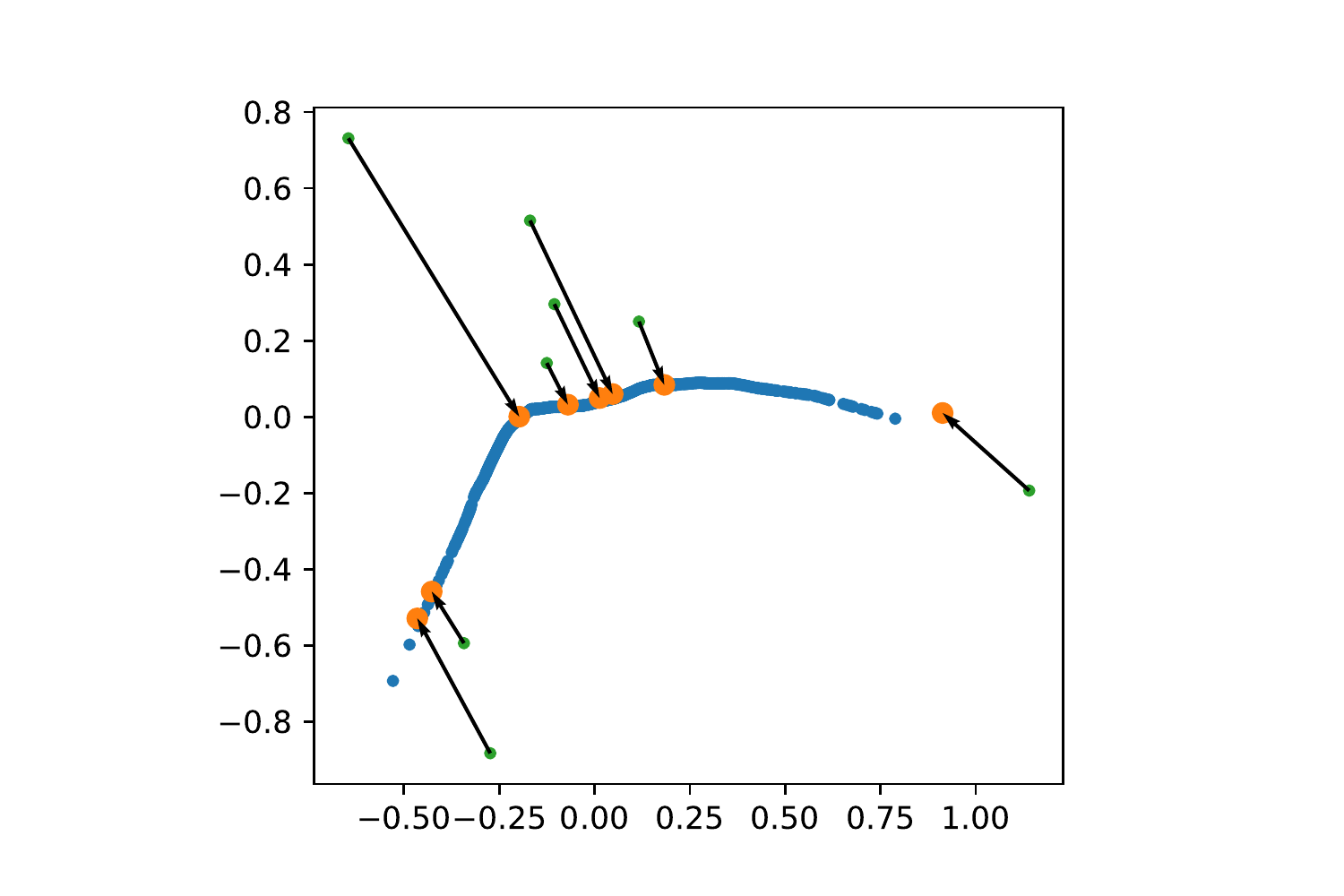}

}\subfloat[$L=2$]{\includegraphics[viewport=80bp 20bp 350bp 290bp,clip,scale=0.4]{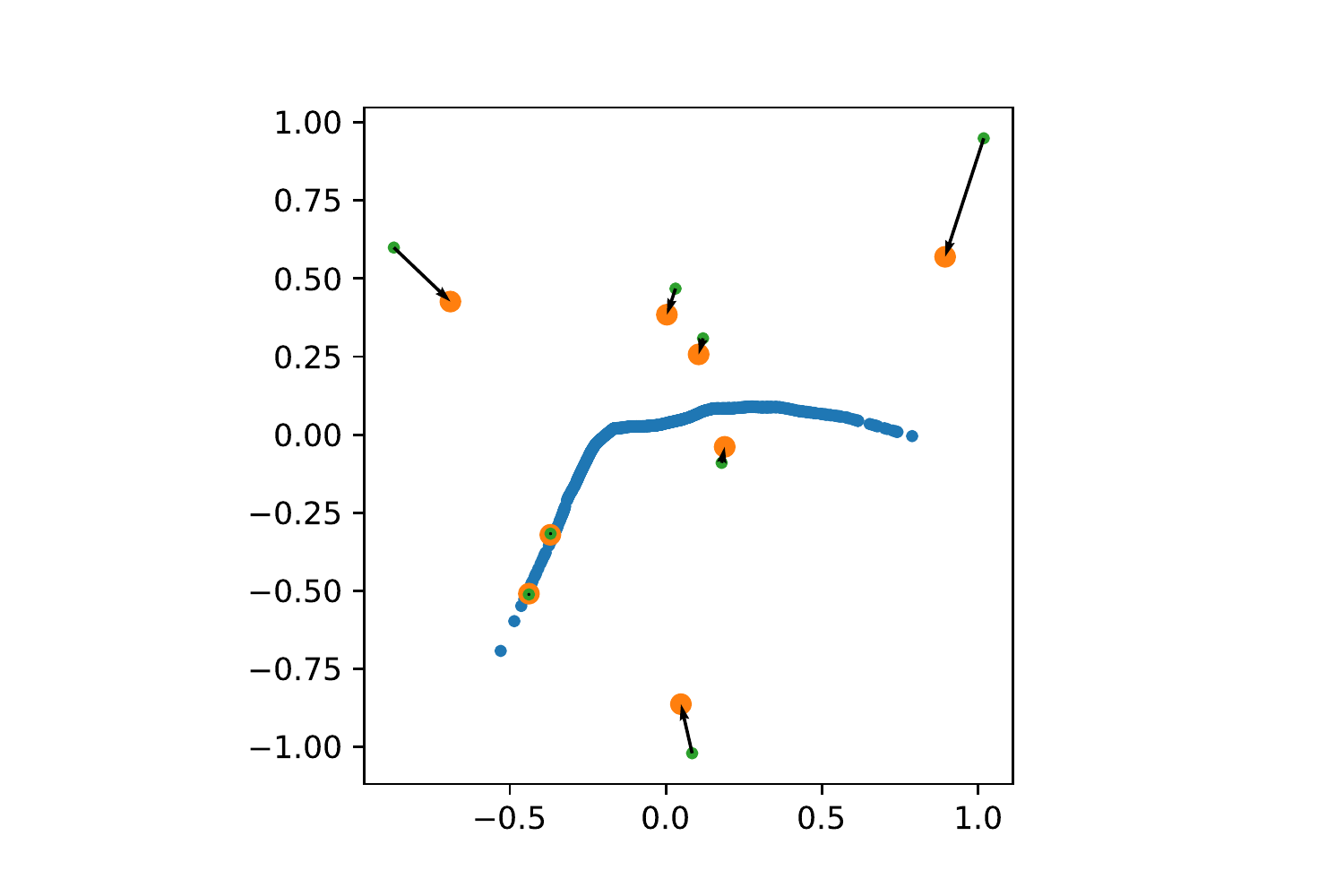}

}

\end{center}

\caption{Autoencoders trained on MNIST \textbf{(A)} and a 1D dataset on the
plane \textbf{(B, C)} with a ridge $\lambda=10^{-4}$. Plot \textbf{(A)}
shows noisy inputs in the first line with corresponding outputs below.
In plots \textbf{(B)} and \textbf{(C)} the blue dots are the training
data, and the green dots are random inputs that are mapped to the
orange dots pointed by the arrows. We see that for large depths \textbf{(A,
B)} the learned autoencoder is naturally denoising, projecting points
to the data distribution, which is not the case for shallow networks
\textbf{(C)}.}
\end{figure}

\subsection{Autoencoders}

Consider learning an autoender on data of the form $x=g(z)$ where
$z$ is sampled (with full dimensional support) in a latent space
$\mathbb{R}^{k}$ and $g:\mathbb{R}^{k}\to\mathbb{R}^{d}$ is an injective
FPLF. In this setting, the true rank is the intrinsic dimension $k$
of the data, since the minimal rank function that equals the identity
on the data distribution has rank $k$.

Assume that the learned autoencoder $\hat{f}:\mathbb{R}^{k}\to\mathbb{R}^{k}$
fits the data $f(x)=x$ for all $x=g(z)$ and recovers the rank $\mathrm{Rank}_{BN}\hat{f}=k$.
At any datapoint $x_{0}=g(z_{0})$ such that $g$ is differentiale
at $z_{0}$, the data support $g(\mathbb{R}^{k})$ is locally a $k$-dimensional
affine subspace $T=x_{0}+\mathrm{Im}Jg(z_{0})$. In the linear region
of $\hat{f}$ that contains $x_{0}$, $\hat{f}$ is an affine projection
to $T$ since it equals the identity when restricted to $T$ and its
Jacobian is rank $k$. This proves that rank recovering autoencoders
are naturally (locally) denoising.

\section{Conclusion}

We have shown that in infinitely deep networks, $L_{2}$-regularization
leads to a bias towards low-rank functions, for some notion of rank
on FPLFs. We have then shown a set of results that suggest that this
low-rank bias extends to large but finite depths. With the right depths,
this leads to `rank recovery', where the learned function has approximately
the same rank as the `true function'. We proposed a tentative explanation
for this rank recovery: for finite but large widths, the network is
biased towards function $f$ which can be factorized $f=h\circ g$
with both a small inner dimension $k$ and small complexity of $g$
and $h$. Finally, we have shown how rank recovery affects the topology
of the class boundaries in a classification task and leads to natural
denoising abilities in autoencoders.

\bibliographystyle{iclr2023_conference}
\bibliography{./../../main}

\appendix

\section{Notions of Rank}
\begin{claim}
From properties (2),(4) follows:
\begin{enumerate}
\item For any function $f$, one has $\mathrm{Rank}f\leq\min\{d_{in},d_{out}\}$.
\item For any bijection $\phi$ on $\mathbb{R}^{d}$, $\mathrm{Rank}\phi=\mathrm{Rank}\phi^{-1}=d$.
\item For any two bijections $\phi,\psi$ on $\mathbb{R}^{d_{in}}$ and
$\mathbb{R}^{d_{out}}$ resp. one has $\mathrm{Rank}\left(\psi\circ f\circ\phi\right)=\mathrm{Rank}f$.
\end{enumerate}
\end{claim}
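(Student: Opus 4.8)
The plan is to derive all three statements purely from properties (2) and (4), with the identity map serving as the bridge between them. The key observation is that the identity $\mathrm{id}_d$ on $\mathbb{R}^d$ is affine with matrix $I_d$, so property (4) gives $\mathrm{Rank}(\mathrm{id}_d)=\mathrm{Rank}(I_d)=d$. Once this is in hand, each item is a short application of property (2), used in the direction that identities are rank-maximal while composition can only decrease rank.

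For item 1, I would write $f=\mathrm{id}_{d_{out}}\circ f\circ\mathrm{id}_{d_{in}}$ and apply property (2) twice. Composing with $\mathrm{id}_{d_{in}}$ on the right gives $\mathrm{Rank}f\leq\mathrm{Rank}(\mathrm{id}_{d_{in}})=d_{in}$, and composing with $\mathrm{id}_{d_{out}}$ on the left gives $\mathrm{Rank}f\leq d_{out}$; together these yield $\mathrm{Rank}f\leq\min\{d_{in},d_{out}\}$. For item 2, I would use $\phi^{-1}\circ\phi=\mathrm{id}_d$: property (2) gives $d=\mathrm{Rank}(\mathrm{id}_d)=\mathrm{Rank}(\phi^{-1}\circ\phi)\leq\min\{\mathrm{Rank}\phi^{-1},\mathrm{Rank}\phi\}$, so both ranks are at least $d$, while item 1 supplies the matching upper bound $\mathrm{Rank}\phi,\mathrm{Rank}\phi^{-1}\leq d$. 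Equality follows.

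For item 3, set $g=\psi\circ f\circ\phi$. Peeling off $\psi$ and then $\phi$ with property (2) gives $\mathrm{Rank}g\leq\mathrm{Rank}(f\circ\phi)\leq\mathrm{Rank}f$. For the reverse inequality I would note that $f=\psi^{-1}\circ g\circ\phi^{-1}$, and since $\psi^{-1}$ and $\phi^{-1}$ are themselves bijections, the identical argument yields $\mathrm{Rank}f\leq\mathrm{Rank}g$; hence $\mathrm{Rank}(\psi\circ f\circ\phi)=\mathrm{Rank}f$. There is no genuine obstacle here beyond bookkeeping: the only point requiring care is to invoke exclusively properties (2) and (4) — not property (1), nor any concrete formula for the rank — and to record that every intermediate composition ($f\circ\phi$, $\psi^{-1}\circ g$, etc.) is again a legitimate FPLF so that property (2) applies at each step.
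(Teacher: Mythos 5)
Your proposal is correct and follows essentially the same route as the paper: establish $\mathrm{Rank}(\mathrm{id}_d)=d$ from property (4), derive item 1 by sandwiching $f$ between identities with property (2), item 2 from $\phi\circ\phi^{-1}=\mathrm{id}_d$, and item 3 by using the inverse bijections to reverse the composition inequality. The only cosmetic difference is that the paper peels off one bijection at a time ($f\circ\phi$, then $\psi\circ f$) while you handle both simultaneously via $f=\psi^{-1}\circ g\circ\phi^{-1}$; the underlying argument is identical.
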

\begin{proof}
1. By property 4, one has that $\mathrm{Rank}id=d$ for the identity
$id:\mathbb{R}^{d}\to\mathbb{R}^{d}$. By property (4), one has $\mathrm{Rank}f=\mathrm{Rank}(id\circ f\circ id)\leq\min\left\{ d_{in},\mathrm{Rank}f,d_{out}\right\} \leq\min\left\{ d_{in},d_{out}\right\} $.

2. We have $d=\mathrm{Rank}\left(\phi\circ\phi^{-1}\right)\leq\min\left\{ \mathrm{Rank}\phi,\mathrm{Rank}\phi^{-1}\right\} $
and $\mathrm{Rank}\phi\leq d$ as well as $\mathrm{Rank}\phi^{-1}\leq d$.
Therefore $\mathrm{Rank}\phi=\mathrm{Rank}\phi^{-1}=d$.

3. Let us only show $\mathrm{Rank}\left(f\circ\phi\right)=\mathrm{Rank}f$,
the other side follows from the same argument. We have $\mathrm{Rank}\left(f\circ\phi\right)\leq\min\left\{ \mathrm{Rank}f,d_{in}\right\} =\mathrm{Rank}f$
and $\mathrm{Rank}f=\mathrm{Rank}\left(f\circ\phi\circ\phi^{-1}\right)\leq\min\left\{ \mathrm{Rank}\left(f\circ\phi\right),d_{in}\right\} =\mathrm{Rank}\left(f\circ\phi\right)$,
thus proving $\mathrm{Rank}\left(f\circ\phi\right)=\mathrm{Rank}f$.
\end{proof}
\begin{prop}[Proposition 1 in the main]
We have 
\[
\mathrm{Rank}_{J}(f;\Omega)\leq\mathrm{Rank}_{BN}(f;\Omega).
\]
\end{prop}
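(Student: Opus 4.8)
The plan is to start from an optimal factorization witnessing the bottleneck rank and to bound the Jacobian rank on a single well-chosen linear region, deliberately avoiding any appeal to the chain rule through $g$ (which is where the real difficulty lies).

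First I would set $k=\mathrm{Rank}_{BN}(f;\Omega)$ and fix a factorization $f_{|\Omega}=(g\circ h)_{|\Omega}$ with $h:\mathbb{R}^{d_{in}}\to\mathbb{R}^{k}$ and $g:\mathbb{R}^{k}\to\mathbb{R}^{d_{out}}$ both FPLFs, as guaranteed by the definition of $\mathrm{Rank}_{BN}$. Since $Jf$ is constant on each linear region of $f$, the maximum defining $\mathrm{Rank}_{J}$ is attained on the interior of some linear region $R$; write $r=\mathrm{Rank}_{J}(f;\Omega)$ for this maximal rank. Because the linear regions of $h$ cover $\mathbb{R}^{d_{in}}$ and their interiors are open and dense, I can pick a linear region $R_{h}$ of $h$ whose interior meets the interior of $R$ in a nonempty open set $R'$. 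On $R'$ both maps are affine: $f(y)=Ay+b$ with $\mathrm{Rank}\,A=r$ (the value of $Jf$ on $R$), and $h(y)=By+c$ where $B$ is a $k\times d_{in}$ matrix, so $\mathrm{Rank}\,B\le k$.

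The key step is a collision argument that replaces the chain rule. For $y_{1},y_{2}\in R'$ with $y_{1}-y_{2}\in\ker B$ one has $h(y_{1})=h(y_{2})$, hence $f(y_{1})=g(h(y_{1}))=g(h(y_{2}))=f(y_{2})$, i.e. $A(y_{1}-y_{2})=0$. Since $R'$ is open, every sufficiently short vector of $\ker B$ arises as such a difference $y_{1}-y_{2}$, and by linearity this forces $\ker B\subseteq\ker A$. Consequently $r=\mathrm{Rank}\,A=d_{in}-\dim\ker A\le d_{in}-\dim\ker B=\mathrm{Rank}\,B\le k$, which is exactly $\mathrm{Rank}_{J}(f;\Omega)\le\mathrm{Rank}_{BN}(f;\Omega)$.

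The main obstacle is precisely the point this argument sidesteps: one cannot simply write $Jf(x)=Jg(h(x))\,Jh(x)$ and read off the rank bound, because $g$ need not be differentiable at $h(x)$, and since $h$ may collapse dimensions the preimage under $h$ of the (measure-zero) non-differentiability locus of $g$ can fail to be negligible, so there is no guarantee of a point in the maximal-rank region at which $g$ is differentiable. Working on a region where $f$ and $h$ are jointly affine and arguing through the inclusion $\ker B\subseteq\ker A$ avoids differentiating $g$ altogether. An alternative route would be to note that $f(R')=g(h(R'))$ with $h(R')\subseteq\mathbb{R}^{k}$ and that the Lipschitz map $g$ cannot raise Hausdorff dimension, so $r=\dim f(R')\le\dim h(R')\le k$; I prefer the kernel argument since it is elementary and stays entirely within linear algebra.
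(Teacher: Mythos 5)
Your proof is correct, but it takes a genuinely different route from the paper's. The paper's proof is a two-line chain-rule argument: it writes $Jf(x)=Jg(h(x))\,Jh(x)$ at a differentiable point and observes that the rank of the product is bounded by the inner dimension $k$, since $Jh(x)$ has only $k$ rows. You instead localize to an open set $R'$ on which $f$ and $h$ are simultaneously affine, say $f(y)=Ay+b$ and $h(y)=By+c$, and derive $\ker B\subseteq\ker A$ from the collision implication $h(y_{1})=h(y_{2})\Rightarrow f(y_{1})=f(y_{2})$, concluding $\mathrm{Rank}\,A\leq\mathrm{Rank}\,B\leq k$. The trade-off is that the paper's argument is shorter but, as you correctly point out, implicitly assumes $g$ is differentiable at $h(x)$ whenever $f$ is differentiable at $x$; this can fail, and since $h$ may collapse positive-measure sets onto the non-differentiability locus of $g$, one cannot simply perturb $x$ to restore it. Your kernel-inclusion argument (and the Hausdorff-dimension alternative you sketch) sidesteps differentiating $g$ altogether, so it is not merely a stylistic variant: it closes a real, if minor and fixable, gap in the paper's one-line justification, at the cost of a slightly longer setup involving the intersection of linear regions of $f$ and $h$.
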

\begin{proof}
Since $f=g\circ h$ with an inner dimension of $\mathrm{Rank}_{BN}(f;\Omega)$
then at any point $x$ where $f$ is differentiable, we have by the
chain rule
\[
Jf(x)=Jg(h(x))Jh(x).
\]
Clearly the rank of $Jf(x)$ is bounded by the inner dimension $\mathrm{Rank}_{BN}(f;\Omega)$.
\end{proof}
Let us now give an example of a function $f$ where the above inequality
is strict:
\begin{example}
Consider the piecewise linear function $f:\mathbb{R}^{2}\to\mathbb{R}^{2}$
which maps $x=(x_{0},x_{1})$ to $(x_{0},\mathrm{sign}(x_{1})\left|x_{0}\right|)$
if $\left|x_{0}\right|\geq\left|x_{1}\right|$ and to $(\mathrm{sign}(x_{0})\left|x_{1}\right|,x_{1})$
if $\left|x_{0}\right|<\left|x_{1}\right|$. 
\end{example}
\begin{proof}
One can easily check that this function is continuous and equals the
identity on the $x$-cross $X=\left\{ (x_{0},x_{1}):\left|x_{0}\right|=\left|x_{1}\right|\right\} $.
Inside the linear regions (i.e. outside of the $x$-cross and the
$+$-cross made up of the union of both axis) the Jacobian is rank
1, as a result the function $f$ satisfies $\mathrm{Rank}_{J}(f;\mathbb{R}^{2})=1$,
on the other hand $\mathrm{Rank}_{BN}(f;\mathbb{R}^{2})>1$ since
$\mathrm{Rank}_{BN}(f;\mathbb{R}^{2})\geq\mathrm{Rank}_{BN}\left(f;X\right)$
and since there are no continuous functions $g:X\to\mathbb{R}$ and
$h:\mathbb{R}\to X$ such that $h\circ g=id_{X}$ we know that $\mathrm{Rank}_{BN}\left(f;X\right)>1$.
We therefore know that $\mathrm{Rank}_{BN}(f;\mathbb{R}^{2})=2$ (since
$1<\mathrm{Rank}_{BN}(f;\mathbb{R}^{2})\leq2$).
\end{proof}
Finally, one can easily check that both $\mathrm{Rank}_{J}$ and $\mathrm{Rank}_{BN}$
satisfy properties 1-4.

\section{Representation Cost}
\begin{prop}[Proposition 3 in the main]
\label{prop:appendix_bound_repr_norm_Jacobian_schatten}Let $f$
be a piecewise linear function, then at any differentiable point $x$,
we have
\[
\left\Vert Jf(x)\right\Vert _{\nicefrac{2}{L}}^{\nicefrac{2}{L}}:=\sum_{k=1}^{\mathrm{Rank}Jf_{\mathbf{W}}(x)}s_{k}\left(Jf(x)\right)^{\frac{2}{L}}\leq\frac{1}{L}R(f;\Omega,\sigma_{a},L),
\]
where $s_{k}\left(Jf_{\mathbf{W}}(x)\right)$ is the $k$-th singular
value of the Jacobian $Jf_{\mathbf{W}}(x)$.
\end{prop}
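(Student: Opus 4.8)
The plan is to fix an optimal weight configuration $\mathbf{W}$ realizing the representation cost, that is, with $f_{\mathbf{W}|\Omega}=f_{|\Omega}$ and $\left\Vert \mathbf{W}\right\Vert ^{2}=R(f;\Omega,\sigma_{a},L)$, and to express the Jacobian at $x$ as a product of the weight matrices interleaved with the diagonal derivative matrices of the nonlinearity. Since a differentiable point $x$ of the FPLF lies in the interior of a linear region, the pre-activations keep a constant sign nearby, so the network is locally affine and, by the chain rule,
\[
Jf(x)=W_{L}D_{L-1}W_{L-1}\cdots D_{1}W_{1},
\]
where each $D_{\ell}=\mathrm{diag}(\sigma_{a}'(\tilde{\alpha}_{\ell}(x)))$ is diagonal with entries in $\{a,1\}$. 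Grouping $M_{\ell}:=D_{\ell}W_{\ell}$ for $\ell<L$ and $M_{L}:=W_{L}$ writes the Jacobian as a product $M_{L}\cdots M_{1}$ of $L$ factors. Because $|a|<1$ we have $\left\Vert D_{\ell}\right\Vert _{op}\leq1$, hence $\left\Vert M_{\ell}\right\Vert _{F}\leq\left\Vert W_{\ell}\right\Vert _{F}$, so it suffices to bound $\sum_{k}s_{k}(M_{L}\cdots M_{1})^{2/L}$ by $\frac{1}{L}\sum_{\ell}\left\Vert M_{\ell}\right\Vert _{F}^{2}$; the biases only contribute nonnegatively to $\left\Vert \mathbf{W}\right\Vert ^{2}$ and can be dropped.

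The core ingredient is the submultiplicativity of singular values under products (Horn's inequality / weak log-majorization): for every $m$,
\[
\prod_{k=1}^{m}s_{k}(M_{L}\cdots M_{1})\leq\prod_{k=1}^{m}\prod_{\ell=1}^{L}s_{k}(M_{\ell}).
\]
Writing $a_{k}=s_{k}(Jf(x))$ and $b_{k}=\prod_{\ell}s_{k}(M_{\ell})$, this is exactly weak log-majorization of $(a_{k})$ by $(b_{k})$. Since $\mathrm{Rank}\,Jf(x)\leq\mathrm{Rank}\,M_{\ell}$ for every $\ell$, all $b_{k}$ with $k\leq\mathrm{Rank}\,Jf(x)$ are strictly positive, so no degeneracy arises in the log. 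I would then invoke the standard fact that weak log-majorization is preserved under any $\psi$ for which $t\mapsto\psi(e^{t})$ is convex and nondecreasing. Taking $\psi(s)=s^{2/L}$, for which $\psi(e^{t})=e^{2t/L}$ is convex and increasing, yields
\[
\sum_{k}s_{k}(Jf(x))^{2/L}\leq\sum_{k}\Big(\prod_{\ell}s_{k}(M_{\ell})\Big)^{2/L}=\sum_{k}\prod_{\ell}s_{k}(M_{\ell})^{2/L}.
\]

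Finally I would apply AM-GM to each summand: for fixed $k$,
\[
\prod_{\ell=1}^{L}s_{k}(M_{\ell})^{2/L}=\Big(\prod_{\ell}s_{k}(M_{\ell})^{2}\Big)^{1/L}\leq\frac{1}{L}\sum_{\ell=1}^{L}s_{k}(M_{\ell})^{2},
\]
and summing over $k$ gives $\sum_{k}s_{k}(Jf(x))^{2/L}\leq\frac{1}{L}\sum_{\ell}\sum_{k}s_{k}(M_{\ell})^{2}=\frac{1}{L}\sum_{\ell}\left\Vert M_{\ell}\right\Vert _{F}^{2}\leq\frac{1}{L}\sum_{\ell}\left\Vert W_{\ell}\right\Vert _{F}^{2}\leq\frac{1}{L}\left\Vert \mathbf{W}\right\Vert ^{2}=\frac{1}{L}R(f;\Omega,\sigma_{a},L)$. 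The main obstacle is precisely the passage from the multiplicative Horn inequality (which controls only products of the top singular values) to the additive Schatten-quasinorm bound; this is where the convexity and monotonicity of $s\mapsto s^{2/L}$ under the exponential substitution are essential, and one must check that the relevant factor singular values are nonzero so the log-majorization transfer is valid. A minor technical point to dispatch at the outset is the reduction to a generic interior point of a linear region, where $f_{\mathbf{W}}$ genuinely admits the chain-rule product form and its Jacobian coincides with $Jf(x)$.
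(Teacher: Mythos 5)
Your proof is correct and follows essentially the same route as the paper's: write $Jf(x)=W_{L}D_{L-1}W_{L-1}\cdots D_{1}W_{1}$ at a point where the chain rule applies, absorb each $D_{\ell}$ into $W_{\ell}$ using $\|D_{\ell}\|_{op}\le 1$, and bound the $2/L$-Schatten quasinorm of the product by the average of the squared Frobenius norms of the factors. The only difference is that the paper imports that last matrix inequality, $L\|\tilde{W}_{L}\cdots\tilde{W}_{1}\|_{2/L}^{2/L}\le\sum_{\ell}\|\tilde{W}_{\ell}\|_{F}^{2}$, as a known consequence of the representation cost of deep linear networks, whereas you derive it from scratch via Horn's weak log-majorization, the transfer through the map $s\mapsto s^{2/L}$ (convex and increasing in $\log s$), and AM-GM --- a valid, self-contained derivation of the cited lemma.
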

\begin{proof}
For any weights $\mathbf{W}$ of a depth $L$ network such that $f_{\mathbf{W}}=f$
we have
\[
Jf(x)=W_{L}D_{L-1}(x)W_{L-1}\cdots W_{2}D_{1}(x)W_{1}
\]
where $D_{\ell}(x)$ is a $n_{\ell}\times n_{\ell}$ diagonal matrix
with diagonal vector equal to $\dot{\sigma_{a}}\left(\tilde{\alpha}_{\ell}(x)\right)$.

We know from \cite{soudry2018implicit} that the representation cost
of linear fully connected networks equals $L\left\Vert A\right\Vert _{p}^{p}$
for $\left\Vert A\right\Vert _{p}^{p}=\lambda_{1}^{p}+\dots+\lambda_{k}^{p}$
is the $L_{p}$-Schatten norm with $p=\frac{2}{L}$. In other terms,
we have for any matrices $\tilde{W}_{1},\dots,\tilde{W}_{L}$
\[
L\left\Vert \tilde{W}_{L}\cdots\tilde{W}_{1}\right\Vert _{p}^{p}\leq\left\Vert \tilde{W}_{L}\right\Vert _{F}^{2}+\dots+\left\Vert \tilde{W}_{1}\right\Vert _{F}^{2}.
\]
Applying it to $\tilde{W}_{L}=W_{L}$ and $\tilde{W}_{\ell}=D_{\ell}(x)W_{\ell}$
for $\ell=1,\dots,L-1$, we obtain
\begin{align*}
\left\Vert Jf(x)\right\Vert _{p}^{p} & \leq\frac{\left\Vert W_{L}\right\Vert _{F}^{2}+\left\Vert D_{L-1}(x)W_{L-1}\right\Vert _{F}^{2}+\dots+\left\Vert D_{1}(x)W_{1}\right\Vert _{F}^{2}}{L}\\
 & \leq\frac{\left\Vert W_{L}\right\Vert _{F}^{2}+\left\Vert W_{L-1}\right\Vert _{F}^{2}+\dots+\left\Vert W_{1}\right\Vert _{F}^{2}}{L}
\end{align*}
since $\left\Vert D_{\ell}(x)\right\Vert _{op}\leq1$.

Note that this result applies for any widths $n_{1},\dots,n_{L-1}$.
\end{proof}
\begin{thm}[first part of Theorem 1 in the main]
\label{thm:sandwich-bound-rank-representation-cost}We have
\[
\mathrm{Rank}_{J}(f;\Omega)\leq\lim_{L\to\infty}\frac{R(f;\Omega,\sigma_{a},L)}{L}\leq\mathrm{Rank}_{BN}(f;\Omega).
\]
\end{thm}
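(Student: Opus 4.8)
The plan is to prove the two inequalities separately: I would obtain the lower bound as an immediate consequence of Proposition~\ref{prop:appendix_bound_repr_norm_Jacobian_schatten}, and the upper bound by an explicit three-part network construction whose middle block contributes the dominant norm.

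For the lower bound, I would pick a point $x^{*}\in\Omega$ at which the Jacobian attains its maximal rank, so that $r:=\mathrm{Rank}\,Jf(x^{*})=\mathrm{Rank}_{J}(f;\Omega)$ and $Jf(x^{*})$ has exactly $r$ nonzero singular values $s_{1},\dots,s_{r}>0$. Proposition~\ref{prop:appendix_bound_repr_norm_Jacobian_schatten} then gives $\frac{1}{L}R(f;\Omega,\sigma_{a},L)\ge\sum_{k=1}^{r}s_{k}(Jf(x^{*}))^{2/L}$. Since each $s_{k}>0$, we have $s_{k}^{2/L}=\exp(\tfrac{2}{L}\log s_{k})\to1$ as $L\to\infty$, so the right-hand side tends to $r$, yielding $\liminf_{L\to\infty}\frac{1}{L}R(f;\Omega,\sigma_{a},L)\ge\mathrm{Rank}_{J}(f;\Omega)$.

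For the upper bound, let $k=\mathrm{Rank}_{BN}(f;\Omega)$ and fix a factorization $f_{|\Omega}=(h\circ g)_{|\Omega}$ with FPLFs $g:\R^{d_{in}}\to\R^{k}$ and $h:\R^{k}\to\R^{d_{out}}$ realizing this inner dimension. Because $g$ and $h$ are FPLFs, each is represented by some finite network, of depths $L_{g},L_{h}$ and finite parameter norms independent of $L$. I would then assemble a depth-$L$ network in three blocks: the first $L_{g}$ layers compute $g$, followed by a constant shift $b$ chosen so that $g(x)+b$ is coordinatewise positive for all $x\in\Omega$ (possible since $\Omega$ is bounded and $g$ continuous); the middle $L-L_{g}-L_{h}$ layers have width $k$ with weight $W_{\ell}=I_{k}$ and zero bias; the last $L_{h}$ layers compute $w\mapsto h(w-b)$. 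The crucial point is that the shift makes every middle pre-activation nonnegative, so $\sigma_{a}$ acts as the identity there and the middle block transmits the signal unchanged, each middle layer contributing exactly $\|I_{k}\|_{F}^{2}=k$ to the norm. Hence the total parameter norm is $k(L-L_{g}-L_{h})$ plus an $L$-independent constant, giving $\frac{1}{L}R\le\frac{k(L-L_{g}-L_{h})}{L}+O(1/L)\to k$, i.e. $\limsup_{L\to\infty}\frac{1}{L}R(f;\Omega,\sigma_{a},L)\le\mathrm{Rank}_{BN}(f;\Omega)$. Combining the two displays yields the sandwich (and pins the limit itself whenever the two ranks coincide, as for $f=\psi\circ A\circ\phi$, using Proposition~1's inequality $\mathrm{Rank}_{J}\le\mathrm{Rank}_{BN}$).

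I expect the main obstacle to be the upper-bound construction, specifically the verification that the middle layers genuinely implement the identity despite the nonlinearity: a bare $W_{\ell}=I_{k}$ does not pass a sign-changing signal through $\sigma_{a}$ unchanged. The positivity trick, shifting once on entry and undoing the shift inside $h$, resolves this at a cost of exactly $k$ per middle layer, which is what makes the middle block dominate while the finite contributions of $g$ and $h$ wash out after dividing by $L$. A secondary point to handle carefully is that the representation cost only needs to match $f$ on $\Omega$, so the shift need only succeed on the bounded set $g(\Omega)$ and the network's behavior off $\Omega$ is irrelevant.
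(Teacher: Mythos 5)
Your proposal is correct and follows essentially the same route as the paper: the lower bound via Proposition~\ref{prop:appendix_bound_repr_norm_Jacobian_schatten} evaluated at a point of maximal Jacobian rank (with $s_{k}^{2/L}\to1$), and the upper bound via the three-block construction with a translation into the positive quadrant so that the middle identity layers pass through $\sigma_{a}$ unchanged, each costing exactly $k$. The only differences are cosmetic (your factorization is written $h\circ g$ where the paper writes $g\circ h$, and you spell out the limit of the Schatten quasi-norm more explicitly).
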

\begin{proof}
\textbf{First inequality:} Take a point $x$ such that $\mathrm{Rank}(Jf(x))=\mathrm{Rank}_{J}(f;\Omega)$,
then Proposition \ref{prop:appendix_bound_repr_norm_Jacobian_schatten}
implies that $\frac{R(f;\Omega,\sigma_{a},L)}{L}\geq\left\Vert Jf(x)\right\Vert _{p}^{p}$.
Letting $L\to\infty$ on both sides leads to the bound $\lim_{L\to\infty}\frac{R(f;\Omega,\sigma_{a},L)}{L}\geq\mathrm{Rank}(Jf(x))=\mathrm{Rank}_{J}(f;\Omega)$
as needed. 

This lower bound applies to any widths $n_{1}(L),\dots,n_{L-1}(L)$,
of course if the widths are too small, it might be impossible to represent
$f$, in which case $R(f;\Omega,\sigma_{a},L)=\infty$.

\textbf{Second Inequality:} Fix a decomposition $f=g\circ h$ with
minimal inner dimension and such that $h(\Omega)\subset\mathbb{R}_{+}^{\mathrm{Rank}(f;\Omega)}$
(we need to be in the upper quadrant to represent the identity on
$h(\Omega)$ efficiently, and since $\Omega$ is bounded, one can
always translate the output of $h$ to be in the upper quadrant).

Corollary \ref{thm:piecewise-linear-func-representation} tells us
that there are two networks of finite depths $L_{h}$ and $L_{g}$
(with parameters $\mathbf{W}_{h}$ and $\mathbf{W}_{g}$) which represent
$h$ and $g$, for any depth $L$ larger than $L_{h}+L_{g}$ we can
construct a network of depth $L$ which represents $f$ by concatenating
the network that the represents $h$, followed by $L-L_{h}-L_{g}$
identity weight matrices of dimension $\mathrm{Rank}(f;\Omega)\times\mathrm{Rank}(f;\Omega)$
and finally the network representing $g$. The norm of the parameters
of this network is $\left\Vert \mathbf{W}_{h}\right\Vert ^{2}+(L-L_{h}-L_{g})\mathrm{Rank}(f;\Omega)+\left\Vert \mathbf{W}_{g}\right\Vert ^{2}$.
We therefore have the bound
\[
R(f;\Omega,\sigma_{a},L)\leq\left\Vert \mathbf{W}_{h}\right\Vert ^{2}+(L-L_{h}-L_{g})\mathrm{Rank}_{BN}(f;\Omega)+\left\Vert \mathbf{W}_{g}\right\Vert ^{2}
\]
 divinding both sides by $L$ and letting $L$ grow to infinity, we
obtain the inequality $\lim_{L\to\infty}\frac{R(f;\Omega,\sigma_{a},L)}{L}\leq\mathrm{Rank}_{BN}(f;\Omega)$.

For the upper bound to apply, the widths $n_{\ell}$ of the network
in the first part must be larger than some threshold that depends
on the number of linear regions in $h$, in the middle part the widths
must be larger than $k$ and in the last part they must be above a
threshold that depends on the number of linear regions in $g$ \cite{he_2018_relu_piecewise_lin}.
Note that in each of these regions the minimal with required does
not depend on the depth.
\end{proof}
Let us now show that the limiting rescaled representation cost $R_{\infty}(f;\Omega,\sigma_{a}):=\lim_{L\to\infty}\frac{R(f;\Omega,\sigma_{a},L)}{L}$
satisfies all properties of rank except the first one (though it might
actually satisfy it):
\begin{thm}[second part of Theorem 1 in the main]
 We have for any piecewise linear functions $f,g$:
\begin{enumerate}
\item $R_{\infty}(f\circ g;\Omega,\sigma_{a})\leq\min\{R_{\infty}(f;g(\Omega),\sigma_{a}),R_{\infty}(g;\Omega,\sigma_{a})\}$.
\item $R_{\infty}(f+g;\Omega,\sigma_{a})\leq R_{\infty}(f;\Omega,\sigma_{a})+R_{\infty}(g;\Omega,\sigma_{a})$.
\item If $f$ is affine ($f(x)=Ax+b$) then $R_{\infty}\left(f;\Omega,\sigma_{a}\right)=\mathrm{Rank}A$.
\end{enumerate}
\end{thm}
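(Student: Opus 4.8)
The plan is to treat the three items separately: property 3 falls out immediately from the sandwich bound of Theorem \ref{thm:sandwich-bound-rank-representation-cost}, while properties 1 and 2 are proved by explicit network constructions in which the leading $O(L)$ term of the parameter norm realizes the claimed bound and every other contribution is lower order in $L$, hence negligible after dividing by $L$. For property 3, if $f(x)=Ax+b$ then $Jf(x)\equiv A$, so $\mathrm{Rank}_J(f;\Omega)=\mathrm{Rank}\,A$; factoring $A=BC$ with inner dimension $\mathrm{Rank}\,A$ gives $\mathrm{Rank}_{BN}(f;\Omega)\le\mathrm{Rank}\,A$, and the reverse follows from $\mathrm{Rank}_J\le\mathrm{Rank}_{BN}$. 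Both ranks thus equal $\mathrm{Rank}\,A$, and Theorem \ref{thm:sandwich-bound-rank-representation-cost} squeezes $R_\infty(f;\Omega,\sigma_a)$ between them, yielding $R_\infty(f;\Omega,\sigma_a)=\mathrm{Rank}\,A$.

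For property 1 I would prove $R_\infty(f\circ g;\Omega,\sigma_a)\le R_\infty(g;\Omega,\sigma_a)$ and $R_\infty(f\circ g;\Omega,\sigma_a)\le R_\infty(f;g(\Omega),\sigma_a)$ separately; their minimum is the claim. In each bound one of the two functions is represented by a \emph{fixed} finite network (possible because FPLFs are exactly representable, by the Remark on $\sigma_a$) and the other by a depth-$(L-O(1))$ near-optimal network, the two being concatenated into one depth-$L$ network. Since the network output is the last \emph{pre}-activation whereas the input of an appended block is an \emph{activation}, a nonlinearity is applied at the junction; I would neutralize it using that $\sigma_a$ is the identity on the positive orthant, translating the (bounded) intermediate values into $\mathbb{R}_+$ by a fixed shift and undoing the shift through the following block's bias. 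Equivalently one can reconstruct the signal via the identity $z=\frac{1}{1+a}(\sigma_a(z)-\sigma_a(-z))$. The dominant $O(L)$ term is exactly the cost of the deep block, $R(g;\Omega,\sigma_a,L-O(1))$ or $R(f;g(\Omega),\sigma_a,L-O(1))$, so dividing by $L$ and letting $L\to\infty$ gives both inequalities.

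For property 2 I would write $f+g=S\circ(f,g)$, where $(f,g)\colon x\mapsto(f(x),g(x))$ and $S(u,v)=u+v$ is a fixed linear map. Running near-optimal networks for $f$ and $g$ in parallel with block-diagonal weights (sharing the input, stacking the outputs into $\mathbb{R}^{2d_{out}}$) represents $(f,g)$ with parameter norm exactly $R(f;\Omega,\sigma_a,L)+R(g;\Omega,\sigma_a,L)$, since no outputs are merged and hence no bias cross-terms appear; dividing by $L$ gives $R_\infty((f,g);\Omega,\sigma_a)\le R_\infty(f;\Omega,\sigma_a)+R_\infty(g;\Omega,\sigma_a)$. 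Post-composing with the fixed linear $S$ and invoking the $R_\infty(S\circ h)\le R_\infty(h)$ direction of property 1 then closes the argument.

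The main obstacle is making the junction overhead genuinely $o(L)$. The easy cases are fine: when the \emph{outer} block is fixed, the translation perturbs only a fixed first layer and a deep block's output bias, costing $O(1)+O(\sqrt L)$. The delicate case is $R_\infty(f\circ g;\Omega,\sigma_a)\le R_\infty(f;g(\Omega),\sigma_a)$, where the \emph{deep} block is post-composed: perturbing its input passes through its first weight matrix $W_1$, and for a general optimal network $\|W_1\|$ may be as large as $O(\sqrt L)$, inflating the correction to $O(L)$. I would resolve this by passing to a \emph{balanced} representation: the rescaling symmetry $W_\ell\mapsto(c_\ell/c_{\ell-1})W_\ell$, $b_\ell\mapsto c_\ell b_\ell$ of homogeneous networks preserves the function without increasing the total weight norm, and equalizing the per-layer weight norms forces $\|W_\ell\|^2=O(\|\mathbf{W}\|^2/L)=O(1)$ for every $\ell$. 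With each layer's weights of bounded norm, the positive-orthant translation (or the split reconstruction, whose first layer is $\frac{1}{1+a}[W_1\mid -W_1]$) costs only $O(1)$ at the junction, which vanishes after division by $L$. Careful bookkeeping of the bias terms under this rescaling is the one place where the argument requires genuine care.
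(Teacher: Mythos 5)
Your proof follows essentially the same route as the paper's: property 3 via the sandwich bound of Theorem \ref{thm:sandwich-bound-rank-representation-cost}, property 1 by concatenating a fixed finite block with a deep near-optimal block and neutralizing the junction nonlinearity by translating the intermediate values into the positive orthant, and property 2 by a parallel block-diagonal construction. The two minor differences are both harmless: your detour through $S\circ(f,g)$ is equivalent to the paper's direct merge of the two output layers, and your balancing step forcing $\left\Vert W_{1}\right\Vert ^{2}=O(1)$ for the deep block actually supplies a justification for a point the paper only asserts (namely that the translation at the junction changes the parameter norm by an amount that is $o(L)$).
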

\begin{proof}
\textbf{1.} Without loss of generality, we can translate the output
of $g$ and the input of $f$ (keeping the same composition $f\circ g$)
so that $g(\Omega)$ lies in the upper quadrant $\mathbb{R}_{+}^{m}$
where $m$ is the inner dimension. This translation changes the parameter
norm by a value which is constant in $L$, it therefore does not matter
in the $L\to\infty$ limit of $\nicefrac{\left\Vert \mathbf{W}\right\Vert ^{2}}{L}$. 

Assume $R_{\infty}(f;\Omega,\sigma_{a})\leq R_{\infty}(g;\Omega,\sigma_{a})$
(the other case can be proved with the same argument) and fix a network
of depth $L_{0}$ and parameters $\mathbf{W}_{0}$ that represents
the function $g$. For any $L$ sufficiently large we consider the
network made up of the composition of the fixed network followed by
a network of depth $L-L_{0}$ with weigths $\mathbf{W}'$ which represents
$f$ with minimal parameter norm, i.e. $\left\Vert \mathbf{W}'\right\Vert ^{2}=R(f;g(\Omega),\sigma_{a},L-L_{0})$.
The norm of this composed network is $\left\Vert \mathbf{W}_{0}\right\Vert ^{2}+R(f;g(\Omega),\sigma_{a},L-L_{0})$,
in the $L\to\infty$ limit, this implies $R_{\infty}(f\circ g;\Omega,\sigma_{a})\leq R_{\infty}(f;g(\Omega),\sigma_{a})$
as needed.

\textbf{2.} For any sufficiently large depth $L$ consider two networks
of depth $L$ with parameters $\mathbf{W}_{f}$ and $\mathbf{W}_{g}$
which represent the functions $f$ and $g$ with minimal parameter
norms, i.e. $\left\Vert \mathbf{W}_{f}\right\Vert ^{2}=R(f;\Omega,\sigma_{a},L)$
and $\left\Vert \mathbf{W}_{g}\right\Vert ^{2}=R(g;\Omega,\sigma,L)$.
We then consider the network obtained by putting the two network in
'parallel', i.e. the first weight matrix is given by the concatenation
$\left(\begin{array}{c}
W_{f,1}\\
W_{g,1}
\end{array}\right)$, the weight matrices of the middle layers are of the form $\left(\begin{array}{cc}
W_{f,\ell} & 0\\
0 & W_{g,\ell}
\end{array}\right)$ for all $\ell=2,\dots,L-1$ and the last weight matrix is given by
$\left(\begin{array}{cc}
W_{f,L} & W_{g,L}\end{array}\right)$. This new network represents the function $f+g$ and has parameter
norm $\left\Vert \mathbf{W}_{f}\right\Vert ^{2}+\left\Vert \mathbf{W}_{g}\right\Vert ^{2}$,
which implies the bound $R(f+g;\Omega,\sigma_{a},L)\leq\left\Vert \mathbf{W}_{f}\right\Vert ^{2}+\left\Vert \mathbf{W}_{g}\right\Vert ^{2}=R(f;\Omega,\sigma_{a},L)+R(g;\Omega,\sigma_{a},L)$
and in the limit $R_{\infty}(f+g;\Omega,\sigma_{a})\leq R_{\infty}(f;\Omega,\sigma_{a})+R_{\infty}(g;\Omega,\sigma_{a})$.

\textbf{3.} This point follows from Theorem \ref{thm:sandwich-bound-rank-representation-cost},
since for affine functions both notions of rank agree $\mathrm{Rank}_{J}(f;\Omega)=\mathrm{Rank}_{BN}(f;\Omega)=\mathrm{Rank}A$,
the same must be true for the limiting rescaled representation cost
$R_{\infty}(f;\Omega,\sigma_{a})$.

Regarding the widths required for these results to apply, the minimal
widths are the one described by the construction in the proofs. Since
these constructions can be included into any wider network by adding
zero neurons (neurons with zero incoming weights and zero outcoming
weights), these results also apply to any larger widths.
\end{proof}

\section{Global Minima are Almost Rank 1}

Consider a global minimizer $\hat{\mathbf{W}}$ of the regression
problem $\mathcal{L}(\mathbf{W})=\frac{1}{N}\sum\left(f_{\mathbf{W}}(x_{i})-y_{i}\right)^{2}+\frac{\lambda}{L}\left\Vert \mathbf{W}\right\Vert ^{2}$
we will now show that if the depth $L$ is large enough, then the
function $f_{\hat{\mathbf{W}}}$ is in a sense almost rank 1 w.r.t.
both notions of rank ($\mathrm{Rank}_{J}$ and $\mathrm{Rank}_{BN}$
).
\begin{prop}[Proposition 2 in the main]
For the regression problem $\mathcal{L}_{\lambda}(\mathbf{W})=\frac{1}{N}\sum\left(f_{\mathbf{W}}(x_{i})-y_{i}\right)^{2}+\frac{\lambda}{L}\left\Vert \mathbf{W}\right\Vert ^{2}$
there is a constant $C_{N}$ (which depends only on the inputs $x_{i}$
and outputs $y_{i}$) such that for $L\geq\left\lceil \log_{2}(n_{0}+1)\right\rceil +2$,
we have\textbf{
\[
\inf_{\mathbf{W}}\mathcal{L}_{\lambda}(\mathbf{W})\leq\lambda\left(1+\frac{C_{N}}{L}\right).
\]
}
\end{prop}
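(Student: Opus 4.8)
The plan is to bound the infimum from above by evaluating the loss at one explicitly chosen weight configuration. Since $\inf_{\mathbf{W}}\mathcal{L}_\lambda(\mathbf{W})$ is at most the value of $\mathcal{L}_\lambda$ at any single $\mathbf{W}$, it suffices to produce a $\mathbf{W}$ that (i) fits the data exactly, so the MSE term vanishes, and (ii) has rescaled norm $\tfrac1L\|\mathbf{W}\|^2$ close to $1$. Concretely I would take $\mathbf{W}$ to be a minimal-norm representation of a $\mathrm{Rank}_{BN}$-$1$ function $\hat f$ that interpolates the training set, i.e. $\hat f(x_i)=y_i$ for every $i$. For such a $\mathbf{W}$ the data-fitting term is $0$, hence $\mathcal{L}_\lambda(\mathbf{W})=\tfrac{\lambda}{L}\|\mathbf{W}\|^2=\tfrac{\lambda}{L}R(\hat f;\Omega,\sigma_a,L)$, and the whole proposition reduces to showing $R(\hat f;\Omega,\sigma_a,L)\le L+C_N$ for a data-dependent constant $C_N$.

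First I would construct the interpolant as a composition $\hat f=h\circ g$ of inner dimension $1$. The map $g:\mathbb{R}^{n_0}\to\mathbb{R}$ is a piecewise linear projection onto a line, chosen so that the images $g(x_i)$ are pairwise distinct (``no overlap'') and, after a harmless translation, all lie in $\mathbb{R}_+$; the map $h:\mathbb{R}\to\mathbb{R}^{d_{out}}$ is then any FPLF sending each $g(x_i)$ to its prescribed output $y_i$, for instance the piecewise linear interpolation through the images sorted along the line. Both $g$ and $h$ are FPLFs, hence each is representable by a network of some fixed depth ($L_g$ and $L_h$) and fixed parameter norm ($\|\mathbf{W}_g\|^2$ and $\|\mathbf{W}_h\|^2$), all independent of $L$ and determined only by the $x_i$ and $y_i$.

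Next I would invoke the three-part construction from the proof of Theorem~\ref{thm:infinite_depth_representation_cost_sandwich_bound}: a depth-$L_g$ block representing $g$, a depth-$L_h$ block representing $h$, and in between $L-L_g-L_h$ copies of the $1\times1$ identity weight, which act as the identity precisely because the intermediate scalar activations have been arranged to be nonnegative. The total squared norm of this network is $\|\mathbf{W}_g\|^2+(L-L_g-L_h)\cdot 1+\|\mathbf{W}_h\|^2$. Setting $C_N:=\|\mathbf{W}_g\|^2+\|\mathbf{W}_h\|^2$ then gives $R(\hat f;\Omega,\sigma_a,L)\le L+C_N$, whence $\mathcal{L}_\lambda(\mathbf{W})\le\lambda\bigl(1+\tfrac{C_N}{L}\bigr)$, as claimed. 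The threshold $L\ge\lceil\log_2(n_0+1)\rceil+2$ is exactly the condition that makes the number of interior identity layers $L-L_g-L_h$ nonnegative, i.e. that there is room to pad the two-part network $h\circ g$ with identities; taking $\mathbf{n}$ large enough (as in the width remarks) ensures all blocks are representable.

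The main obstacle is the explicit construction of the projection $g$ and the depth bookkeeping behind the threshold $\lceil\log_2(n_0+1)\rceil+2$. Collapsing the $n_0$-dimensional input onto a single \emph{nonnegative} coordinate while keeping the $N$ data images distinct cannot in general be achieved by one homogeneous layer (note $\sigma_a$ need not be injective when $a<0$), so one must give a deterministic reduction valid for \emph{any} configuration of training points; a logarithmic-depth scheme that successively reduces the input dimension accounts for the $\lceil\log_2(n_0+1)\rceil$ layers, with the remaining two layers used to force the image into $\mathbb{R}_+$ and to connect to $h$. The key checks are that this $g$ works for arbitrary (possibly degenerate) data and that $\|\mathbf{W}_g\|^2$ stays finite and $L$-independent; once these hold, the interior identity layers dominate the norm, $C_N$ absorbs all the remaining (bounded) contributions, and $\tfrac1L R(\hat f;\Omega,\sigma_a,L)\to1$ as $L\to\infty$.
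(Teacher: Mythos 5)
Your proposal is correct and follows essentially the same route as the paper: exhibit a BN-rank-$1$ interpolant $h\circ g$ (project to a line with no overlap, then map to the outputs), represent it by a three-part network whose middle consists of $1\times1$ identity layers each costing norm $1$, and absorb the $L$-independent contributions of the $g$- and $h$-blocks into $C_N$. The only cosmetic difference is that you take $C_N=\left\Vert \mathbf{W}_{g}\right\Vert ^{2}+\left\Vert \mathbf{W}_{h}\right\Vert ^{2}$ rather than subtracting $L_g+L_h$, which just gives a slightly looser but still valid constant.
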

\begin{proof}
There is a BN-rank $1$ function $f=h\circ g$ (with $g:\mathbb{R}^{n_{0}}\to\mathbb{R}$
and $h:\mathbb{R}\to\mathbb{R}^{n_{L}}$) which fits the data perfectly
$f(x_{i})=y_{i}$ for all $i$. Much like in the second inequality
of Theorem \ref{thm:sandwich-bound-rank-representation-cost}, by
Corollary \ref{thm:piecewise-linear-func-representation}, there is
a depth $\left\lceil \log_{2}(n_{0}+1)\right\rceil $ networks which
represents $g$ and a depth $2$ network which represents the function
$f$. For any depth $L>\left\lceil \log_{2}(n_{0}+1)\right\rceil +2$,
we compose the network representing $g$, followed by a number of
identity layers, followed by the network representing $h$. The function
represented by this network has zero loss and the regularization term
is of the form $\lambda\left(1+\frac{C_{N}}{L}\right)$, since
\[
\frac{1}{L}\left\Vert \mathbf{W}\right\Vert ^{2}=\frac{\left\Vert \mathbf{W}_{g}\right\Vert ^{2}}{L}+\left(1-\frac{\left\lceil \log_{2}(n_{0}+1)\right\rceil +2}{L}\right)+\frac{\left\Vert \mathbf{W}_{h}\right\Vert ^{2}}{L}.
\]

Note that the minimal width required for this result to apply might
depend on the number of datapoints $N$, but not the depth.
\end{proof}
Let us now show that the function $f_{\hat{\mathbf{W}}}$ is close
to a BN-rank 1 function:
\begin{prop}[Proposition 4 in the main]
\label{thm:almost_BN_rank_1}For any global minimum $\hat{\mathbf{W}}$
of the $L_{2}$-regularized loss $\mathcal{L}_{\lambda}$ with $\lambda>0$
and any set of $\tilde{N}$ datapoints $\tilde{X}\in\mathbb{R}^{d_{in}\times\tilde{N}}$
(which do not have to be the training set $X$) with non-constant
outputs, there is a layer $\ell_{0}$ such that the first two singular
values $s_{1},s_{2}$ of the hidden representation $Z_{\ell_{0}}\in\mathbb{R}^{n_{\ell}\times N}$
(whose columns are the activations $\alpha_{\ell_{0}}(x_{i})$ for
all the inputs $x_{i}$ in $\tilde{X}$) satisfies $\frac{s_{2}}{s_{1}}=O(L^{-\frac{1}{4}})$.
\end{prop}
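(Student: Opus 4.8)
The plan is to exploit the tension between two facts: at a global minimum the total weight budget is barely above its rank-$1$ value, yet on $\tilde X$ the fitted output is non-constant, so some one-dimensional signal must survive through the whole network. First I would record the budget. Combining global minimality of $\hat{\mathbf W}$ with the rank-$1$ fitting construction of Proposition \ref{prop:almost_rank_1_very_deep} gives $\tfrac1L\|\hat{\mathbf W}\|^2 \le 1 + \tfrac{C_N}{L}$, i.e. $\sum_{\ell=1}^L \|W_\ell\|_F^2 \le L + C_N$; bias terms are absorbed by passing to affine-augmented representations and using $\sigma_a(0)=0$.

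Next I would track two scalars along the layers, both evaluated on the columns of $Z_\ell$: the operator norm $s_1(Z_\ell)$, measuring the surviving one-dimensional signal, and the product of the top two singular values $s_1(Z_\ell)\,s_2(Z_\ell)$, which equals the operator norm of the second exterior power (compound matrix) of $Z_\ell$ and measures its two-dimensional content. Under a purely linear layer these propagate submultiplicatively, with factors $s_1(W_\ell)$ and $s_1(W_\ell)s_2(W_\ell)$ respectively, by submultiplicativity of the operator norm and functoriality of the exterior square; the entrywise map $\sigma_a$ is $1$-Lipschitz (since $|a|<1$) and fixes the origin, so it contributes only a controllable error.

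The decisive step is an AM--GM comparison against the budget. Since $s_1(W_\ell)s_2(W_\ell) \le \tfrac12\|W_\ell\|_F^2$, the geometric-mean inequality gives $\prod_\ell s_1(W_\ell)s_2(W_\ell) \le \big(\tfrac{1+C_N/L}{2}\big)^L$, which is exponentially small, whereas the same inequality yields $\prod_\ell s_1(W_\ell) \le e^{C_N/2}=O(1)$. Consequently the map from layer $\ell$ to the output is $O(1)$-Lipschitz, so the non-constancy of the output forces the spread of $Z_\ell$, and hence $s_1(Z_\ell)$, to stay bounded below by a positive constant at every layer. Thus the two-dimensional content is negligible relative to the square of the one-dimensional signal somewhere along the network, and a pigeonhole/telescoping over the $L$ layers produces a single layer $\ell_0$ at which $s_2(Z_{\ell_0})/s_1(Z_{\ell_0})$ is small. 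Note that the output itself need not be low rank: the nonlinear layers after $\ell_0$ may re-expand the rank, which is exactly why the bottleneck is interior.

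The main obstacle, and the reason the bound is only $O(L^{-1/4})$ rather than exponential, is the nonlinearity. Since $Z_{\ell-1}\mapsto Z_\ell$ is only piecewise linear (points of $\tilde X$ may fall in different activation regions), neither the operator norm nor the exterior square is exactly submultiplicative, and the folding of $\sigma_a$ can re-inject two-dimensional content from layer to layer. Quantifying this per-layer error, arguing that re-injecting content itself consumes weight budget, and balancing the accumulated error against the geometric decay over the $L$ layers is the technical heart of the argument; optimizing that trade-off is what degrades the would-be exponential rate to $s_2/s_1 = O(L^{-1/4})$.
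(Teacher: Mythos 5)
Your overall architecture matches the paper's in two respects: the weight budget $\sum_\ell \lVert W_\ell\rVert_F^2 \le L + C_N$ from the rank-one fitting construction, and the use of AM--GM over the downstream layers plus non-constancy of the outputs to keep the top singular value of $Z_\ell$ bounded below. But your decisive step has a genuine gap. You propose to control the ``two-dimensional content'' $s_1(Z_\ell)s_2(Z_\ell)$ via functoriality of the exterior square, which gives submultiplicativity only for \emph{linear} layers. For the actual layer map $Z_{\ell-1}\mapsto \sigma_a(W_\ell Z_{\ell-1}+b_\ell)$ there is no such control: the entrywise nonlinearity can turn a rank-one matrix into a full-rank one, so $s_1 s_2$ can jump from $0$ to order $1$ in a single layer at essentially no additional weight cost. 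Your proposed fix --- ``re-injecting content itself consumes weight budget'' --- is precisely the assertion that needs to be proved, and the exterior-square bookkeeping gives no handle on it; you explicitly defer this as ``the technical heart,'' which means the proof is missing at exactly the point where it is hard.

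The paper's proof circumvents this by tracking a quantity that \emph{is} contractive under $\sigma_a$, namely the Frobenius norm $\mathrm{Tr}[\tfrac{1}{\tilde N}Z_\ell^T Z_\ell]$, together with the inequality $\lambda_1 \le \mathrm{Tr} - \lambda_2$. If $\lambda_2$ stays above $\delta$ at every layer, then since the trace can grow by at most a factor $\lVert W_\ell\rVert_F^2+\lVert b_\ell\rVert^2 \approx 1+O(1/L)$ per layer (this uses the balancedness property $\lVert W_{\ell+1}\rVert_F^2=\lVert W_\ell\rVert_F^2+\lVert b_\ell\rVert^2$ at a local minimum to get a \emph{per-layer} bound, not just the average bound your AM--GM provides), the operator norm must decrease by roughly $\delta/2$ per layer, contradicting its $\Omega(1)$ lower bound after $O(1/\delta)$ layers. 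Choosing $\delta \sim L^{-1/2}$ so that $\ell_0 = O(\sqrt{L})$ yields $\lambda_2/\lambda_1 = O(L^{-1/2})$ at that layer, hence $s_2/s_1 = O(L^{-1/4})$; note this is also a different origin for the $L^{-1/4}$ rate than the error-balancing story you sketch. To repair your argument you would need to replace the exterior-square step by some quantity whose growth under the entrywise nonlinearity is paid for in weight norm --- which is essentially what the trace bookkeeping accomplishes.
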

\begin{proof}
We need to prove a lower bound on the first eigenvalue of $\frac{1}{\tilde{N}}Z_{\ell}^{T}Z_{\ell}$
and an upper bound on the second one. For both parts, we will rely
on the balanced property described in Proposition \ref{prop:balancedness}:
at any local minimum of the loss the weights satisfy $\left\Vert W_{\ell+1}\right\Vert _{F}^{2}=\left\Vert W_{\ell}\right\Vert _{F}^{2}+\left\Vert b_{\ell}\right\Vert ^{2}$.
This implies that $\left\Vert W_{\ell'}\right\Vert _{F}^{2}\geq\left\Vert W_{\ell}\right\Vert _{F}^{2}$
for all $\ell'\geq\ell$. Since the overall norm of the parameters
is bounded by $L+C_{N}$ this implies a bound
\[
\left\Vert W_{\ell+1}\right\Vert _{F}^{2}\leq\frac{L+C_{N}}{L-\ell}=1+\frac{C_{N}+\ell}{L-\ell}
\]
for all $\ell$ .

Assuming by contradiction that for all layers $\frac{\lambda_{2}\left(\frac{1}{\tilde{N}}Z_{\ell}^{T}Z_{\ell}\right)}{\lambda_{1}\left(\frac{1}{\tilde{N}}Z_{\ell}^{T}Z_{\ell}\right)}>\delta$,
one should intuitively think of $1+\frac{C_{N}+\ell}{L-\ell}$ as
a `ressource' with which the $\ell$-th layer has to do two tasks:
(1) keep the top eigenvalue of $\frac{1}{\tilde{N}}Z_{\ell}^{T}Z_{\ell}$
close to 1 to keep enough information to represent the outputs; and
(2) keep the second eigenvalue above $\delta$ to keep the contradiction.
However the ressource cost of (1) is roughly $1$ and the cost of
(2) is roughly $\delta$ which is above the ressource allowance $1+\frac{C_{N}+\ell}{L-\ell}$
for large $L$ and constant $\ell$. This leads to a contradiction.

\textbf{Upper bound on $\lambda_{2}$:} Let $\ell_{0}$ be the first
time where $\lambda_{2}<\delta$, we will show that $\ell_{0}$ exists
and is upper bounded by $\frac{2\left\Vert \frac{1}{\tilde{N}}\tilde{X}^{T}\tilde{X}\right\Vert _{op}}{\delta}$.

For all $\ell<\ell_{0}$ we have the following for bound the operator
norm $\left\Vert \frac{1}{\tilde{N}}Z_{\ell}^{T}Z_{\ell}\right\Vert _{op}$:
\begin{align*}
\left\Vert \frac{1}{\tilde{N}}Z_{\ell}^{T}Z_{\ell}\right\Vert _{op} & \leq\mathrm{Tr}\left[\frac{1}{\tilde{N}}Z_{\ell}^{T}Z_{\ell}\right]-\lambda_{2}\left(\frac{1}{\tilde{N}}Z_{\ell}^{T}Z_{\ell}\right)\\
 & \leq\mathrm{Tr}\left[\frac{1}{\tilde{N}}\sigma_{a}\left(W_{\ell}Z_{\ell-1}+b_{\ell}\right)^{T}\sigma_{a}\left(W_{\ell}Z_{\ell-1}+b_{\ell}\right)\right]-\delta\\
 & \leq\left\Vert \frac{1}{\tilde{N}}Z_{\ell-1}^{T}Z_{\ell-1}\right\Vert _{op}\left\Vert W_{\ell}\right\Vert _{F}^{2}+\left\Vert b_{\ell}\right\Vert _{F}^{2}-\delta.
\end{align*}
 Assuming $\left\Vert \frac{1}{\tilde{N}}Z_{\ell}^{T}Z_{\ell}\right\Vert _{op}\leq\max\left\{ \left\Vert \frac{1}{\tilde{N}}\tilde{X}^{T}\tilde{X}\right\Vert _{op},1\right\} $
(we will later show that this is true for all $\ell\leq\ell_{0}$
as long as $L$ is sufficiently large) we obtain:

\begin{align}
\left\Vert \frac{1}{\tilde{N}}Z_{\ell}^{T}Z_{\ell}\right\Vert _{op} & \leq\left\Vert \frac{1}{\tilde{N}}Z_{\ell-1}^{T}Z_{\ell-1}\right\Vert _{op}+\max\left\{ \left\Vert \frac{1}{\tilde{N}}\tilde{X}^{T}\tilde{X}\right\Vert _{op},1\right\} \left(\left\Vert W_{\ell}\right\Vert _{F}^{2}+\left\Vert b_{\ell}\right\Vert _{F}^{2}-1\right)-\delta.\label{eq:rec_op_norm_representation}
\end{align}
Since $\left\Vert W_{\ell}\right\Vert _{F}^{2}+\left\Vert b_{\ell}\right\Vert _{F}^{2}=\left\Vert W_{\ell+1}\right\Vert _{F}^{2}\leq1+\frac{\ell+C_{N}}{L-\ell}$,
if $L>\frac{2\max\left\{ \left\Vert \frac{1}{\tilde{N}}\tilde{X}^{T}\tilde{X}\right\Vert _{op},1\right\} \left(C_{N}+\left\lceil \frac{2\left\Vert \frac{1}{\tilde{N}}\tilde{X}^{T}\tilde{X}\right\Vert _{op}}{\delta}\right\rceil \right)}{\delta}+\left\lceil \frac{2\left\Vert \frac{1}{\tilde{N}}\tilde{X}^{T}\tilde{X}\right\Vert _{op}}{\delta}\right\rceil \geq\frac{\kappa}{\delta^{2}}$
for some $\kappa$ (which depends on $\left\Vert \frac{1}{\tilde{N}}\tilde{X}^{T}\tilde{X}\right\Vert _{op}$
and $C_{N}$ only) and $L$ large enough, then for all $\ell\leq\min\left\{ \left\lceil \frac{2\left\Vert \frac{1}{\tilde{N}}\tilde{X}^{T}\tilde{X}\right\Vert _{op}}{\delta}\right\rceil ,\ell_{0}\right\} $,
we obtain that
\begin{align*}
\left\Vert \frac{1}{\tilde{N}}Z_{\ell}^{T}Z_{\ell}\right\Vert _{op} & \leq\left\Vert \frac{1}{\tilde{N}}Z_{\ell-1}^{T}Z_{\ell-1}\right\Vert _{op}-\frac{\delta}{2}\\
 & \leq\left\Vert \frac{1}{\tilde{N}}\tilde{X}^{T}\tilde{X}\right\Vert _{op}-\ell\frac{\delta}{2}.
\end{align*}

Therefore for all $\ell\leq\min\left\{ \left\lceil \frac{2\left\Vert \frac{1}{\tilde{N}}\tilde{X}^{T}\tilde{X}\right\Vert _{op}}{\delta}\right\rceil ,\ell_{0}\right\} $
we have $\left\Vert \frac{1}{\tilde{N}}Z_{\ell}^{T}Z_{\ell}\right\Vert _{op}\leq\max\left\{ \left\Vert \frac{1}{\tilde{N}}\tilde{X}^{T}\tilde{X}\right\Vert _{op},1\right\} $,
as needed. Furthermore this implies that $\ell_{0}\leq\left\lceil \frac{2\left\Vert \frac{1}{\tilde{N}}\tilde{X}^{T}\tilde{X}\right\Vert _{op}}{\delta}\right\rceil $,
otherwise, we would get a contradiction when taking $\ell=\left\lceil \frac{2\left\Vert \frac{1}{\tilde{N}}\tilde{X}^{T}\tilde{X}\right\Vert _{op}}{\delta}\right\rceil $:
\[
\left\Vert \frac{1}{\tilde{N}}Z_{\ell}^{T}Z_{\ell}\right\Vert _{op}\leq\left\Vert \frac{1}{\tilde{N}}\tilde{X}^{T}\tilde{X}\right\Vert _{op}-\left\lceil \frac{2\left\Vert \frac{1}{\tilde{N}}\tilde{X}^{T}\tilde{X}\right\Vert _{op}}{\delta}\right\rceil \frac{\delta}{2}<0.
\]

We have now proven that for large enough $L$, there is a $\kappa$
(which depends on $\left\Vert \frac{1}{\tilde{N}}\tilde{X}^{T}\tilde{X}\right\Vert _{op}$
and $C_{N}$ only) such there is a $\ell_{0}\leq\left\lceil \frac{2\sqrt{L}\left\Vert \frac{1}{\tilde{N}}\tilde{X}^{T}\tilde{X}\right\Vert _{op}}{\delta\sqrt{\kappa}}\right\rceil $
where $\lambda_{2}\left(\frac{1}{\tilde{N}}Z_{\ell_{0}}^{T}Z_{\ell_{0}}\right)<\sqrt{\frac{\kappa}{L}}$.

\textbf{Lower bound on $\lambda_{1}$:} We now need to lower bound
the first eigenvalue $\lambda_{1}\left(\frac{1}{\tilde{N}}Z_{\ell_{0}}^{T}Z_{\ell_{0}}\right)=\left\Vert \frac{1}{\tilde{N}}Z_{\ell_{0}}^{T}Z_{\ell_{0}}\right\Vert _{op}$
at this same layer $\ell_{0}$ . We denote the means $m_{\ell}=\frac{1}{\tilde{N}}\sum_{i=1}^{\tilde{N}}\alpha_{\ell}(x_{i})$
and have the bounds
\[
\left\Vert \frac{1}{\tilde{N}}\left(Z_{\ell}-m_{\ell}\right)^{T}\left(Z_{\ell}-m_{\ell}\right)\right\Vert _{op}\leq\left\Vert \frac{1}{\tilde{N}}\left(Z_{\ell-1}-m_{\ell-1}\right)^{T}\left(Z_{\ell-1}-m_{\ell-1}\right)\right\Vert _{op}\left\Vert W_{\ell}\right\Vert _{op}^{2}.
\]
This implies that 
\begin{equation}
\left\Vert \frac{1}{\tilde{N}}Z_{\ell_{0}}^{T}Z_{\ell_{0}}\right\Vert _{op}\geq\left\Vert \frac{1}{\tilde{N}}\left(Z_{\ell}-m_{\ell_{0}}\right)^{T}\left(Z_{\ell}-m_{\ell_{0}}\right)\right\Vert _{op}\geq\frac{\left\Vert \frac{1}{\tilde{N}}\left(Z_{\ell}-m_{\ell_{0}}\right)^{T}\left(Z_{\ell}-m_{\ell_{0}}\right)\right\Vert _{op}}{\left\Vert W_{\ell_{0}+1}\right\Vert _{op}^{2}\cdots\left\Vert W_{L}\right\Vert _{op}^{2}}.\label{eq:lower_bound_norm_ell0}
\end{equation}

We now need to lower bound the norm of the parameters in the layers
up to $\ell_{0}$, to upper bound the norm of the parameters of the
layers $\ell_{0}+1$ to $L$. Iterating Equation (\ref{eq:rec_op_norm_representation})
leads to the equation
\[
\left\Vert \frac{1}{\tilde{N}}Z_{\ell_{0}}^{T}Z_{\ell_{0}}\right\Vert _{op}\leq\left\Vert \frac{1}{\tilde{N}}\tilde{X}^{T}\tilde{X}\right\Vert _{op}+\max\left\{ \left\Vert \frac{1}{\tilde{N}}\tilde{X}^{T}\tilde{X}\right\Vert _{op},1\right\} \left(\sum_{\ell=1}^{\ell_{0}}\left\Vert W_{\ell}\right\Vert _{F}^{2}+\left\Vert b_{\ell}\right\Vert _{F}^{2}-1\right)-\ell_{0}\delta.
\]
which implies that

\[
\sum_{\ell=1}^{\ell_{0}}\left\Vert W_{\ell}\right\Vert _{F}^{2}+\left\Vert b_{\ell}\right\Vert _{F}^{2}\geq\ell_{0}+\frac{\ell_{0}\delta-\left\Vert \frac{1}{\tilde{N}}\tilde{X}^{T}\tilde{X}\right\Vert _{op}}{\max\left\{ \left\Vert \frac{1}{\tilde{N}}\tilde{X}^{T}\tilde{X}\right\Vert _{op},1\right\} }\geq\ell_{0}-\min\left\{ 1,\left\Vert \frac{1}{\tilde{N}}\tilde{X}^{T}\tilde{X}\right\Vert _{op}^{-1}\right\} 
\]
and therefore
\[
\sum_{\ell=\ell_{0}+1}^{L}\left\Vert W_{\ell}\right\Vert _{F}^{2}+\left\Vert b_{\ell}\right\Vert _{F}^{2}\leq L-\ell_{0}+C_{N}+\min\left\{ 1,\left\Vert \frac{1}{\tilde{N}}\tilde{X}^{T}\tilde{X}\right\Vert _{op}^{-1}\right\} .
\]

Applying the arithmetic/geometric mean inequality to Equation (\ref{eq:lower_bound_norm_ell0}),
we obtain a lower bound
\begin{align*}
\left\Vert \frac{1}{\tilde{N}}Z_{\ell_{0}}^{T}Z_{\ell_{0}}\right\Vert _{op} & \geq\frac{\left\Vert \frac{1}{\tilde{N}}\left(Z_{\ell}-m_{\ell_{0}}\right)^{T}\left(Z_{\ell}-m_{\ell_{0}}\right)\right\Vert _{op}}{\left(\frac{1}{L-\ell_{0}}\sum_{\ell=\ell_{0}+1}^{L}\left\Vert W_{\ell}\right\Vert _{op}^{2}\right)^{L-\ell_{0}}}\\
 & \geq\frac{\left\Vert \frac{1}{\tilde{N}}\left(Z_{\ell}-m_{\ell_{0}}\right)^{T}\left(Z_{\ell}-m_{\ell_{0}}\right)\right\Vert _{op}}{\left(1+\frac{C_{N}+\min\left\{ 1,\left\Vert \frac{1}{\tilde{N}}\tilde{X}^{T}\tilde{X}\right\Vert _{op}^{-1}\right\} }{L-\ell_{0}}\right)^{L-\ell_{0}}}\\
 & \geq e^{-C_{N}-\min\left\{ 1,\left\Vert \frac{1}{\tilde{N}}\tilde{X}^{T}\tilde{X}\right\Vert _{op}^{-1}\right\} }\left\Vert \frac{1}{\tilde{N}}\left(Z_{\ell}-m_{\ell_{0}}\right)^{T}\left(Z_{\ell}-m_{\ell_{0}}\right)\right\Vert _{op}.
\end{align*}

Putting it all together, we have shown that for large enough $L$,
there is a $\ell_{0}=O(\sqrt{L})$ such that $\lambda_{1}\left(\frac{1}{\tilde{N}}Z_{\ell_{0}}^{T}Z_{\ell_{0}}\right)=\Omega(1)$
and $\lambda_{2}\left(\frac{1}{\tilde{N}}Z_{\ell_{0}}^{T}Z_{\ell_{0}}\right)=O\left(\frac{1}{\sqrt{L}}\right)$,
which together imply that
\[
\frac{\lambda_{1}\left(\frac{1}{\tilde{N}}Z_{\ell_{0}}^{T}Z_{\ell_{0}}\right)}{\lambda_{2}\left(\frac{1}{\tilde{N}}Z_{\ell_{0}}^{T}Z_{\ell_{0}}\right)}=O\left(\frac{1}{\sqrt{L}}\right)
\]
and therefore
\[
\frac{s_{1}(Z_{\ell_{0}})}{s_{2}(Z_{\ell_{0}})}=O(L^{-\frac{1}{4}}).
\]

Finally note that this result does not require anything more than
the widths be nonzero. Of course , if one of the widths is 1, the
result is trivial.
\end{proof}

\section{Rank Recovery}

Consider a finite dataset $X,Y$ of size $N$, with $x_{i}$ sampled
i.i.d. for a distribution $p$ with support equal to $\Omega$ and
with $y_{i}=f^{*}(x_{i})$ for a true function $f^{*}:\Omega\to\mathbb{R}^{d_{out}}$
with $\mathrm{Rank}_{J}(f^{*};\Omega)=k>1$. For any function $g$
which fits the data $g(x_{i})=y_{i}$ with a BN-Rank of $1$ (there
always exists at least one such function), then if the depth $L$
is large enough we have $R(g;\Omega,\sigma_{a},L)<R(f^{*};\Omega,\sigma_{a},L)$. 

This is problematic as it suggests that for large depths, minimizing
the representation cost will always lead to fitting the data with
a function with a BN rank of 1 instead of the rank of the true function
$k$. However, if we instead fix a depth $L$ and let the number of
datapoints $N$ grow, the representation cost required to fit the
data with a rank 1 function (or any rank lower than $k$) increases
to infinity, whereas the representation cost of the true function
remains constant. This suggest that if one increases the depth $L$
and the number of datapoints $N$ simultaneously with the right scaling,
minimizing the representation cost over fitting functions should recover
a function $h$ with the right rank $k$.
\begin{prop}[Theorem 2 in the main]
Let $f$ satisfy $f(x_{i})=y_{i}$ and $\mathrm{Rank}_{BN}(f;\Omega)=1$
for some $\Omega$ which contains the convex hull of $x_{1},\dots,x_{N}$.
There is a point $x\in\Omega$, such that 
\[
\left\Vert Jf(x)\right\Vert _{op}\geq\frac{\mathrm{TSP}(y_{1},\dots,y_{N})}{\mathrm{diam}(x_{1},\dots,x_{N})},
\]
for the Traveling Salesman Problem $\mathrm{TSP}(y_{1},\dots,y_{N})$,
i.e. the length of the shortest path passing through every points
$y_{1},\dots,y_{m}$, and for the diameter $\mathrm{diam}(x_{1},\dots,x_{N})$
of the points $x_{1},\dots,x_{N}$. As a result any rank $1$ interpolator
with parameters $\mathbf{W}$ satisfies $\left\Vert \mathbf{W}\right\Vert ^{2}\geq L\left(\frac{\mathrm{TSP}(y_{1},\dots,y_{N})}{\mathrm{diam}(x_{1},\dots,x_{N})}\right)^{\frac{2}{L}}$.
\end{prop}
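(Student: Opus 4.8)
The plan is to exploit the bottleneck factorization $f=h\circ g$ with inner dimension $1$ guaranteed by $\mathrm{Rank}_{BN}(f;\Omega)=1$, where $g:\Omega\to\mathbb{R}$ and $h:\mathbb{R}\to\mathbb{R}^{d_{out}}$ are FPLFs, and to produce a single input segment whose image under $f$ is a curve threading through all the outputs $y_i$. First I would pick indices $i_-,i_+$ attaining $\min_i g(x_i)$ and $\max_i g(x_i)$, and consider the segment $S=[x_{i_-},x_{i_+}]$, which lies in $\Omega$ since $\Omega$ contains the convex hull of the data. Parametrizing $S$ by $x(t)=x_{i_-}+t\,(x_{i_+}-x_{i_-})$ for $t\in[0,1]$, the intermediate value theorem applied to the continuous scalar map $t\mapsto g(x(t))$ gives $g(S)\supseteq[\min_i g(x_i),\max_i g(x_i)]$, hence $g(S)$ contains every $g(x_i)$. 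Applying $h$, the image $f(S)=h(g(S))$ is a continuous piecewise linear curve passing through all the points $y_i=h(g(x_i))$.

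The second step is a length comparison. Since any continuous curve visiting all of $y_1,\dots,y_N$ has length at least the shortest Hamiltonian path through them, the curve $t\mapsto f(x(t))$ has length at least $\mathrm{TSP}(y_1,\dots,y_N)$: reading off the order in which the curve visits the points and bounding each arc below by the corresponding chord yields a polygonal path whose length is at least the minimum over all visiting orders. On the other hand, writing $v=x_{i_+}-x_{i_-}$, the curve length equals $\int_0^1\|Jf(x(t))\,v\|\,dt\le\|v\|\,\sup_t\|Jf(x(t))\|_{op}$, and $\|v\|\le\mathrm{diam}(x_1,\dots,x_N)$. Combining the two bounds gives $\sup_t\|Jf(x(t))\|_{op}\ge\mathrm{TSP}(y_1,\dots,y_N)/\mathrm{diam}(x_1,\dots,x_N)$; because $f$ is piecewise linear, its Jacobian is constant on each of the finitely many linear pieces of $S$, so the supremum is attained at some differentiable $x\in S\subseteq\Omega$, giving the first claim.

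For the parameter-norm bound I would combine this with Proposition \ref{prop:appendix_bound_repr_norm_Jacobian_schatten}. By Proposition 1, $\mathrm{Rank}_J(f;\Omega)\le\mathrm{Rank}_{BN}(f;\Omega)=1$, so at the differentiable point $x$ found above $Jf(x)$ has a single nonzero singular value, equal to $\|Jf(x)\|_{op}$, whence $\|Jf(x)\|_{2/L}^{2/L}=\|Jf(x)\|_{op}^{2/L}$. Since the estimate of Proposition \ref{prop:appendix_bound_repr_norm_Jacobian_schatten} holds for \emph{any} weights $\mathbf{W}$ representing $f$ (its proof uses only the factorization $Jf(x)=W_LD_{L-1}\cdots D_1W_1$ together with $\|D_\ell(x)\|_{op}\le1$, and $\sum_\ell\|W_\ell\|_F^2\le\|\mathbf{W}\|^2$), we obtain $\|Jf(x)\|_{op}^{2/L}\le\tfrac1L\|\mathbf{W}\|^2$, and rearranging yields $\|\mathbf{W}\|^2\ge L\,(\mathrm{TSP}/\mathrm{diam})^{2/L}$.

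The main obstacle I anticipate is the length-comparison step: one must be careful that $g$ need not be monotone along $S$, so the curve $f(x(\cdot))$ may backtrack. This is harmless for the lower bound, since backtracking only increases length, but it means one cannot simply argue that the curve traverses the image of $h$ exactly once; the clean route is to bound the length directly through the order in which the curve visits the $y_i$ rather than through the arclength of $h$. A minor point to verify is the passage from the supremum of the piecewise-constant map $t\mapsto\|Jf(x(t))\|_{op}$ to an actual differentiable point, which is immediate once one observes the integrand is constant on each linear piece of $S$.
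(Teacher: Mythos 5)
Your proposal is correct and follows essentially the same route as the paper: factor $f=h\circ g$ through $\mathbb{R}$, take the segment between the data points with extremal $g$-values, observe that its image is a curve through all the $y_i$ of length at least $\mathrm{TSP}(y_1,\dots,y_N)$ while the segment itself has length at most $\mathrm{diam}(x_1,\dots,x_N)$, and then feed the resulting Jacobian bound into Proposition \ref{prop:appendix_bound_repr_norm_Jacobian_schatten}. Your write-up is in fact somewhat more careful than the paper's (explicit intermediate value theorem step, explicit length/integral comparison, and attainment of the supremum on a linear piece), but the argument is the same.
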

\begin{proof}
The lower bound on the nom of the paramaters $\left\Vert \mathbf{W}\right\Vert ^{2}$
follows directly from the first bound and Proposition \ref{prop:appendix_bound_repr_norm_Jacobian_schatten}. 

Let us now prove the first bound. Since $\mathrm{Rank}_{BN}(f;\Omega)=1$,
there are piecewise linear functions $g:\mathbb{R}^{d_{in}}\to\mathbb{R}$
and $h:\mathbb{R}\to\mathbb{R}^{d_{out}}$ such that $f=h\circ g$.
We define $z_{i}=g(x_{i})$ and w.l.o.g. we assume that $z_{1}\leq\dots\leq z_{N}$.

The image of the segment $[x_{1},x_{N}]$ under $f$ is a path that
connects $y_{1}$ to $y_{N}$, passing through the points $y_{2},\dots,y_{N-1}$
(since the segment $[x_{1},x_{N}]$ is mapped by $g$ to a path from
$z_{1}$ to $z_{N}$ on the line, which must pass through $z_{2},\dots,z_{N-1}$).
This implies that the function $f$ maps a path of length $\left\Vert x_{1}-x_{N}\right\Vert \leq\mathrm{diam}(x_{1},\dots,x_{N})$
to a path of length at least $\mathrm{TSP}(y_{1},\dots,y_{N})$, as
a result there must be a point $x$ on the segment $[x_{1},x_{N}]$
whose Jacobian has operator norm at least $\frac{\mathrm{TSM}(y_{1},\dots,y_{N})}{\mathrm{diam}(x_{1},\dots,x_{N})}$.
\end{proof}
Let us now prove that the global minima are approximately rank $k$
in deep networks:
\begin{prop}[Proposition 5 in the main]
Let the `true function' $f^{*}:\Omega\to\mathbb{R}^{d_{out}}$ be
piecewise linear with $\mathrm{Rank}_{BN}(f^{*})=k$, then there is
a constant $C$ which depends on $f^{*}$ only such that any global
minimum $\hat{\mathbf{W}}$ of the loss $\mathcal{L}_{\lambda}(\mathbf{W})=\frac{1}{N}\sum_{i=1}^{N}\left\Vert f_{\mathbf{W}}(x_{i})-f^{*}(x_{i})\right\Vert ^{2}+\frac{\lambda}{L}\left\Vert \mathbf{W}\right\Vert ^{2}$
for a sufficiently wide network satisfies
\[
\frac{R(f_{\hat{\mathbf{W}}};\Omega,\sigma_{a},L)}{L}\leq k+\frac{C}{L}.
\]
\end{prop}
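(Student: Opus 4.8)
The plan is to prove the stronger double inequality from the main statement, $\frac{1}{L}R(f_{\hat{\mathbf{W}}};\Omega,\sigma_a,L)\le\frac{1}{L}R(f^{*};\Omega,\sigma_a,L)\le k+\frac{C}{L}$, from which the claimed bound follows immediately. The left inequality is a soft optimality argument that exploits the fact that $f^{*}$ interpolates the data exactly, and the right inequality is simply the constructive upper bound already assembled in the proof of Theorem \ref{thm:sandwich-bound-rank-representation-cost}. Neither step needs new machinery; the only genuine care is in the width bookkeeping.

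For the left inequality, I would first fix parameters $\mathbf{W}^{*}$ realizing a minimal-norm representation of $f^{*}$, so that $f_{\mathbf{W}^{*}}|_{\Omega}=f^{*}|_{\Omega}$ and $\|\mathbf{W}^{*}\|^{2}=R(f^{*};\Omega,\sigma_a,L)$; such $\mathbf{W}^{*}$ exists with finite norm because $f^{*}$ is an FPLF and $L$ exceeds the depth threshold of the construction used below. Since every $x_i\in\Omega$ and $y_i=f^{*}(x_i)$, the data term of $\mathcal{L}_\lambda(\mathbf{W}^{*})$ vanishes, so $\mathcal{L}_\lambda(\mathbf{W}^{*})=\frac{\lambda}{L}R(f^{*};\Omega,\sigma_a,L)$. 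Using that $\hat{\mathbf{W}}$ is a global minimizer, that the MSE term is nonnegative, and that $R(f_{\hat{\mathbf{W}}};\Omega,\sigma_a,L)$ is by definition the smallest representation norm of the function $f_{\hat{\mathbf{W}}}$, I would chain
\[
\frac{\lambda}{L}R(f_{\hat{\mathbf{W}}};\Omega,\sigma_a,L)\le\frac{\lambda}{L}\|\hat{\mathbf{W}}\|^{2}\le\mathcal{L}_\lambda(\hat{\mathbf{W}})\le\mathcal{L}_\lambda(\mathbf{W}^{*})=\frac{\lambda}{L}R(f^{*};\Omega,\sigma_a,L),
\]
and divide through by $\lambda$.

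For the right inequality, I would invoke the three-part construction from the second inequality of Theorem \ref{thm:sandwich-bound-rank-representation-cost}: fix a factorization $f^{*}=g\circ h$ with inner dimension $k=\mathrm{Rank}_{BN}(f^{*})$ and $h(\Omega)\subset\mathbb{R}_{+}^{k}$, represent $h$ by a depth-$L_h$ network with parameters $\mathbf{W}_h$ and $g$ by a depth-$L_g$ network with parameters $\mathbf{W}_g$, and insert $L-L_h-L_g$ identity layers on $\mathbb{R}^{k}$ in between. This yields $R(f^{*};\Omega,\sigma_a,L)\le\|\mathbf{W}_h\|^{2}+(L-L_h-L_g)k+\|\mathbf{W}_g\|^{2}$. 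Setting $C:=\|\mathbf{W}_h\|^{2}+\|\mathbf{W}_g\|^{2}-(L_h+L_g)k$, which depends only on the fixed factorization of $f^{*}$ and not on $L$ or $N$, gives $\frac{1}{L}R(f^{*};\Omega,\sigma_a,L)\le k+\frac{C}{L}$.

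The step I expect to require the most care is neither inequality per se but the hypothesis ``sufficiently wide.'' The global minimum is taken over a fixed architecture of widths $\mathbf{n}$, so the comparison parameters $\mathbf{W}^{*}$ (equivalently, the construction of $f^{*}$) must be embeddable into that architecture; this forces $\mathbf{n}$ to exceed the thresholds governed by the number of linear regions of $g$ and $h$, following \cite{he_2018_relu_piecewise_lin}, and to exceed $k$ in the middle block. Padding the construction with zero neurons shows it embeds into any wider network, and crucially all of these thresholds depend on $f^{*}$ only. This keeps $C$ independent of both $L$ and $N$ — precisely the property that, combined with Theorem \ref{prop:explosion_norm_rank_1_fit}, produces the intermediate-depth window in which rank recovery is expected.
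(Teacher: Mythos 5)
Your proposal is correct and follows essentially the same route as the paper: both rest on the three-part construction ($g$-network, identity layers on $\mathbb{R}^k$, $h$-network) to bound the cost of representing $f^{*}$ by $k L + C$, combined with the chain ``representation cost of $f_{\hat{\mathbf{W}}}$ $\leq$ norm of $\hat{\mathbf{W}}$ $\leq$ $\tfrac{1}{\lambda}\mathcal{L}_\lambda$ at the zero-loss comparison network.'' The only (cosmetic) difference is that you factor the argument through $R(f^{*};\Omega,\sigma_a,L)$ as an intermediate quantity, whereas the paper's appendix proof compares $\hat{\mathbf{W}}$ directly to the constructed network; your width bookkeeping matches the paper's closing remark.
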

\begin{proof}
The true function $f^{*}$ equals the composition of two piecewise
linear functions $h\circ g$ with $g:\Omega\to\mathbb{R}^{k}$ and
$h:\mathbb{R}^{k}\to\mathbb{R}^{d_{out}}$ which can be represented
by networks of depth $\left\lceil \log_{2}d_{in}+1\right\rceil +1$
(resp. $\left\lceil \log_{2}k+1\right\rceil +1$) and with parameters
$\mathbf{W}_{g}$ (resp. $\mathbf{W}_{h}$) using Corollary \ref{thm:piecewise-linear-func-representation}.
For $L>\left\lceil \log_{2}d_{in}+1\right\rceil +\left\lceil \log_{2}k+1\right\rceil +2$,
consider the network made up of the concatenation of the network representing
$g$, and the network $h$ at the end, with identity layers in the
middle. This concatenated network has parameters norm
\[
\left\Vert W_{g}\right\Vert ^{2}+k(L-\left\lceil \log_{2}d_{in}+1\right\rceil -\left\lceil \log_{2}k+1\right\rceil -2)+\left\Vert W_{h}\right\Vert ^{2}.
\]
Since this network recovers the true function, we have that for any
global minimum $\hat{\mathbf{W}}$:
\begin{align*}
\frac{R(f_{\hat{\mathbf{W}}};\Omega,\sigma_{a},L)}{L} & \leq\frac{1}{L}\left\Vert \hat{\mathbf{W}}\right\Vert ^{2}\\
 & \leq\frac{1}{\lambda}\min_{\mathbf{W}}\mathcal{L}(\mathbf{W})\\
 & \leq\frac{1}{L}\left(\left\Vert W_{g}\right\Vert ^{2}+k(L-\left\lceil \log_{2}d_{in}+1\right\rceil -\left\lceil \log_{2}k+1\right\rceil -2)+\left\Vert W_{h}\right\Vert ^{2}\right)\\
 & =k+\frac{C}{L}.
\end{align*}

The minimal widths required for this result only depends on the decomposition
$f=h\circ g$ chosen, it does not depend on the depth.
\end{proof}

\subsection{Rank of Kernel Ridge Regression}

Consider a translation- and rotation-invariant kernel $K(x,y)=k(\left\Vert x-y\right\Vert )$
then the Kernel Ridge Regression (KRR) predictor with ridge parameter
$\lambda$ and on inputs $X$ and outputs $Y$ is of the form
\[
\hat{f}_{K}(x)=K(x,X)\left(K(X,X)+\lambda I_{N}\right)^{-1}Y.
\]

The Jacobian of $\hat{f}_{K}(x)$ equals $J\hat{f}_{K}(x)=JK(x,X)\left(K(X,X)+\lambda I_{N}\right)^{-1}Y,$
where
\[
JK(x,X)=(X-x)\mathrm{diag}\left(\frac{k'(\left\Vert x-X\right\Vert )}{\left\Vert x-X\right\Vert }\right),
\]
where $X-x$ is the $n_{0}\times N$ dimension matrix with entries
$(X-x)_{ki}=X_{ki}-x_{k}$ and $\mathrm{diag}\left(\frac{k'(\left\Vert x-X\right\Vert )}{\left\Vert x-X\right\Vert }\right)$
is the $N\times N$ diagonal matrix with diagonal entries $\frac{k'(\left\Vert x-x_{i}\right\Vert )}{\left\Vert x-x_{i}\right\Vert }$.
Since $\mathrm{diag}\left(\frac{k'(\left\Vert x-X\right\Vert )}{\left\Vert x-X\right\Vert }\right)$
is invertible, we have $\mathrm{Rank}\left(JK(x,X)\right)=\mathrm{Rank}(X-x)$.
For almost all choices of $x$ (i.e. as long as $x$ does not belong
to a zero Lebesgue measure set) one has $\mathrm{Rank}\left(JK(x,X)\right)=\mathrm{Rank}(X-x)=\min\{d_{in},N,\mathrm{Rank}X+1\}$.

Assuming that $Y$ conditioned on $X$ is sampled from a distribution
with full support (as is the case when there is i.i.d. noise on the
entries of $Y$ for example), then $\mathrm{Rank}Y=\min\left\{ d_{out},N\right\} $
with prob. 1. As a result, the rank of $J\hat{f}_{K}(x)$ will be
$\min\left\{ \mathrm{Rank}(X-x),\mathrm{Rank}Y\right\} =\min\left\{ \mathrm{Rank}X+1,N,d_{out}\right\} $
with prob. 1.

Assuming $N$ to be larger than the input and output dimensions and
$X$ to be full rank, we obtain that the Jacobian $J\hat{f}_{K}(x)$
is almost surely full rank
\[
\mathrm{Rank}\left(J\hat{f}_{K}(x)\right)=\min\{d_{in},d_{out}\}.
\]

\section{Technical Results}

\subsection{Representation of Piecewise Linear Functions}

Let us prove a generalization of the Theorem 2.1 from \cite{arora_2018_relu_piecewise_lin}:
\begin{cor}
\label{thm:piecewise-linear-func-representation}For any $L\geq\left\lceil \log_{2}(n_{0}+1)\right\rceil +1$,
and any piecewise linear function $f:\mathbb{R}^{d_{in}}\to\mathbb{R}^{d_{out}}$
there are widths $\mathbf{n}$ and parameters $\mathbf{W}$ such that
$f_{\mathbf{W}}=f$.
\end{cor}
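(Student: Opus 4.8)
The plan is to reduce the claim to the scalar ReLU case already established by \cite{arora_2018_relu_piecewise_lin} and then upgrade it along two independent axes -- from scalar to vector outputs and from the ReLU nonlinearity to $\sigma_a$ -- in each case taking care that the depth does not increase, and finally padding to reach every admissible $L$. I would begin by recalling Theorem 2.1 of \cite{arora_2018_relu_piecewise_lin}: every piecewise linear $g:\mathbb{R}^{n_0}\to\mathbb{R}$ is computed by a ReLU network of depth at most $\lceil\log_2(n_0+1)\rceil+1$. The logarithmic depth there comes from writing $g$ as a signed combination of maxima of at most $n_0+1$ affine functions and evaluating each maximum by a balanced binary tree of pairwise maxima via $\max(s,t)=\mathrm{ReLU}(s-t)+t$.

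For the vector output, note that each coordinate $f_j$ of a piecewise linear $f:\mathbb{R}^{d_{in}}\to\mathbb{R}^{d_{out}}$ is itself a scalar FPLF, hence representable by a ReLU network of the depth above. Placing the $d_{out}$ coordinate networks in parallel -- shared input, block-diagonal weight matrices at each layer, exactly as in the parallel construction used to prove the additivity bound $R_\infty(f+g)\le R_\infty(f)+R_\infty(g)$ in the excerpt -- yields a single ReLU network of the \emph{same} depth whose output is $(f_1,\dots,f_{d_{out}})=f$. Parallelization costs width but no depth.

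The substantive step is converting the ReLU network into a $\sigma_a$ network layer by layer without adding layers. The key identity, valid because $a\in(-1,1)$ forces $1-a^2>0$, is
\[
\mathrm{ReLU}(t)=\frac{1}{1-a^2}\,\sigma_a(t)+\frac{a}{1-a^2}\,\sigma_a(-t).
\]
Thus each neuron computing $\mathrm{ReLU}(t)$ is replaced by two neurons computing $\sigma_a(t)$ and $\sigma_a(-t)$ (the second simply negates the incoming weights and bias), and the coefficients $\tfrac{1}{1-a^2}$ and $\tfrac{a}{1-a^2}$ are absorbed into the outgoing weight matrix of the following layer. Since the network function is the \emph{pre-activation} of the last layer, the recombination attached to the final nonlinear layer folds directly into the output weights $W_L$. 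This produces a $\sigma_a$ network representing the same $f$ with the same number of layers and at most double the width.

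Finally, to obtain the statement for every $L\ge\lceil\log_2(n_0+1)\rceil+1$ rather than only the minimal depth, I would pad with identity layers: the same trick gives $t=\tfrac{1}{1+a}\sigma_a(t)-\tfrac{1}{1+a}\sigma_a(-t)$ (here $1+a>0$), so an identity map on any number of coordinates is realizable by a single $\sigma_a$ layer, and inserting $L-(\lceil\log_2(n_0+1)\rceil+1)$ such layers raises the depth to exactly $L$. I expect the only delicate point to be the bookkeeping in the nonlinearity conversion: one must verify that the ReLU-to-$\sigma_a$ substitution stays strictly within the existing layers -- introducing no new affine map -- and that the deferred linear recombinations compose correctly through successive (and padded) layers, including into the output pre-activation. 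Everything else is routine.
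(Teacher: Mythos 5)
Your proof is correct and follows essentially the same route as the paper: invoke Theorem 2.1 of \cite{arora_2018_relu_piecewise_lin} for ReLU networks and then convert each ReLU neuron into two $\sigma_a$ neurons (doubling the width, not the depth), absorbing the recombination coefficients into the following layer's weights. Your identity $\mathrm{ReLU}(t)=\frac{1}{1-a^{2}}\sigma_a(t)+\frac{a}{1-a^{2}}\sigma_a(-t)$ is the correct (sign-fixed) version of the one stated in the paper's proof, and your explicit treatment of vector-valued outputs via parallelization and of depth padding via $\sigma_a$-identity layers fills in details the paper leaves implicit.
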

\begin{proof}
This result was proven in \cite{arora_2018_relu_piecewise_lin} for
ReLU networks, we therefore simply need to show that given a ReLU
network with widths $\mathbf{n}$ and parameters $\mathbf{W}$ such
$f_{\mathbf{W}}=f$, there are widths $\mathbf{n}'$ and parameters
$\mathbf{W}'$ such that $f_{\mathbf{W}'}=f$ for a network with a
nonlinearity $\sigma_{a}$.

Notice that for any $a\in(-1,1)$ $\frac{\sigma(x)-a\sigma(-x)}{1-a^{2}}=\max\{0,x\}$.
By doubling the number of neurons in each hidden layer, i.e. $n'_{\ell}=2n_{\ell}$,
we can represent the same output function $f$ with the nonlinearity
$\sigma_{a}$.
\end{proof}

\subsection{Weak Balancedness Property}

In the analysis of linear networks a widely used tool is the notion
of balancedness, which is an invariant of linear networks during training.
Furthermore any at any local minimum of the $L_{2}$-regularized loss
the weights of the network must be balanced. While no direct equivalent
of this notion exists for nonlinear DNNs, for homogeneous nonlinearities
a weaker notion exists, which we describe now.
\begin{prop}
\label{prop:balancedness}Let $\mathbf{W}$ be a local minimum of
the $L_{2}$-regularized loss $\mathcal{L}_{\lambda}(\mathbf{W})=C(f_{\mathbf{W}})+\lambda\left\Vert \mathbf{W}\right\Vert ^{2}$
for some $\lambda>0$. Then\textbf{ }$\mathbf{W}$ satisfies
\[
\left\Vert W_{\ell}\right\Vert _{F}^{2}+\left\Vert b_{\ell}\right\Vert ^{2}=\left\Vert W_{\ell+1}\right\Vert _{F}^{2}.
\]
\end{prop}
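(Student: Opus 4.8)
The plan is to exploit the positive homogeneity of $\sigma_a$ to construct a one-parameter rescaling of the weights at two consecutive layers which leaves the network function (and hence the data-fitting term $C(f_\mathbf{W})$) unchanged, and then to apply the first-order optimality condition at the local minimum to the regularization term alone. This is the standard mechanism behind balancedness results for homogeneous networks.

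First I would fix a layer $\ell\in\{1,\dots,L-1\}$ and, for $c>0$, define a perturbed parameter vector $\mathbf{W}(c)$ obtained from $\mathbf{W}$ by the replacements $W_\ell\mapsto cW_\ell$, $b_\ell\mapsto cb_\ell$, and $W_{\ell+1}\mapsto c^{-1}W_{\ell+1}$, leaving every other weight matrix and bias untouched. The key observation is that $f_{\mathbf{W}(c)}=f_\mathbf{W}$: the pre-activation at layer $\ell$ becomes $c\tilde\alpha_\ell(x)$, so by homogeneity $\sigma_a(c\tilde\alpha_\ell(x))=c\,\sigma_a(\tilde\alpha_\ell(x))=c\,\alpha_\ell(x)$ (here we use $c>0$, since the identity $\sigma_a(cx)=c\sigma_a(x)$ holds only for nonnegative scalars), and this factor $c$ is exactly cancelled by the $c^{-1}$ in the next pre-activation, $\tilde\alpha_{\ell+1}(x)=(c^{-1}W_{\ell+1})(c\,\alpha_\ell(x))+b_{\ell+1}=\tilde\alpha_{\ell+1}(x)$. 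Every subsequent layer is therefore unchanged, so $f_{\mathbf{W}(c)}=f_\mathbf{W}$ and consequently $C(f_{\mathbf{W}(c)})=C(f_\mathbf{W})$ for all $c>0$.

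Next I would track the $c$-dependence of the squared norm. Only the three rescaled blocks contribute, giving
\[
\|\mathbf{W}(c)\|^2=c^2\bigl(\|W_\ell\|_F^2+\|b_\ell\|^2\bigr)+c^{-2}\|W_{\ell+1}\|_F^2+K,
\]
where $K$ collects all $c$-independent terms. Hence $c\mapsto\mathcal{L}_\lambda(\mathbf{W}(c))=C(f_\mathbf{W})+\lambda\|\mathbf{W}(c)\|^2$ is a smooth function on $(0,\infty)$, and since $\mathbf{W}=\mathbf{W}(1)$ is a local minimum of $\mathcal{L}_\lambda$, the value $c=1$ must be a local minimum of this one-dimensional restriction. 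Differentiating and setting the derivative to zero at $c=1$ yields
\[
\left.\frac{d}{dc}\right|_{c=1}\mathcal{L}_\lambda(\mathbf{W}(c))=2\lambda\bigl(\|W_\ell\|_F^2+\|b_\ell\|^2-\|W_{\ell+1}\|_F^2\bigr)=0,
\]
and dividing through by $2\lambda>0$ gives the claimed identity.

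I do not anticipate a genuine obstacle here; the only point requiring care is that the rescaling is restricted to $c>0$, because the homogeneity identity fails for negative scalars under $\sigma_a$. This restriction is harmless, since $c=1$ lies in the interior of $(0,\infty)$, so stationarity of the smooth restriction at $c=1$ is a legitimate first-order consequence of $\mathbf{W}$ being a local minimum.
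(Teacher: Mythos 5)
Your proof is correct and uses essentially the same mechanism as the paper: a function-preserving rescaling of the weights enabled by the positive homogeneity of $\sigma_{a}$, combined with first-order stationarity of the regularization term at the local minimum. The only difference is cosmetic --- you restrict to a one-parameter rescaling of two consecutive layers (with $b_{\ell}\mapsto c\,b_{\ell}$ and $b_{\ell+1}$ untouched, which is exactly the specialization $\alpha_{\ell}=c$, $\alpha_{\ell+1}=c^{-1}$, all other $\alpha$'s equal to $1$, of the paper's $L$-parameter constrained family), which yields the consecutive-layer identity directly and lets you skip the paper's normal-space argument and the telescoping of the accumulated bias terms.
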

\begin{proof}
Given a local minimum $\mathbf{W}$ of the $L_{2}$-regularized loss,
one can change the the weights of the network to new weights $\mathbf{W}(\alpha_{1},\dots,\alpha_{L})$
with the same outputs for any set of scalars $\alpha_{1},\dots,\alpha_{L}$
such that $\alpha_{1}\cdots\alpha_{L}=1$ :
\begin{align*}
W_{\ell}(\alpha_{1},\dots,\alpha_{L}) & \mapsto\alpha_{\ell}W_{\ell}\\
b_{\ell}(\alpha_{1},\dots,\alpha_{L}) & \mapsto\alpha_{1}\cdots\alpha_{\ell}b_{\ell}
\end{align*}
Since $\mathbf{W}$ is a local minimum, the derivatives of the norm
of the parameters $\left\Vert \mathbf{W}(\alpha_{1},\dots,\alpha_{L})\right\Vert ^{2}$
w.r.t. to $\alpha_{1},\dots,\alpha_{L}$ at $\alpha_{1}=\dots=\alpha_{L}=1$
have to be orthogonal to the constraint space $\alpha_{1}\cdots\alpha_{L}=1$.
At $\alpha_{1}=\dots=\alpha_{L}=1$the normal space (orthogonal to
the tangent space) is the space of constant vectors, since it is spanned
by the gradient of the product $\alpha_{1}\cdots\alpha_{L}$. This
implies that the values
\begin{align*}
\partial_{\alpha_{\ell}}\left(\left\Vert \mathbf{W}(\alpha_{1},\dots,\alpha_{L})\right\Vert ^{2}\right)(1,\dots,1) & =\partial_{\alpha_{\ell}}\left(\sum_{\ell=1}^{L}\left\Vert \alpha_{\ell}W_{\ell}\right\Vert _{F}^{2}+\left\Vert \alpha_{1}\cdots\alpha_{\ell}b_{\ell}\right\Vert ^{2}\right)(1,\dots,1)\\
 & =2\left\Vert W_{\ell}\right\Vert _{F}^{2}+2\left\Vert b_{\ell}\right\Vert ^{2}+\dots+2\left\Vert b_{L}\right\Vert ^{2}
\end{align*}
must all be equal. This equality for two consecutive layers implies
that
\[
\left\Vert W_{\ell}\right\Vert _{F}^{2}+\left\Vert b_{\ell}\right\Vert ^{2}+\dots+\left\Vert b_{L}\right\Vert ^{2}=\left\Vert W_{\ell+1}\right\Vert _{F}^{2}+\left\Vert b_{\ell+1}\right\Vert ^{2}+\dots+\left\Vert b_{L}\right\Vert ^{2}
\]
and therefore that at any local minimum
\[
\left\Vert W_{\ell}\right\Vert ^{2}+\left\Vert b_{\ell}\right\Vert ^{2}=\left\Vert W_{\ell+1}\right\Vert ^{2}.
\]
\end{proof}
\begin{rem}
A yet stronger notion of balancedness can be obtained by observing
that this rescaling of the weights can be done neuron by neuron instead
of layer by layer, but we do not need this notion for our proofs.
\end{rem}

\section{Experimental Setup}

All our experiments were done on fully-connected ReLU networks with
biases. We used diagonal networks, i.e. $n_{1}=n_{2}=\dots=n_{L-1}=w$
for some width $w$. We trained the network using Adam with weight
decay, in some cases we used traditional gradient descent at the end
of training to make sure to converge as close as possible to a local
minimum. When the ridge $\lambda$ is small, we often observe two
phases in learning: in the first phase, the cost $C(f_{\mathbf{W}})$
goes down very fast as the network fits the data, in the second part
the cost remains close to zero and the parameter of the network slowly
goes down. This second part is very slow and we did in most case stop
before the parameter norm had completely stabilized. Note that even
with this `early stopping' we observed results consistent with our
theory.

For Figure 1, the inputs $x\in\mathbb{R}^{50}$ and outputs $y\in\mathbb{R}^{50}$
were generated from a $15$-dimensional latent representation $z\in\mathbb{R}^{15}$
sampled with i.i.d. $\mathcal{N}(0,1)$ entries. The inputs $x$ then
equal $x=g(x)$ for a function $g:\mathbb{R}^{15}\to\mathbb{R}^{50}$
and the outputs equal $y=h(x_{1},\dots,x_{5})$ for a function $h:\mathbb{R}^{5}\to\mathbb{R}^{50}$
which depends only on the first 5 coordinates of the latent space.
Both functions $g,h$ are represented by random shallow ReLU networks
with inner width $n_{1}=100$.

For Figure 2 the data points from the 4 classes were using the same
inverted S-shape distribution and translated on the $x$ axis according
to their class.

For Figure 3, we used the MNIST dataset on the left and on the right
data of the form $g(z)$ for $z$ random 1D Gaussian scalars and a
function $g:\mathbb{R}^{1}\to\mathbb{R}^{2}$ represented y a random
ReLU network.

\end{document}